\documentclass{article}

\usepackage{forloop}
\usepackage{microtype}
\usepackage{graphicx}
\usepackage{subcaption}
\usepackage{booktabs} 
\usepackage{makecell,rotating}
\usepackage{pifont} 

\usepackage{rotating}
\usepackage[table,xcdraw]{xcolor}
\usepackage{graphicx}
\usepackage{multirow}
\usepackage{amsmath}
\usepackage{amssymb}  
\usepackage{rotating}  
\usepackage{makecell}  
\usepackage{booktabs} 
\usepackage{bm}
\usepackage{float}
\usepackage{amsthm} 
\usepackage{xstring}
\usepackage{enumitem}

\definecolor{RMTcol}{HTML}{1F77B4}   
\definecolor{CGMTcol}{HTML}{D95F02}  
\definecolor{AMPcol}{HTML}{1B9E77}   
\definecolor{lindcol}{HTML}{9467BD}   
\newcommand{\RMT}[1]{\textcolor{black}{#1}}
\newcommand{\CGMT}[1]{\textcolor{black}{#1}}
\newcommand{\AMP}[1]{\textcolor{black}{#1}}
\newcommand{\prox}{\mathrm{prox}}
\DeclareMathOperator*{\offdiag}{OffDiag}
\DeclareMathOperator*{\diag}{Diag}
\newcommand{\E}{\mathbb{E}}

\usepackage{hyperref}

\usepackage[accepted]{icml2026}

\usepackage{amsmath}
\usepackage{amssymb}
\usepackage{mathtools}
\usepackage{amsthm}

\usepackage[capitalize,noabbrev]{cleveref}

\theoremstyle{plain}
\newtheorem{theorem}{Theorem}[section]
\newtheorem{proposition}[theorem]{Proposition}
\newtheorem{lemma}[theorem]{Lemma}
\newtheorem{claim}[theorem]{Claim}
\newtheorem{corollary}[theorem]{Corollary}
\newtheorem{assumption}{Assumption}
\theoremstyle{remark}
\newtheorem{remark}[theorem]{Remark}

\newcommand{\tr}{\mathrm{tr}}

\DeclareMathOperator*{\argmin}{arg\,min}

\usepackage[textsize=tiny]{todonotes}

\icmltitlerunning{Characterization of Gaussian Universality Breakdown in High-Dimensional Empirical Risk Minimization}

\usepackage{xparse}

\defcitealias{MalloryHuangAustern2025}{MAL25}
\defcitealias{adomaityte2024high}{ADO24}
\defcitealias{mai2025breakdown}{MAI25}
\defcitealias{pesce2023gaussianuniversality}{PES23}
\defcitealias{montanari2022erm}{MON22}
\defcitealias{dandi2023universality}{DAN23}
\defcitealias{hu2022universality}{HU22}
\defcitealias{LoureiroSGPKZ21}{LOU21b}
\defcitealias{elkaroui2018impact}{ELK18}
\defcitealias{couillet2018classification}{COU18}
\defcitealias{mai2020highdim}{MAI20}
\defcitealias{donoho2016robust}{DON16}
\defcitealias{elkaroui2013robust}{ELK13}
\defcitealias{bean2013optimal}{BEA13}
\defcitealias{thrampoulidis2015regls}{THR15}
\defcitealias{thrampoulidis2018precise}{THR18}
\defcitealias{Stojnic2013Lasso}{STO13}
\defcitealias{panahi2017universal}{PAN17}
\defcitealias{han2022universality}{HAN23}
\defcitealias{korada2011applications}{KOR11}
\defcitealias{dobriban2018highdim}{DOB18}
\defcitealias{AkhtiamovGhaneVarmaHassibiBosch2024NovelGMT}{AKH24}
\defcitealias{pmlr-v258-bosch25a}{BOS25}
\defcitealias{loureiro2021learning}{LOU21a}

\begin{document}

\twocolumn[
  \icmltitle{Characterization of Gaussian Universality Breakdown \\ in High-Dimensional Empirical Risk Minimization}



  \icmlsetsymbol{equal}{*}

  \begin{icmlauthorlist}
    \icmlauthor{Mohamed Chiheb Yaakoubi}{CUHK-CZ}
    \icmlauthor{Cosme Louart}{CUHK-CZ}
    \icmlauthor{Malik Tiomoko}{HUAWEI}
    \icmlauthor{Zhenyu Liao}{HUST}
  \end{icmlauthorlist}

  \icmlaffiliation{CUHK-CZ}{School of Data Science, The Chinese University of Hong Kong, Shenzhen, China}
  \icmlaffiliation{HUAWEI}{Huawei Noah's Ark Lab, Huawei Technologies, Paris, France}
  \icmlaffiliation{HUST}{School of Electronic Information and Communications, Huazhong University of Science \& Technology, China}

  \icmlcorrespondingauthor{Cosme Louart}{cosmelouart@cuhk.edu.cn}

  \icmlkeywords{Empirical Risk Minimization, High-Dimensional Statistics, Convex Gaussian Min–Max Theorem, Gaussian Universality, Concentration of Measure}

  \vskip 0.3in
]



\printAffiliationsAndNotice{}  

\begin{abstract}
We study high-dimensional convex empirical risk minimization (ERM) under general non-Gaussian data designs. By heuristically extending the Convex Gaussian Min–Max Theorem (CGMT) to non-Gaussian settings, we derive an asymptotic min–max characterization of key statistics, enabling approximation of the mean $\mu_{\hat{\theta}}$ and covariance $C_{\hat{\theta}}$ of the ERM estimator $\hat{\theta}$.  
Specifically, under a concentration assumption on the data matrix and standard regularity conditions on the loss and regularizer, we show that for a test covariate $x$ independent of the training data, the projection $\hat{\theta}^\top x$ approximately follows the convolution of the (generally non-Gaussian) distribution of $\mu_{\hat{\theta}}^\top x$ with an independent centered Gaussian variable of variance $\mathrm{tr}\!\big(C_{\hat{\theta}}\,\mathbb{E}[xx^\top]\big)$. This result clarifies the scope and limits of Gaussian universality for ERMs. 
Additionally, we prove that any $\mathcal{C}^2$ regularizer is asymptotically equivalent to a quadratic form determined solely by its Hessian at zero and gradient at $\mu_{\hat{\theta}}$.  
Numerical simulations across diverse losses and models are provided to validate our theoretical predictions and qualitative insights.
\end{abstract}

\section{Introduction}\label{sec:introduction}

\begin{figure}[t]
    \centering
    \includegraphics[width=0.48\textwidth]{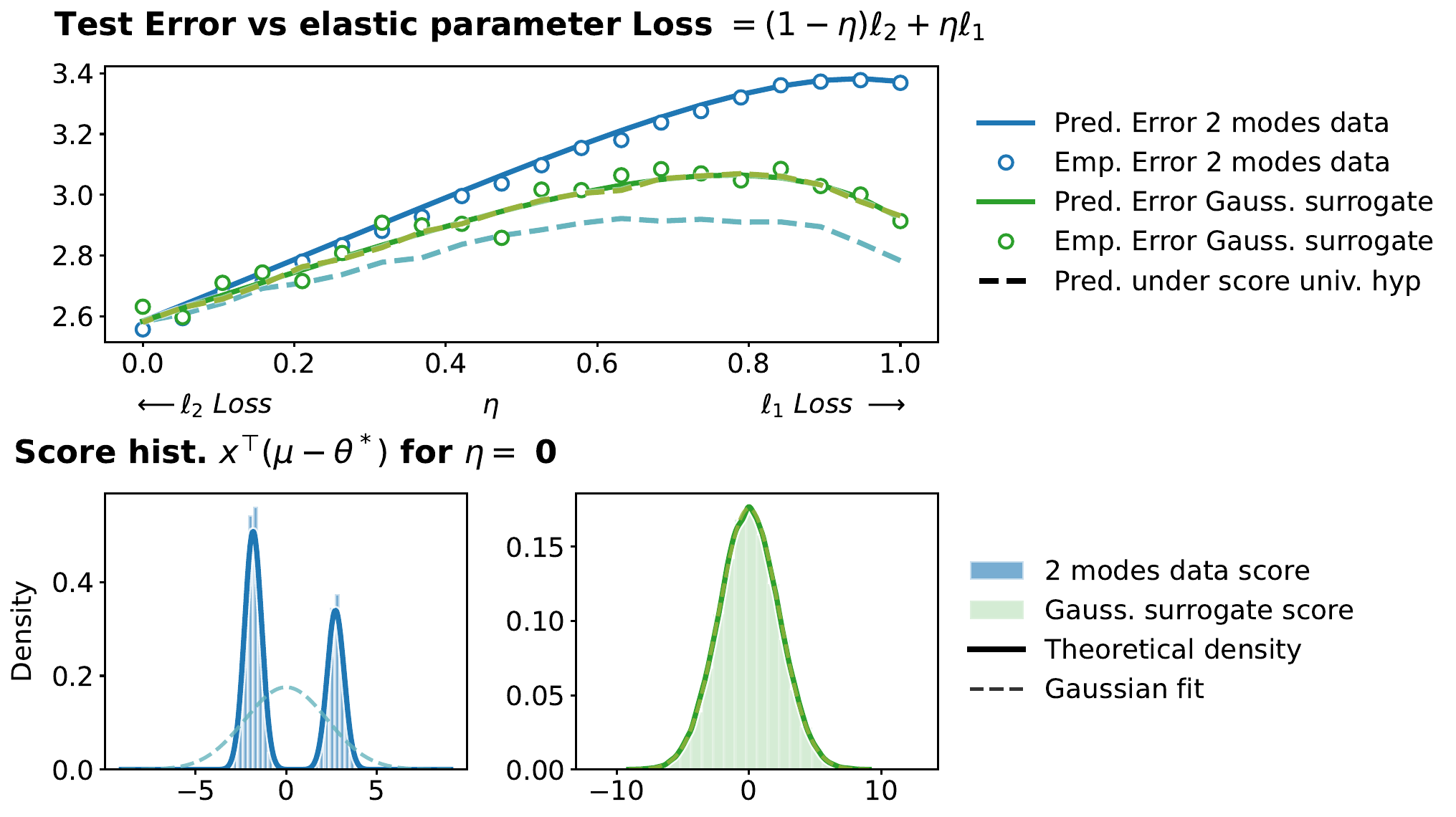}
\caption{%
\textbf{Breakdown of performance and score universality through Elastic loss.}
\textit{Top:} Empirical generalization error and theoretical prediction of a regression task for a teacher--student model $y = \theta^{*\top} x + \varepsilon$, where $x$ follows a bimodal distribution ($p=30$, $n=50$). Performance universality is tested through comparison with Gaussian surrogate data that share the same mean and covariance as $x$.
\textit{Bottom:} Score histograms for $\mathcal{L}_y(\hat\theta^\top x) = \|\hat\theta^\top x - y\|^2$ at $\eta = 0$, along with their theoretical prediction and the Gaussian fit used to test the score universality assumption in the top panel.
Performance and score universality are only satisfied when $\eta = 0$, i.e.\ for the $\ell_2$ loss.
Note that the scores do not have to be Gaussian for performance universality to hold, as is the case for the $\ell_2$ loss.%
}
    \label{fig:non_gaussian_scores}
\end{figure}

Given a training set of $n$ samples $(x_1,y_1),\ldots,(x_n,y_n)\in \mathbb{R}^p\times\mathcal{Y}$,
we consider convex regularized empirical risk minimization (ERM) problems of the form
\begin{equation}
\label{eq:erm}
\hat\theta
=
\argmin_{\theta\in\mathbb{R}^p}
\left\{
\frac{1}{n}\sum_{i=1}^n \mathcal{L}_{y_i}(x_i^\top\theta) + \rho(\theta)
\right\},
\end{equation}
where $\rho \colon \mathbb{R}^p \to \mathbb{R}$ is a convex regularizer and, for each $y\in\mathcal{Y}$, $\mathcal{L}_y \colon \mathbb{R} \to \mathbb{R}$ is a convex loss function.
We focus on the high-dimensional regime in which both the sample size $n$ and the dimension $p$ are large, with typically $p = \Theta(n)$.
As the minimizer of a random objective in \eqref{eq:erm}, the solution $\hat\theta$ is itself random, through its dependence on the training set. 

A central object for prediction is the \emph{test score} $x^\top \hat\theta$, evaluated at a fresh covariate vector $x$ independent of the training data.
Many performance metrics for regression and classification (e.g., prediction or classification error) can be expressed as functions of (the distribution of) $x^\top \hat \theta$.
The goal of the present paper is to develop a general framework to characterize its asymptotic law, and hence to predict the performance of ERM, in high dimensions, under \emph{general} (possibly \emph{non-Gaussian}) data and convex losses.

Several complementary theoretical toolkits have been developed to yield sharp high-dimensional predictions for ERM under various random designs.
Random matrix theory (RMT) provides a natural framework when $p=\Theta(n)$.
Starting with the seminal work of~\citet{elkaroui2013robust}, RMT yields precise deterministic equivalents for statistics of $\hat\theta$:
 \begin{align}\label{eq:def_mu_C}
    \mu_{\hat\theta}:= \mathbb E[\hat \theta]
    \qquad \text{and} \qquad 
    C_{\hat\theta} := \mathbb E[\hat \theta \hat \theta^ \top]-\mu_{\hat\theta}\mu_{\hat\theta}^ \top.
\end{align}
in specific models, often through implicit fixed-point characterizations~\citep{bean2013optimal,elkaroui2018impact,couillet2018classification,louart2018annap,seddik2020rmtgan,couillet2022book,dobriban2018highdim}.
Approximate message passing (AMP) and its associated state evolution offer another route to exact asymptotic formulas for the risk and distribution of high-dimensional $M$-estimators~\citep{bayati2011dynamics,rangan2011gamp,donoho2016robust,LoureiroSGPKZ21,han2022universality,dandi2023universality,montanari2022erm,pesce2023gaussianuniversality}.
Finally, the convex Gaussian min--max theorem (CGMT) reduces certain high-dimensional convex optimization problems with Gaussian design matrices to lower-dimensional auxiliary optimization (AO) problems, whose optimal values (and in many cases optimizers) closely track those of the original program~\citep{GOR88,thrampoulidis2014gmt,thrampoulidis2015regls,thrampoulidis2018precise,loureiro2021learning,AkhtiamovGhaneVarmaHassibiBosch2024NovelGMT,MalloryHuangAustern2025,pmlr-v258-bosch25a}.

Although developed from different starting points, these approaches ultimately lead to essentially the \emph{same} system of fixed-point equations under Gaussian design.
Moreover, the fixed-point equations arising in RMT and AMP can often be interpreted as reformulations of the KKT conditions of the CGMT min--max/AO problem---a viewpoint that will also play a role in this paper.
What differs across methods is therefore primarily the class of models under which the corresponding arguments can be rigorously justified.


Despite these differences, all three approaches typically encounter a recurring obstacle. 
Either the analysis is restricted to the special case of Gaussian design, or, when extending to non-Gaussian settings, one ultimately relies on some form of \emph{Gaussian universality} argument. 
Such arguments seek to ``replace'' the data \emph{or} the test score by a Gaussian proxy with matching first-and~second-order moments.
In this paper, we provide a novel analysis framework that makes this universality step \emph{explicit}, and allows us to precisely characterize when it holds and when it breaks down.
As shown in Figure~\ref{fig:non_gaussian_scores}, where we experiment with a mixture model using bimodal features, Gaussian universality does not hold in all cases. The left panel shows that the score distribution is non-Gaussian, which leads to a mismatch with theoretical predictions. In the right panel, we demonstrate that replacing the data with a Gaussian distribution with matching moments causes a significant divergence between the theoretical and empirical classification error. In Appendix~\ref{app:mnist_nongaussian}, we provide a similar analysis on real-world MNIST data with further explanations.

\subsection{Main Contributions}\label{sub:our_contributions}

Our starting point is an extension of the CGMT to a broad class of non-Gaussian designs to yield a set of powerful consequences that recover and generalize a large fraction of the existing exact-asymptotics literature for smooth losses. 
Our main contributions are as follows:
\begin{enumerate}[leftmargin=*]%
\item \textbf{(Theorem~\ref{the:univ_reg})}
We show, under mild technical conditions, that for a given data model $(x,y)$ and loss $\mathcal{L}_y$, any $\mathcal{C}^2$ regularizer $\rho:\mathbb{R}^p\to\mathbb{R}$ is asymptotically equivalent, for the statistical behavior of the ERM solution~$\hat\theta$ in \eqref{eq:erm},
to the following \emph{explicit quadratic surrogate}:
\begin{align}\label{eq:quadratic_approx}
  \rho_q(\theta)
  \;:=\;
  \nabla \rho(\mu_{\hat\theta})^\top \theta
  +\frac12 (\theta-\mu_{\hat \theta})^\top H_\rho\,\theta,
\end{align}
where $\nabla \rho$ is the gradient of $\rho$, $H_\rho :=\nabla^2\rho(0)$, the Hessian of $\rho$ evaluated at $\theta = 0$, and $\mu_{\hat\theta}$ in \eqref{eq:def_mu_C}.

\item \textbf{(Theorem~\ref{th:scalar-minmax})}
In a spirit similar to CGMT, we characterize, for possibly non-Gaussian data, the asymptotic behavior of the ERM solution $\hat \theta$ in~\eqref{eq:erm}, via the deterministic equivalent objective $\mathcal J$, defined, for any $\mu\in \mathbb R^p$ and $\alpha,\nu,\beta,\kappa>0$ as:
\begin{align}\label{eq:J-tilde}
  &\mathcal{J}(\mu,\alpha,\kappa;\beta,\nu)\\
  &\hspace{0.7cm}=
  \frac{\beta^2\kappa}{2}
  + \mathbb{E}\!\left[e_{\mathcal{L}_y}\big(\mu^\top x + \alpha z;\kappa\big)\right]
  + \rho(\mu) \nonumber\\
  &\hspace{1.5cm}\quad - \frac{\nu\alpha^2}{2}
  - \frac{\beta^2}{2n}\tr\!\Big(C_x(\nu C_x + H_\rho)^{-1}\Big),\nonumber
\end{align}
where $C_x$ is the covariance of $x$, $z\sim\mathcal{N}(0,1)$ is independent of $(x,y)$,
and $e_{\mathcal{L}_y} \colon \mathbb{R} \times \mathbb{R} \to \mathbb{R}$ denotes the Moreau envelope defined, for any $u\in \mathbb R,\kappa>0$ as:
\begin{equation}
  e_{\mathcal{L}_y}(u;\kappa) :=
  \min_{v\in\mathbb{R}}
  \left\{ \frac{1}{2\kappa}(u-v)^2 + \mathcal{L}_y(v) \right\}.
\end{equation}
In particular, define $\mu_\ast \in \mathbb{R}^p, \alpha_\ast \in \mathbb{R}$ solution to the following min--max problem:
\begin{align}\label{eq:def_alpha_mu_star}
  (\mu_\ast, \alpha_\ast)
  \! := \!
  \argmin_{\mu\in\mathbb{R}^p,\ \alpha>0} 
  \min_{\kappa>0} \max_{\beta,\nu>0} \mathcal{J}(\mu,\alpha,\kappa,\beta,\nu),
\end{align}
we show that $(\mu_\ast,\alpha_\ast)$ provides sharp approximations to $(\mu_{\hat\theta},\sqrt{\tr(C_{\hat\theta}C_x)})$, for $C_{\hat\theta}$ the covariance of $\hat \theta$ in \eqref{eq:def_mu_C}, and are sufficient to characterize the ERM performance.

\item \textbf{(Theorem~\ref{the:approximation_xT_theta})}
We show that asymptotically the test score approximates
\[
  x^\top \hat\theta \;\approx\; x^\top \mu_\ast + \alpha_\ast z,
\]
with $z\sim\mathcal N(0,1)$ independent of $x$.
Crucially, this implies that the test score has an asymptotically Gaussian distribution \emph{if and only if} the one-dimensional projection $x^\top\mu_\ast$ is Gaussian.

\item \textbf{(Theorem~\ref{the:mu_in_F})}
We show that when the dependence of $x$ on $y$ is confined to a subspace $F\subset\mathbb{R}^p$,
then $\mu_{\hat{\theta}}\in F + \mathbb R( \nabla \rho(\mu_\ast) - H_{\rho}\mu_\ast)$.
This result applies both to regression and classification settings.
\end{enumerate}

\subsection{Related Work on Gaussian Universality}\label{sub:related_work}
\begin{table*}[t]
\caption{\textbf{Comparison of related works and our method.}  \textbf{Color code:} \textbf{\textcolor[HTML]{D95F02}{Light Red}} = RMT, \textbf{\textcolor[HTML]{1B9E77}{Light Green}} = CGMT, \textbf{\textcolor[HTML]{1F77B4}{Light Blue}} = AMP, \textbf{\textcolor{lindcol}{Light Purple}} = Lindeberg-type arguments (and variants). \textbf{Notation:} \checkmark\ indicates that the corresponding work covers the feature/setting; \textbf{R} denotes a proved statement; \textbf{A} denotes an assumption. Parenthesized \textbf{(R)} indicates that the statement holds only under a subset of the assumptions listed in that column. Note that \textbf{R} entries may rely on different baseline assumptions across columns and are therefore not always directly comparable. Papers are grouped within a single column when they share a closely related framework and essentially the same conclusions, although some modeling assumptions may differ. 
}
\label{tbl:litt_review}
\centering
\resizebox{\textwidth}{!}{%
\begin{tabular}{@{\hspace{1pt}} c @{\hspace{5pt}}  | @{\hspace{1pt}} l *{17}{c}}
 \multicolumn{1}{c}{} && \begin{sideways} \colorbox{lindcol!30}{\parbox{0.9cm}{\scriptsize  \AMP{\citepalias{korada2011applications}}}} \end{sideways}
 & \begin{sideways} \colorbox{RMTcol!30}{\parbox{0.9cm}{\scriptsize\RMT{\citepalias{elkaroui2013robust} \citepalias{bean2013optimal}}}}\end{sideways}
 & \begin{sideways} \colorbox{CGMTcol!30}{\parbox{0.9cm}{\scriptsize\CGMT{ \citepalias{Stojnic2013Lasso} \citepalias{thrampoulidis2018precise}}}} \end{sideways}
 & \begin{sideways} \colorbox{AMPcol!30}{\parbox{0.9cm}{\scriptsize\AMP{\citepalias{donoho2016robust}}}} \end{sideways}
 & \begin{sideways} \colorbox{lindcol!30}{\parbox{0.9cm}{\scriptsize\citepalias{panahi2017universal} \citepalias{han2022universality}}} \end{sideways}
 & \begin{sideways} \colorbox{RMTcol!30}{\parbox{0.9cm}{\scriptsize\RMT{\citepalias{dobriban2018highdim}}}} \end{sideways}
 & \begin{sideways} \colorbox{RMTcol!30}{\parbox{0.9cm}{\scriptsize\RMT{\citepalias{elkaroui2018impact}}}} \end{sideways}
 & \begin{sideways} \colorbox{RMTcol!30}{\parbox{0.9cm}{\scriptsize\RMT{\citepalias{couillet2018classification} \citepalias{mai2020highdim}}}} \end{sideways}
 & \begin{sideways} \colorbox{AMPcol!30}{\parbox{0.9cm}{\scriptsize\AMP{\citepalias{loureiro2021learning} \citepalias{LoureiroSGPKZ21}}}} \end{sideways}
 & \begin{sideways} \colorbox{lindcol!30}{\parbox{0.9cm}{\scriptsize\AMP{\citepalias{montanari2022erm} \citepalias{dandi2023universality}}}} \end{sideways}
 & \begin{sideways} \colorbox{RMTcol!30}{\parbox{0.9cm}{\scriptsize\RMT{\citepalias{hu2022universality}}}} \end{sideways}
 & \begin{sideways} \colorbox{AMPcol!30}{\parbox{0.9cm}{\scriptsize\RMT{\citepalias{pesce2023gaussianuniversality}}}} \end{sideways}
 & \begin{sideways} \colorbox{AMPcol!30}{\parbox{0.9cm}{\scriptsize\AMP{\citepalias{adomaityte2024high}}}} \end{sideways}
 & \begin{sideways} \colorbox{RMTcol!30}{\parbox{0.9cm}{\scriptsize\AMP{\citepalias{mai2025breakdown}}}} \end{sideways}
 & \begin{sideways} \colorbox{CGMTcol!30}{\parbox{0.9cm}{\scriptsize\CGMT{\citepalias{AkhtiamovGhaneVarmaHassibiBosch2024NovelGMT} \citepalias{pmlr-v258-bosch25a}}}} \end{sideways}
 & \begin{sideways} \colorbox{CGMTcol!30}{\parbox{0.9cm}{\scriptsize\CGMT{\citepalias{MalloryHuangAustern2025}}}} \end{sideways}
 & \begin{sideways} \colorbox{CGMTcol!30}{\parbox{0.9cm}{\scriptsize\textbf{Our Method}}} \end{sideways} \\
\midrule[\heavyrulewidth]
\multirow{20}{*}{\rotatebox{90}{\textbf{Assumptions}}} 
&\multicolumn{18}{l}{\rule{0pt}{2.2ex}\cellcolor{gray!5}\textbf{\textcolor{black}{Statistics of $x$}}} \\
&\hspace{0.4cm} $\mu_x \neq 0$ (non-zero mean) &  &  &  &  &  & \checkmark& & \checkmark & \checkmark & \checkmark &  & \checkmark & \checkmark & \checkmark & \checkmark &  & \ding{51} \\
&\hspace{0.4cm}  $C_x \neq \sigma^2 I_p$ (correlated features) &  &  &  &  &  & \checkmark & & \checkmark & \checkmark & \checkmark &  & \checkmark &  & \checkmark & \checkmark & \checkmark & \ding{51} \\
&\hspace{0.4cm}  Multiple statistics &  &  &  &  &  &  &  & \checkmark & \checkmark & \checkmark &  & \checkmark &  & \checkmark &  \checkmark & & \ding{51} \\

&\multicolumn{18}{l}{\rule{0pt}{2.2ex}\cellcolor{gray!5}\textbf{\textcolor{black}{Law type, default: $x \sim \mathcal{N}(\mu_x, C_x)$}}} \\
&\hspace{0.4cm} iid non-Gaussian entries & \checkmark &  &  &  & \checkmark & \checkmark&\checkmark &  &  & \checkmark & \checkmark &  &  & & & \checkmark & \ding{51} \\
&\hspace{0.4cm} Elliptical / linear factor &  &  &  &  &  & &\checkmark &  &  & \checkmark & \checkmark &  & \checkmark & \checkmark & &  & \ding{51} \\
&\hspace{0.4cm} Random features & &  &  &  &  &  &  &  &  & \checkmark & \checkmark &  &  & & & \checkmark & \ding{51} \\
&\hspace{0.4cm} Heavy-tailed component & & & \checkmark &  & \checkmark &  &  &  &  &  &  &  & \checkmark & & &  &  \\

&\multicolumn{18}{l}{\rule{0pt}{2.2ex}\cellcolor{gray!5}\textbf{\textcolor{black}{Dependence $x \leftrightarrow y$}}} \\
&\hspace{0.4cm} $y = \theta^{\star\top} x + \varepsilon$ & \checkmark & \checkmark & \checkmark & \checkmark & \checkmark & \checkmark & \checkmark &  &  & \checkmark & \checkmark & \checkmark & \checkmark &  & \checkmark & \checkmark & \ding{51} \\
&\hspace{0.4cm} $y$ is class label of $x$ & & &  &  &  & \checkmark &  & \checkmark & \checkmark & \checkmark &  &  &  & \checkmark &  \checkmark & & \ding{51} \\
&\hspace{0.4cm} $y = f(\theta^{\star\top} x)$ & &  &  &  &  &  &  &  &  & \checkmark & \checkmark & \checkmark & & &  & \checkmark & \ding{51} \\

&\multicolumn{18}{l}{\rule{0pt}{2.2ex}\cellcolor{gray!5}\textbf{\textcolor{black}{Loss (default: $\ell_2$)}}} \\
&\hspace{0.4cm} General convex on score &  & \checkmark & \checkmark & \checkmark &  &   & \checkmark & \checkmark & \checkmark & \checkmark & \checkmark & \checkmark & \checkmark & \checkmark &  \checkmark & & \ding{51} \\

&\multicolumn{18}{l}{\rule{0pt}{2.2ex}\cellcolor{gray!5}\textbf{\textcolor{black}{Regularization, default: None}}} \\
&\hspace{0.4cm} Ridge ($\ell_2$) &  &   & \checkmark &  & \checkmark &\checkmark &\checkmark & \checkmark & \checkmark & \checkmark & \checkmark & \checkmark & \checkmark & \checkmark & \checkmark &\checkmark & \ding{51} \\
&\hspace{0.4cm} Lasso ($\ell_1$) & \checkmark &  & \checkmark &  & \checkmark  &  &  &  & \checkmark &  &  & \checkmark &  &  & \checkmark &  &  \\
&\hspace{0.4cm} General convex &   &  & \checkmark & & \checkmark & &    &  & \checkmark & \checkmark & $\checkmark$ & \checkmark & \checkmark &  & \checkmark & & \ding{51} \\
\midrule[\heavyrulewidth]
\multirow{5}{*}{\rotatebox{90}{\textbf{Results}}} 
&\hspace{0.4cm} \textbf{Minimizer} universality   &  & {R} &  & {R} & &    & {R} & &  &  &  &  &  &  & & & \textbf{A} \\
&\hspace{0.4cm} \textbf{Score} universality &  & {R}&  &  &  &  &  &  &  & A &  &  &  & & & A & \textbf{(R)} \\
&\hspace{0.4cm} \textbf{Performance} universality & {R}  & {R} &  & {R}& {R} &  {R}& {R} & {R} &  & {R} & {(R)} & {(R)} & {(R)} & {(R)} & & {R} & \textbf{(R)} \\
&\hspace{0.4cm} Non-universality \textbf{breakdown} &  &  &  &  &  &  &  &  &  &  & {R} & {R} &  & {R} & & & {R} \\
&\hspace{0.4cm} Deterministic perf. \textbf{prediction}   &  & {R} & {R} & {R}&  & {R} & {R} & {R} & {R} &  & {R} & {R} & {R} & {R} & R & & \textbf{R} \\
\bottomrule
\end{tabular}%
}
\end{table*}

Since the literature uses the term ``universality'' in several inequivalent ways, we explicitly distinguish three distinct types of \emph{Gaussian universality}:
\begin{enumerate}[leftmargin=*]
    \item \underline{\emph{performance universality}} holds if key performance metrics such as prediction risk or classification error remain asymptotically the same as if the design vectors $(x_1,\ldots,x_n)$ were replaced by Gaussian covariates with matching mean and covariance.
    \item \underline{\emph{score universality}} holds if, for an independent test vector $x$, the scalar score $x^\top \hat\theta$ converges in distribution to a Gaussian random variable; and
    \item \underline{\emph{minimizer universality}} holds if the ERM solution $\hat\theta$ is asymptotically Gaussian. 
\end{enumerate}
Over the past decade, these questions have been investigated across a variety of settings; we summarize the resulting landscape in \Cref{tbl:litt_review}. 
For completeness, we additionally include two supplementary categories, distinguishing 
(i) works that exhibits explicit breakdowns of universality, and 
(ii) works that derive performance predictions based on first-order statistics. 
Indeed, a number of previous efforts~\citep{korada2011applications,panahi2017universal,han2022universality,montanari2022erm,dandi2023universality,MalloryHuangAustern2025} establish performance universality by leveraging Gaussian design surrogates to characterize asymptotic behavior under non-Gaussian data. 
These results typically aim to establish equivalence Gaussian and non-Gaussian models, rather than to provide explicit deterministic formulas for performance in the high-dimensional regime $p=\Theta(n)$—which is the specific focus of the bottom row of \Cref{tbl:litt_review}. 

Among the three notions, \underline{\emph{performance universality}} has the most direct practical implications and has been studied extensively, starting with the use of Lindeberg method~\citep{Chatterjee2006Lindeberg} in~\citet{korada2011applications,panahi2017universal,han2022universality,pmlr-v145-goldt22a}. 
Three primary categories of assumptions/settings are known to yield performance universality:
\begin{enumerate}[leftmargin=*]
    \item \textbf{Gaussian covariates}: when $x$ is Gaussian, precise characterizations are available \citep{elkaroui2013robust,bean2013optimal,elkaroui2018impact,couillet2018classification,mai2020highdim}. Note, however, that Gaussian \emph{mixtures} may invalidate these conclusions, see~\citet{pesce2023gaussianuniversality}.
    \item \textbf{Squared $\ell_2$ loss}: in the special case of squared loss, many \emph{performance universality} results have been obtained~\citep{korada2011applications,panahi2017universal,han2022universality}.
    In Section~\ref{sec:perf_univ_with_ell_2_loss}, we demonstrate that for squared loss, performance universality can hold \emph{even when \underline{score universality} fails}.
    \item \textbf{Asymptotic score Gaussianity}: Recent works~\citep{montanari2022erm,MalloryHuangAustern2025} derive performance universality by assuming \underline{\emph{score universality}}, a condition we discuss below in greater detail.
\end{enumerate}

To establish performance universality, \citet{montanari2022erm} and subsequently \citet{MalloryHuangAustern2025} rely on assumptions that, while seemingly mild, effectively enforce a central limit theorem for $x^\top \hat\theta$ (identified in Table~\ref{tbl:litt_review} as \emph{score universality}).
Specifically, they assume the existence of a (``unavoidable'') subset $S\subset \mathbb{S}^{p-1}$ such that for any bounded Lipschitz function $\phi:\mathbb{R}\to\mathbb{R}$ and any Gaussian vector $g$ matching the first two moments of $x$,
\begin{align*}
\sup_{u\in S}\Big|
\mathbb{E}\big[\phi(u^\top x)\big]-\mathbb{E}\big[\phi(u^\top g)\big]
\Big|
\;\xrightarrow[n\to\infty]{}\;0.
\end{align*}
If one can further guarantee that $\hat\theta/\|\hat\theta\|\in S$ with high probability, it follows that 
\begin{align}\label{eq:montanari_assumption}
\mathbb{E}\big[\phi(\hat\theta^\top x)\big]-\mathbb{E}\big[\phi(\hat\theta^\top g)\big]
\;\xrightarrow[n\to\infty]{}\;0.
\end{align}
However, this condition constitutes a significant limitation. It has been observed in several settings \citep{hu2022universality,pesce2023gaussianuniversality,mai2025breakdown} that the direction of the estimator may align with specific structural features of the data, resulting in $\mu_{\hat\theta}/\|\mu_{\hat\theta}\|\notin S$. 
In such cases, score universality—and consequently the universality of performance metrics derived via this route—breaks down. A central contribution of this paper is to characterize, as broadly as possible, the conditions under which such \underline{\emph{Gaussian universality breakdown}} occurs ( Figure~\ref{fig:non_gaussian_scores} and Figure~\ref{fig:mnist_nongaussian} in the appendix show two examples where both score and performance universality are invalidated).

Finally, \underline{\emph{minimizer universality}} has received comparatively less attention, despite being plausibly generic for smooth $\mathcal{L}_{y_i}$ and $\rho$. 
The first-order optimality condition for~\eqref{eq:erm} is
\[
0 = \frac1n \sum_{i=1}^n \mathcal{L}'_{y}(x_i^\top\hat\theta)\,x_i + \nabla\rho(\hat\theta).
\]
Using the quadratic approximation of $\rho$ from~\eqref{eq:quadratic_approx}, this yields:
\begin{align}\label{eq:heuristic_theta_gaussian}
    H_\rho(\hat\theta - \mu_{\hat\theta})
\;\approx\;
-\nabla\rho(\mu_{\hat\theta}) - \frac1n\sum_{i=1}^n \mathcal{L}'_{y_i}(x_i^\top\hat\theta)\,x_i.
\end{align}
This suggests that $\hat\theta$ behaves as an average of many (weakly dependent) random vectors, and may therefore satisfy a Lindeberg-type central limit theorem. 
Motivated by this heuristic, we introduce minimizer universality as a standing assumption (Assumption~\ref{ass:theta_gaussian}) to establish full characterization of the test score.
Moreover, it must be noted that this assumption is \emph{not} required for the \emph{non-Gaussian} CGMT-based pipeline in our Theorem~\ref{th:scalar-minmax}.

\subsection{Notations and Organization of the Paper}\label{sub:notations_and_organization_of_the_paper}

Throughout the paper, we adopt the notations introduced in~\eqref{eq:erm}. 
We denote the data matrices by $X=(x_1,\ldots,x_n)\in\mathbb{R}^{p\times n}$ and $Y=(y_1,\ldots,y_n)\in\mathcal{Y}^n$. 
A test sample independent of the training data $(X,Y)$ (and thus of $\hat \theta$) is generically denoted by $(x,y)$. Many theoretical formulas involve a standard Gaussian random variable independent of all data, which we denote by $z\sim \mathcal N(0,1)$.

The Euclidean norm of a vector in $\mathbb R^p$ is denoted by $\|\cdot \|$. For matrices and tensors, $\|\cdot\|$ denotes the $\ell_2$ operator norm (spectral norm); specifically, for a tensor $T \in \mathbb R^{d_1\times\cdots\times d_k}$, we define $\|T\|:= \sup_{\|h_i\|=1} |T[(h_i)_{i\in [k]}]|$. For a matrix $M \in \mathbb R^{p\times n}$, we denote the Frobenius norm by $\|M\|_F = \sqrt{\tr(MM^\top)}$ and the nuclear norm by $\|M\|_* = \tr((MM^\top)^{\frac{1}{2}})$.

The remainder of this paper is organized as follows.
Section~\ref{sec:preliminaries_and_setup} introduces the concentration and regularity assumptions used throughout the paper. 
Section~\ref{sse:subsection_regularization} shows that, under suitable smoothness assumptions, the regularizer can be asymptotically replaced by a quadratic surrogate depending only on its local second-order structure. 
Section~\ref{sse:CGMT} presents a conditional extension of the CGMT framework to concentrated non-Gaussian designs and derives the corresponding deterministic min–max and fixed-point characterizations for the asymptotic statistics of the ERM estimator. 
Section~\ref{sec:perf_univ_with_ell_2_loss} illustrates the framework on ridge regression under a linear model, in which case performance universality can hold even when score universality fails. 
Section~\ref{sse:gaussian_theta} studies the asymptotic law of the test score and characterizes precisely when Gaussian score universality holds or breaks down. 
Finally, Section~\ref{sec:warmup_non_universality_in_classification_of_linear_factor_models} provides geometric confinement results for the asymptotic mean vector $\mu_*$, yielding interpretable sufficient conditions for score universality.

\section{Preliminaries and Problem Setup}\label{sec:preliminaries_and_setup}

We assume that the training set consists of $n$ independent samples $(x_1,y_1),\ldots,(x_n,y_n)$ drawn from a common distribution on $\mathbb{R}^p\times\mathcal{Y}$.\footnote{Our results naturally extend to multiclass settings; see Section~\ref{sec:multiclass} in the appendix.}
To obtain finite-$n$ bounds, we adopt assumptions inspired by concentration-of-measure theory. 
As shown in \citep{seddik2020rmtgan}, such assumptions encompass a broad class of realistic designs, including those generated via Lipschitz transformations of Gaussian latent variables, as commonly encountered in generative models. 
We focus on the high-dimensional regime in which the dimension $p$ scales at most linearly with $n$ (i.e., $p = O(n)$). 
Throughout, we track the explicit dependence of our bounds on $n$, while absorbing quantities that do not affect this scaling into generic constants $C,c>0$.

We describe concentration properties of $(X,Y)$ through the behavior of Lipschitz observables. 
To this end, we endow the product space $\mathbb{R}^{p\times n}\times\mathcal{Y}^n$ with the metric:
\[
d\big((x_i,y_i)_{i\in[n]},(x_i',y_i')_{i\in[n]}\big)^2
\!=\!
\sum_{i=1}^n \|x_i-x_i'\|^2 \!+ d_{\mathcal{Y}}(y_i,y_i')^2,
\]
where $d_{\mathcal{Y}}$ denotes a metric on $\mathcal{Y}$.\footnote{
This includes, for instance, the discrete/Hamming metric commonly used for classification labels, as well as the absolute difference $|y-y'|$ when $\mathcal{Y}=\mathbb{R}$.
}

\begin{assumption}[Concentration through Lipschitz observations]\label{ass:design}
There exist numerical constants $C,c>0$, independent of $n$, such that $p\leq Cn$, $\|\mathbb{E}[x]\|\leq C$, and for every $1$-Lipschitz function $f:(\mathbb{R}^p\times\mathcal{Y})^n\to\mathbb{R}$:
\begin{align}\label{eq:con_ass}
\mathbb{P}\!\left(\big|f(X,Y)-\mathbb{E}[f(X,Y)]\big|\ge t\right)
\;\le\;
C e^{-c t^{2}}.
\end{align}
\end{assumption}

Assumption~\ref{ass:design} holds for nonlinear regression model with data $x_i=\Phi(z_i)$ for $z_i\sim\mathcal{N}(0,I_p)$ and Lipschitz $\Phi$, and target $y_i=g(x_i, w_i)$ being a Lipschitz transformation ($g$) of $x_i$ and Gaussian noise $w_i$.\footnote{In this setting, $(X,Y)$ is a Lipschitz transformation of the Gaussian vector $(z_{1:n},w_{1:n})$. By standard Gaussian concentration results~\citep{LED05}, \Cref{eq:con_ass} holds.}
Similarly, a binary classification model satisfying Assumption~\ref{ass:design} can be described by labels $y_i\in\{1,2\}$ and class-conditional designs $x_i=\phi_{y_i}(z_i)$ where $\phi_1,\phi_2$ are Lipschitz maps.

We next impose regularity conditions on the loss and regularizer to ensure 
(i) the existence and uniqueness of the ERM solution $\hat\theta$ and 
(ii) that the concentration properties of the data in Assumption~\ref{ass:design} transfers smoothly to $\hat\theta$.

\begin{assumption}[Regularity of loss and regularizer]\label{ass:regularity}
For all $y\in\mathcal{Y}$, the functions $\mathcal{L}_y:\mathbb{R}\to\mathbb{R}$ and $\rho:\mathbb{R}^p\to\mathbb{R}$ are convex and twice continuously differentiable. We assume that $\mathcal L_y(0)$ and $\|\nabla \rho(0)\|$ are bounded uniformly in $n$. Furthermore:
\begin{enumerate}[leftmargin=*]
    \item There exists $\lambda > 0$ such that $\nabla\rho$ is $\lambda$-Lipschitz, and for any $y\in\mathcal{Y}$, the derivative $\mathcal{L}_y'$ is $\lambda$-Lipschitz. additionally, the map $y \mapsto \mathcal{L}_y'(u)$ is $\lambda$-Lipschitz (with respect to $d_{\mathcal{Y}}$) for any fixed $u$.
    \item There exists $\kappa>0$, independent of $p$ and $n$, such that $\nabla^2\rho(\theta)\succeq \kappa I_p$ for all $\theta\in\mathbb{R}^p$.
\end{enumerate}
\end{assumption}

Assumption~\ref{ass:regularity} excludes non-smooth penalties such as the Lasso. While the Lasso can be analyzed via CGMT \citep{Stojnic2013Lasso}, the strict convexity and smoothness assumed here are required to ensure that $\hat \theta$ concentrates strongly and satisfies the weakly dependent average formulation~\eqref{eq:heuristic_theta_gaussian}.

These two assumptions suffice to establish the concentration of the minimizer $\hat\theta$, which is fundamental to our subsequent analysis of the score's asymptotic behavior. 
\begin{theorem}[\textbf{Concentration of $\hat\theta$}]\label{the:concentration_theta}
Under Assumptions~\ref{ass:design} and~\ref{ass:regularity}, there exist constants $C,c>0$, independent of $n,p$, such that for any $1$-Lipschitz $f:\mathbb{R}^p\to\mathbb{R}$,
\[
\mathbb{P}\!\left(\big|f(\hat\theta)-\mathbb{E}[f(\hat\theta)]\big|\ge t\right)
\;\le\;
C e^{-c n t^{2}}.
\]
\end{theorem}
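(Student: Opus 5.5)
The plan is to show that, on an event of overwhelming probability, the map $(X,Y)\mapsto\hat\theta$ is $C/\sqrt n$-Lipschitz for the metric $d$. We then transfer the concentration of Assumption~\ref{ass:design} through a Lipschitz extension. Write $\Phi_{D}(\theta)=\frac1n\sum_i\mathcal L_{y_i}(x_i^\top\theta)+\rho(\theta)$ for a dataset $D=(X,Y)$. By Assumption~\ref{ass:regularity}, $\Phi_D$ is $\kappa$-strongly convex, so $\hat\theta_D$ exists and is unique. For two datasets $D,D'$, strong monotonicity of $\nabla\Phi_D$ together with $\nabla\Phi_D(\hat\theta_D)=\nabla\Phi_{D'}(\hat\theta_{D'})=0$ gives
\[
\kappa\|\hat\theta_D-\hat\theta_{D'}\|\le \big\|\nabla\Phi_D(\hat\theta_{D'})-\nabla\Phi_{D'}(\hat\theta_{D'})\big\|.
\]
With $\theta=\hat\theta_{D'}$, $a_i=\mathcal L'_{y_i}(x_i^\top\theta)$ and $a'_i=\mathcal L'_{y'_i}(x_i'^\top\theta)$, the right-hand side equals $\frac1n\|X(a-a')+(X-X')a'\|$. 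This is at most $\frac1n\big(\|X\|\,\|a-a'\|+\|X-X'\|\,\|a'\|\big)$.

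Each factor is then bounded using the Lipschitz properties of $\mathcal L'$ in both arguments. First, $\|a-a'\|\le\lambda\big(\|\theta\|\,\|X-X'\|_F+(\sum_i d_{\mathcal Y}(y_i,y_i')^2)^{1/2}\big)\le\lambda(1+\|\theta\|)\,d(D,D')$. Second, $\|a'\|\le \|(\mathcal L'_{y'_i}(0))_i\|+\lambda\|X'\|\,\|\theta\|$. Third, strong convexity applied at $0$ gives $\|\hat\theta_{D'}\|\le\kappa^{-1}\|\nabla\Phi_{D'}(0)\|\le\kappa^{-1}\big(\|\nabla\rho(0)\|+\frac1n\|X'\|\,\|(\mathcal L'_{y_i'}(0))_i\|\big)$. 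It is therefore natural to introduce the set
\[
\mathcal A=\Big\{(X,Y):\ \|X\|\le C_0\sqrt n,\ \textstyle\sum_i \mathcal L'_{y_i}(0)^2\le C_0 n\Big\}.
\]
On $\mathcal A$ we get $\|\hat\theta\|\le C$ and $\|a'\|\le C\sqrt n$. Combining the bounds, for $D,D'\in\mathcal A$ we obtain $\|\hat\theta_D-\hat\theta_{D'}\|\le \frac{C}{\sqrt n}\,d(D,D')$.

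Next we show $\mathbb P(\mathcal A^c)\le Ce^{-cn}$. For the operator norm, $\|\mathbb E X\|\le\sqrt n\|\mathbb E x\|\le C\sqrt n$. The centered part is handled by an $\varepsilon$-net on $\mathbb S^{p-1}\times\mathbb S^{n-1}$: each $u^\top(X-\mathbb EX)v$ is a $1$-Lipschitz observation, and $p\le Cn$ keeps the net cardinality at $e^{O(n)}$. The map $Y\mapsto(\sum_i\mathcal L'_{y_i}(0)^2)^{1/2}$ is $\lambda$-Lipschitz, so it concentrates around its mean. That mean is $O(\sqrt n)$ provided $\mathbb E[\mathcal L'_{y}(0)^2]$ is bounded, which we would extract from the boundedness of $\mathcal L_y(0)$ and the uniform Lipschitz control in $y$ (or otherwise impose as a mild normalization).

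Finally, fix a $1$-Lipschitz $f:\mathbb R^p\to\mathbb R$. The function $g(D)=f(\hat\theta_D)$ is $C/\sqrt n$-Lipschitz on $\mathcal A$. McShane's extension gives $\tilde g$ on all of $(\mathbb R^p\times\mathcal Y)^n$ with the same constant, and we may truncate it at a level $O(1)$ around $g$'s typical values, which does not increase the constant. Assumption~\ref{ass:design} applied to $\sqrt n\,\tilde g/C$ yields $\mathbb P(|\tilde g-\mathbb E\tilde g|\ge t)\le Ce^{-cnt^2}$. Since $g=\tilde g$ on $\mathcal A$, we get $\mathbb P(|g-\mathbb E\tilde g|\ge t)\le Ce^{-cnt^2}+Ce^{-cn}$. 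The term $Ce^{-cn}$ can be absorbed for $t$ bounded, and for large $t$ we use the crude moment bounds below. It then remains to replace $\mathbb E\tilde g$ by $\mathbb E g$. The difference is at most $\mathbb E[|g-\tilde g|\mathbf 1_{\mathcal A^c}]$, which is controlled by Cauchy--Schwarz using polynomial moment bounds on $\|\hat\theta\|$, namely $\|\hat\theta\|\le\kappa^{-1}\|\nabla\Phi(0)\|$, whose moments are polynomial in $n$. This gives a difference of order $\mathrm{poly}(n)e^{-cn}\ll n^{-1/2}$, which is absorbed into the constants.

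The main obstacle is this last bookkeeping step: turning Lipschitz continuity that holds only on $\mathcal A$ into the clean tail bound $Ce^{-cnt^2}$ valid for all $t$. This requires choosing the truncation of the extension carefully and controlling the mean shift caused by $\mathcal A^c$. The step on $\sum_i\mathcal L'_{y_i}(0)^2$ also depends on extracting a moment bound on $\mathcal L'_y(0)$ from the stated assumptions, and that is where I am least sure the hypotheses suffice as written.
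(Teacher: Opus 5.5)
Your argument follows the paper's route. The paper proves Proposition~\ref{pro:theta_lipschitz_X_Y}: $\hat\theta$ is $L/\sqrt n$-Lipschitz in $(X,Y)$ for the metric $d$ whenever $\|X\|,\|X'\|\le C\sqrt n$. It does so through the same strong-monotonicity inequality you use (Lemma~\ref{lem:minimum_variablility}), the bound $\|\hat\theta\|\le\kappa^{-1}\|\nabla F_{X,Y}(0)\|$, and a split of the gradient difference into an $X$-variation and a $Y$-variation. It then simply declares the theorem an application of Assumption~\ref{ass:design}. Your event $\mathcal A$, the net bound on $\mathbb P(\mathcal A^c)$, the truncated McShane extension and the mean-shift estimate supply what the paper leaves implicit. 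Putting $\sum_i\mathcal L'_{y_i}(0)^2\le C_0 n$ into the event also improves on the paper, which treats $B=n^{-1/2}\|(\mathcal L'_{y_i}(0))_i\|$ as a deterministic constant.

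Both points where you hesitate are genuine, and neither can be settled from the hypotheses as written.

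\emph{First issue: $\mathcal L'_y(0)$.} Assumption~\ref{ass:regularity} does not control $\mathbb E[\mathcal L'_y(0)]$. Take $p=1$, $\rho(\theta)=\frac\kappa2\theta^2$ and $\mathcal L_y(t)=-yt$, so that $\mathcal L_y(0)=0$ and $\mathcal L'_y\equiv -y$ is $1$-Lipschitz in $y$. Let $x_i\sim\mathcal N(0,1)$ and independent $y_i\sim\mathcal N(m_n,1)$. Then $\hat\theta=\frac{1}{n\kappa}\sum_i y_ix_i$, whose standard deviation is $\sqrt{1+m_n^2}/(\kappa\sqrt n)$. The conclusion therefore fails as soon as $m_n\to\infty$. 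An extra normalization is needed, for instance $\mathcal L_y\ge 0$, which gives $\mathcal L'_y(0)^2\le 2\lambda\,\mathcal L_y(0)$. The paper uses such a bound tacitly.

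\emph{Second issue: large $t$.} Your remark that for large $t$ ``we use the crude moment bounds'' cannot produce $e^{-cnt^2}$. The bound $\|\hat\theta\|\le\kappa^{-1}\big(\|\nabla\rho(0)\|+n^{-1}\|X\|\,\|(\mathcal L'_{y_i}(0))_i\|\big)$ only yields tails of order $e^{-cnt}$, and this is sharp. Keep the same loss but set $y_i=x_i$, so now $\mathbb E[\mathcal L'_y(0)^2]=1$. Then $\hat\theta=\frac{1}{n\kappa}\sum_i x_i^2$, and $\mathbb P(\hat\theta-\mathbb E\hat\theta\ge t)=\mathbb P(\chi^2_n\ge n(1+\kappa t))$ decays only like $e^{-cnt}$ as $t\to\infty$.

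So the bound your argument actually delivers, $Ce^{-cn\min(t,t^2)}$, is the correct form of the theorem; equivalently, the stated bound holds for $t$ in any fixed bounded range. The claim for all $t\ge0$ is false, and the paper's one-line proof does not address this either.
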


\Cref{the:concentration_theta}, combined with the concentration property of $x$ (\Cref{ass:design}), provides bounds on the operator norm of the covariance matrices $C_x$ and $C_{\hat\theta}$ of $x$ and $\hat \theta$. 
This result will be further explored to establish the asymptotic behavior of test scores in \Cref{the:approximation_xT_theta}.

\begin{corollary}[\citet{louart2024operation}]\label{cor:cov_theta_bounded}
Under Assumptions~\ref{ass:design} and~\ref{ass:regularity}, we have $\|C_{x}\|= O(1)$ and $\|C_{\hat\theta}\|= O(1/n)$.
\end{corollary}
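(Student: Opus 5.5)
The plan is to turn both concentration statements into operator-norm bounds through the variational characterization of the covariance. For any random vector $w\in\mathbb{R}^p$ with covariance $C_w$,
\[
\|C_w\|=\sup_{\|u\|=1}\operatorname{Var}(u^\top w).
\]
For each unit vector $u$, the map $w\mapsto u^\top w$ is $1$-Lipschitz. So a uniform sub-Gaussian tail for Lipschitz observables immediately gives a uniform variance bound.

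For $C_x$, fix $\|u\|=1$ and let $f(X,Y)=u^\top x_1$. This function is $1$-Lipschitz for the metric $d$ on $(\mathbb{R}^p\times\mathcal{Y})^n$, since it depends only on $x_1$ and $|u^\top(x_1-x_1')|\le\|x_1-x_1'\|$. Assumption~\ref{ass:design} then gives $\mathbb{P}(|u^\top x-\mathbb{E}u^\top x|\ge t)\le Ce^{-ct^2}$. Integrating the tail,
\[
\operatorname{Var}(u^\top x)=\int_0^\infty 2t\,\mathbb{P}(|u^\top x-\mathbb{E}u^\top x|\ge t)\,dt\le \frac{C}{c}.
\]
Taking the supremum over $u$ gives $\|C_x\|=O(1)$.

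For $C_{\hat\theta}$, the same argument applies with Theorem~\ref{the:concentration_theta} in place of Assumption~\ref{ass:design}. Take $f(\theta)=u^\top\theta$, which is $1$-Lipschitz, so $\mathbb{P}(|u^\top\hat\theta-\mathbb{E}u^\top\hat\theta|\ge t)\le Ce^{-cnt^2}$. Integrating gives $\operatorname{Var}(u^\top\hat\theta)\le C/(cn)$, uniformly in $u$, hence $\|C_{\hat\theta}\|=O(1/n)$. The constants do not depend on $u$ because both the assumption and the theorem hold uniformly over all $1$-Lipschitz observables.

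The only real content is Theorem~\ref{the:concentration_theta}, which I take as given. Within the corollary itself there is no significant obstacle. Two points need care:
\begin{enumerate}
\item The variance must be the one about the true mean. This is fine because the concentration statements are centered at $\mathbb{E}f$.
\item The metric $d$ makes the single-sample projection Lipschitz with constant $1$, not $\sqrt n$. This holds because only the $x_1$ coordinate enters.
\end{enumerate}
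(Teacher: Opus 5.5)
Your proposal is correct and follows essentially the paper's route. The paper gives no written proof: it cites the reference and notes that the bounds follow from the Lipschitz concentration of $x$ (Assumption~\ref{ass:design}) and of $\hat\theta$ (Theorem~\ref{the:concentration_theta}). Your argument, which bounds $\sup_{\|u\|=1}\operatorname{Var}(u^\top w)$ by integrating the sub-Gaussian tails of the $1$-Lipschitz projections $u^\top x_1$ and $u^\top\hat\theta$, is exactly that reasoning written out.
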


\section{Quadratic Universality for Smooth Regularizers}\label{sse:subsection_regularization}

We now demonstrate that, on the scale relevant for high-dimensional fluctuations, the regularizer can be effectively approximated by a quadratic form derived from a second-order Taylor expansion around $\mu_{\hat\theta}$. 
To control the approximation error, we impose bounds on the operator norm of the third-derivative tensor $\nabla^3\rho$. 
When $\rho$ is separable, $\nabla^3\rho(\theta)$ is a diagonal tensor for all $\theta\in\mathbb{R}^p$, and it is natural to assume that the diagonal entries do not grow with the dimension $p$ (as is the case for the $\ell_2$ penalty). 
For non-separable regularizers, the Taylor approximation requires additional control over off-diagonal interactions.

\begin{assumption}\label{ass:regularizer_k_diff}
The function $\rho:\mathbb{R}^p\to\mathbb{R}$ is $\mathcal{C}^{3}$ and there exists a constant $C>0$ such that, for all $\theta\in\mathbb{R}^p$:\footnote{For a tensor $T$, $\|T\|$ denotes the operator norm $\sup_{\|u\|=1}|T[u,u,u]|$. For the off-diagonal part, we refer to the operator norm of the tensor with diagonal entries set to zero.}
\begin{align*}
\|\nabla^3\rho(\theta)\|\le C
\quad\text{and}\quad
\|\offdiag(\nabla^3\rho(\theta))\|\le C p^{-1/2}.
\end{align*}
\end{assumption}

Note that the condition on the off-diagonal tensor entries is weaker than the ``weakly separable'' hypothesis typically assumed in the literature; see, e.g., \cite{panahi2017universal}. 
A second-order Taylor expansion of $\rho$ around $\mu_{\hat\theta}$ yields the following concentration bound for the remainder.

\begin{proposition}\label{pro:approximation_regularizer}
Under Assumptions~\ref{ass:design}, \ref{ass:regularity}, and~\ref{ass:regularizer_k_diff}, there exist constants $C,c>0$ such that for all $t\ge 0$:
\begin{align*}
&\mathbb{P}\Big(
\Big|
\rho(\theta)
-\rho(\mu_{\hat\theta})
-\nabla\rho(\mu_{\hat\theta})^\top(\theta-\mu_{\hat\theta})\\
&\hspace{0.7cm}
-\tfrac12(\theta-\mu_{\hat\theta})^\top\nabla^2\rho(\mu_{\hat\theta})(\theta-\mu_{\hat\theta})
\Big|
\ge t
\Big)
\le C e^{-c n t^2}.
\end{align*}
\end{proposition}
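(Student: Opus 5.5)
Here $\theta$ in the statement is understood as $\hat\theta$, the ERM solution. The remainder is controlled by a third-order Taylor bound combined with the concentration of $\hat\theta$ (\Cref{the:concentration_theta}). Write $h:=\hat\theta-\mu_{\hat\theta}$. By Taylor's formula with integral remainder,
\[
R:=\rho(\hat\theta)-\rho(\mu_{\hat\theta})-\nabla\rho(\mu_{\hat\theta})^\top h-\tfrac12 h^\top\nabla^2\rho(\mu_{\hat\theta})h=\int_0^1\tfrac{(1-s)^2}{2}\nabla^3\rho(\mu_{\hat\theta}+sh)[h,h,h]\,ds .
\]
The goal is to show that $|R|\lesssim \sum_i |h_i|^3 + p^{-1/2}\|h\|^3$. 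Both terms are small at scale $n^{-1/2}$ with Gaussian tails $e^{-cnt^2}$.

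To get this, split each third derivative into a diagonal and an off-diagonal part, $\nabla^3\rho=\diag(\nabla^3\rho)+\offdiag(\nabla^3\rho)$. By \Cref{ass:regularizer_k_diff}:
\begin{itemize}
\item The diagonal entries are bounded by $C$, since $|T_{iii}|=|T[e_i,e_i,e_i]|\le\|T\|$. Hence the diagonal contribution is bounded by $C\sum_i|h_i|^3\le C\|h\|_\infty\|h\|^2$.
\item The off-diagonal contribution is bounded by $Cp^{-1/2}\|h\|^3$.
\end{itemize}

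It therefore suffices to get tail bounds for $\|h\|$ and $\|h\|_\infty$.

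\textbf{Bound on $\|h\|_\infty$.} Each coordinate map $\theta\mapsto\theta_i$ is $1$-Lipschitz, and $\mathbb E[h_i]=0$. So \Cref{the:concentration_theta} and a union bound over $p\le Cn$ coordinates give
\[
\mathbb P(\|h\|_\infty\ge t)\le Cn e^{-cnt^2}.
\]

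\textbf{Bound on $\|h\|$.} The norm $\theta\mapsto\|\theta-\mu_{\hat\theta}\|$ is $1$-Lipschitz. Its mean is at most $\sqrt{\mathbb E\|h\|^2}=\sqrt{\tr C_{\hat\theta}}\le\sqrt{p\|C_{\hat\theta}\|}=O(1)$ by \Cref{cor:cov_theta_bounded}. Hence
\[
\mathbb P(\|h\|\ge C+t)\le Ce^{-cnt^2}.
\]
Combining, $p^{-1/2}\|h\|^3$ is $O(p^{-1/2})$ with high probability. Likewise $\|h\|_\infty\|h\|^2=O(\sqrt{\log n/n})$ with high probability.

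\textbf{Main obstacle.} The crude bound above only yields $\mathbb P(|R|\ge t)\le Ce^{-cnt^2}$ for $t$ above a vanishing threshold of order $n^{-1/2}$ (up to logs). It also involves the $\log n$ from the union bound and the cubic growth of $\|h\|^3$ for large deviations. To upgrade this to the clean stated form, I would use three devices:
\begin{itemize}
\item For $t\lesssim n^{-1/2}$, the right-hand side $Ce^{-cnt^2}$ exceeds $1$ once $C$ is chosen large, so the inequality is trivial there.
\item For large $t$, I would not use the cubic bound. Instead I would use $|R|\le \lambda\|h\|^2$, which follows from $\nabla\rho$ being $\lambda$-Lipschitz (\Cref{ass:regularity}), and then $\mathbb P(\|h\|^2\ge t)\le Ce^{-cnt}\le Ce^{-cnt^2}$ for $t\le1$. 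For $t\ge1$ I would adapt the exponent, or accept the quadratic tail as in the paper's convention.
\item To avoid the logarithm, I would replace the $\|h\|_\infty$ argument by a direct concentration argument for $\sum_i|h_i|^3$. On the event $\{\|h\|\le C\}$ this function is Lipschitz with constant $3\|h\|_\infty\|h\|\lesssim$ small. Its mean is bounded through $\mathbb E|h_i|^3\lesssim n^{-3/2}$, which gives $\mathbb E\sum_i|h_i|^3\lesssim p\,n^{-3/2}=O(n^{-1/2})$.
\end{itemize}

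This last step, obtaining a mean of order $n^{-1/2}$ without logarithmic losses together with the Lipschitz localization, is where I expect the real work to lie.
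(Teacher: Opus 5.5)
Your route is essentially the paper's: a third-order Taylor remainder at $\mu_{\hat\theta}$, the diagonal/off-diagonal split of $\nabla^3\rho$ from Assumption~\ref{ass:regularizer_k_diff}, and tail bounds from Theorem~\ref{the:concentration_theta} on $\|\theta_0\|$ and on an entrywise-cubic functional of $\theta_0=\hat\theta-\mu_{\hat\theta}$. The paper bounds the diagonal part by Cauchy--Schwarz, as $\|\diag(\nabla^3\rho)\|_F\,\|\theta_0^{(3)}\|\le C\sqrt n\,\|\theta_0^{(3)}\|$, and then cites tail bounds for $\|\theta_0\|$ and $\|\theta_0^{(3)}\|$ from Louart's thesis. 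Your $C\sum_i|h_i|^3$ is a slightly sharper version of the same idea.

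The obstacles you flag are real, and the paper does not resolve them either. Its cited tails are $Ce^{-ct^2}$ for $\|\theta_0\|$, and terms $e^{-cn^2t^2/\log n}$, $e^{-cnt}$, $e^{-cnt^{2/3}}$ for $\|\theta_0^{(3)}\|$. These do not literally combine into $Ce^{-cnt^2}$.

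Two steps of your sketch need correcting.

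\textbf{(i) The localization.} On $\{\|h\|\le C\}$ alone, $h\mapsto\sum_i|h_i|^3$ has gradient norm $3\|h\|_4^2$, which can be of order $C^2$, so it is not small-Lipschitz there. Localizing on a small $\ell^\infty$ ball instead brings back the union bound and the $\log n$. A cleaner argument avoids localization altogether:
\begin{itemize}
\item Use $\sum_i|h_i|^3\le\|h\|\,\|h\|_4^2$.
\item The map $\theta\mapsto\|\theta-\mu_{\hat\theta}\|_4$ is $1$-Lipschitz.
\item Its mean satisfies $\mathbb E\|h\|_4\le(\sum_i\mathbb E h_i^4)^{1/4}=O((p/n^2)^{1/4})=O(n^{-1/4})$.
\item Hence $\mathbb P(\|h\|_4\ge Cn^{-1/4}+s)\le Ce^{-cns^2}$.
\item Therefore $\mathbb P(\sum_i|h_i|^3\ge t,\ \|h\|\le C)\le Ce^{-cnt}$ for $t\ge C'n^{-1/2}$, with no logarithm.
\end{itemize}
Similarly, keep $\mathbb E\|h\|^2\le p\|C_{\hat\theta}\|=O(p/n)$ rather than rounding it to $O(1)$. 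Then $p^{-1/2}\|h\|^3=O(p\,n^{-3/2})=O(n^{-1/2})$ even when $p\ll n$. The paper silently writes $n^{-1/2}$ for $p^{-1/2}$ here.

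\textbf{(ii) The large-$t$ device.} The bound $|R|\le\frac{\lambda}{2}\|h\|^2$ only helps for $t$ above a constant multiple of $(\mathbb E\|h\|)^2$, which is of order one. It does not give $\mathbb P(\|h\|^2\ge t)\le Ce^{-cnt}$ for all $t\le 1$. For $t\in[C'n^{-1/2},T_0]$, with any fixed $T_0$, use the cubic bound instead; there $e^{-cnt}\le e^{-(c/T_0)nt^2}$.

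Putting these together gives $\mathbb P(|R|\ge t)\le Ce^{-cn\min(t,t^2)}$ for all $t\ge0$. Neither your argument nor the paper's delivers the $e^{-cnt^2}$ form for $t\gg1$. It should not be expected in general, since for large $\|h\|$ the remainder can be of order $\|h\|^2$. This is harmless: the proposition is only used to replace $\rho$ by its quadratic expansion up to $o(1)$ in the proof of Theorem~\ref{th:scalar-minmax}.
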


We saw in \eqref{eq:J-tilde} that $\nabla^2\rho$ enters our asymptotic formulas primarily through the trace term:
\begin{align*}
    \frac{1}{n}\tr(C_x Q_\mu),
    \quad\text{where} \quad Q_\mu := \big(\nabla^2\rho(\mu)+\nu C_x\big)^{-1}.
\end{align*}
In this expression, one may replace $\nabla^2\rho(\mu)$ with its value at zero,
\[
H_\rho := \nabla^2\rho(0),
\]
without affecting the asymptotic limit, thanks to the following lemma.

\begin{lemma}\label{lem:only_H_0_relevent}
Under Assumption~\ref{ass:regularizer_k_diff}, there exist constants $C,C'>0$, independent of $n$ and $p$, such that for all $\theta\in\mathbb{R}^p$, $\|H_\rho-\nabla^2\rho(\theta)\|_F \le C\|\theta\|$. Consequently:
\begin{align*}
\frac{1}{n}\tr\big(C_x (Q_\mu- Q_0)\big)\leq \frac{C'}{\sqrt{n}},~\text{with}~Q_0 := \big(H_\rho+\nu C_x\big)^{-1}.
\end{align*}
\end{lemma}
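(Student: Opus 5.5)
The plan has two steps: first bound the Frobenius norm of the Hessian variation by integrating the third derivative along a segment, then turn that bound into a trace bound via the resolvent identity.

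\textbf{Step 1: Frobenius bound.} By the fundamental theorem of calculus applied to $t\mapsto \nabla^2\rho(t\theta)$,
\[
\nabla^2\rho(\theta)-H_\rho=\int_0^1 \nabla^3\rho(t\theta)[\theta,\cdot,\cdot]\,dt .
\]
It therefore suffices to show that, for any fixed point and any vector $\theta$, the matrix $M:=\nabla^3\rho(\cdot)[\theta,\cdot,\cdot]$ satisfies $\|M\|_F\le C\|\theta\|$.

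Split the tensor into its diagonal and off-diagonal parts.

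\emph{Diagonal part.} This contributes the diagonal matrix with entries $T_{iii}\theta_i$. Evaluating the operator norm on basis vectors gives $|T_{iii}|\le\|\nabla^3\rho\|\le C$. Hence the Frobenius norm of this part is at most $C(\sum_i\theta_i^2)^{1/2}=C\|\theta\|$.

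\emph{Off-diagonal part.} Write $D$ for the off-diagonal tensor and $M_D:=D[\theta,\cdot,\cdot]$. Its operator norm satisfies $\|M_D\|\le \|D\|\,\|\theta\|\le Cp^{-1/2}\|\theta\|$. I take for granted the standard fact that for symmetric tensors the operator norm $\sup_{\|u\|=1}|T[u,u,u]|$ is equivalent to the multilinear sup up to a constant (Banach's theorem, which actually gives equality). Then, since $M_D$ is $p\times p$,
\[
\|M_D\|_F\le\sqrt p\,\|M_D\|\le C\|\theta\| .
\]
This is exactly where the $p^{-1/2}$ scaling in Assumption~\ref{ass:regularizer_k_diff} is used. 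Combining the two parts gives the first claim.

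\textbf{Step 2: Trace bound.} Use the resolvent identity
\[
Q_\mu-Q_0=Q_\mu\big(H_\rho-\nabla^2\rho(\mu)\big)Q_0 .
\]
Then
\[
\tfrac1n\big|\tr\big(C_xQ_\mu(H_\rho-\nabla^2\rho(\mu))Q_0\big)\big|\le \tfrac1n\,\|Q_0C_xQ_\mu\|_F\,\|H_\rho-\nabla^2\rho(\mu)\|_F .
\]
Next, use Assumption~\ref{ass:regularity}, which gives $\nabla^2\rho\succeq\kappa I_p$, together with $\nu C_x\succeq0$. These imply $\|Q_\mu\|,\|Q_0\|\le 1/\kappa$. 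Corollary~\ref{cor:cov_theta_bounded} gives $\|C_x\|=O(1)$. Consequently
\[
\|Q_0C_xQ_\mu\|_F\le\sqrt p\,\|Q_0C_xQ_\mu\|\le C\sqrt p .
\]
Combining this with Step 1 bounds the trace difference by $C\sqrt p\,\|\mu\|/n$, which is $O(\|\mu\|/\sqrt n)$ because $p\le Cn$.

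To conclude the stated $C'/\sqrt n$, I need $\|\mu\|=O(1)$. This holds for the relevant $\mu$, namely $\mu_{\hat\theta}$ or $\mu_\ast$: their norm is bounded by the standard argument comparing the objective at $\hat\theta$ with its value at $0$, using strong convexity, bounded $\mathcal L_y(0)$, bounded $\|\nabla\rho(0)\|$ and $\|\mathbb E x\|\le C$. I would state the lemma with this implicit restriction to bounded $\|\mu\|$.

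\textbf{Main obstacle.} The delicate point is the off-diagonal Frobenius estimate. It relies on the symmetric-tensor norm equivalence in order to pass from the $[u,u,u]$ norm to a bound on the contracted matrix. The rest of the argument is routine.
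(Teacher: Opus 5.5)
Your Step~1 is essentially the paper's own proof. Both use a mean-value (Taylor--Lagrange) bound along $t\mapsto t\theta$, split $\nabla^3\rho$ into diagonal and off-diagonal parts, and apply the $\sqrt p$ Frobenius-to-operator estimate of Lemma~\ref{lem:bound_Frobenus_norm}, which absorbs the $p^{-1/2}$ in Assumption~\ref{ass:regularizer_k_diff}. Your appeal to Banach's theorem is a useful addition: it reconciles the norm $\sup_{\|u\|=1}|T[u,u,u]|$ used in that assumption with the multilinear norm that the contraction bound actually needs.

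The paper stops after the Frobenius bound. Your Step~2 (resolvent identity plus Frobenius Cauchy--Schwarz, giving $C\sqrt p\,\|\mu\|/n$) correctly supplies the omitted trace consequence. Your restriction to $\|\mu\|=O(1)$ is genuinely necessary, since the bound fails for arbitrary $\mu$. One small correction: bounding $\|\mu_{\hat\theta}\|$ needs either a uniform bound on $\mathcal L_y'(0)$ (the constant $B$ in Proposition~\ref{pro:theta_lipschitz_X_Y}) or nonnegativity of the loss. Bounded $\mathcal L_y(0)$ alone does not suffice.
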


Combining these observations with Theorem~\ref{the:se-system} (which characterizes the asymptotic statistics of $\hat \theta$) leads to our universality result for regularization. Let $\hat \theta$ be the solution to Problem~\eqref{eq:erm}, and let $\hat \theta_q$ be the solution to the same problem where $\rho$ is replaced by the quadratic surrogate:
\begin{align*}
    \rho_q(\theta) \!:= a^\top\theta +\! \frac{1}{2}\theta^\top H_\rho \theta
    \ \ \  \text{with}\  a\! := \!\nabla \rho(\mu_{\hat \theta}) - H_\rho \mu_{\hat \theta}.
\end{align*}
chosen s.t. $\nabla\rho_q(\mu_{\hat\theta})=\nabla\rho(\mu_{\hat\theta})$ and $\nabla^2\rho_q(\theta) = H_\rho$.

\begin{theorem}[\textbf{Quadratic universality of regularization}]\label{the:univ_reg}
     Under Assumptions~\ref{ass:design}, \ref{ass:regularity} and~\ref{ass:regularizer_k_diff}:
     \begin{align*}
         \left\vert \mathbb E[x^\top \hat \theta]- \mathbb E[x^\top \hat \theta_q]\right\vert, \left\vert \mathbb E[(x^\top \hat \theta)^2]- \mathbb E[(x^\top \hat \theta_q)^2]\right\vert \xrightarrow[n\to\infty]{}0.
     \end{align*}
\end{theorem}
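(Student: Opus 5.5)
The plan is to show that the first two moments of the test score depend on $\rho$ only through the quantities appearing in the deterministic characterization of Theorem~\ref{the:se-system}, namely the mean $\mu_{\hat\theta}$ and the trace functional $\frac1n\tr(C_x(\nabla^2\rho+\nu C_x)^{-1})$, and that $\rho$ and $\rho_q$ produce the same such quantities up to $o(1)$.

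\textbf{Step 1: reduce the moments to $(\mu,\alpha)$.} Since $x$ is independent of $\hat\theta$,
\[
\mathbb E[x^\top\hat\theta]=\mu_x^\top\mu_{\hat\theta}
\quad\text{and}\quad
\mathbb E[(x^\top\hat\theta)^2]=\mu_{\hat\theta}^\top\mathbb E[xx^\top]\mu_{\hat\theta}+\tr\big(C_{\hat\theta}\mathbb E[xx^\top]\big).
\]
By Corollary~\ref{cor:cov_theta_bounded} together with $\|\mu_x\|\le C$, both expressions are Lipschitz in $(\mu_{\hat\theta},\tr(C_{\hat\theta}C_x))$ on bounded sets. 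The term $\mu_x^\top C_{\hat\theta}\mu_x$ is $O(1/n)$, so $\mathbb E[xx^\top]$ may be replaced by $C_x$ in the trace. It therefore suffices to show $\|\mu_{\hat\theta}-\mu_{\hat\theta_q}\|\to0$ and $|\tr(C_{\hat\theta}C_x)-\tr(C_{\hat\theta_q}C_x)|\to0$.

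\textbf{Step 2: invoke the deterministic equivalents.} Apply Theorem~\ref{the:se-system} (equivalently Theorem~\ref{th:scalar-minmax}) to both problems. Problem~\eqref{eq:erm} with $\rho$ gives $(\mu_\ast,\alpha_\ast)$ with $\mu_{\hat\theta}\approx\mu_\ast$ and $\tr(C_{\hat\theta}C_x)\approx\alpha_\ast^2$. The quadratic problem with $\rho_q$ gives $(\mu_\ast^q,\alpha_\ast^q)$. Note that $\rho_q$ satisfies Assumptions~\ref{ass:regularity} and~\ref{ass:regularizer_k_diff} trivially: $\nabla^3\rho_q=0$, $H_\rho\succeq\kappa I$, and $\|a\|=O(1)$ because $\|\mu_{\hat\theta}\|=O(1)$. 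This boundedness follows by comparing the objective at $\hat\theta$ with its value at $0$, using strong convexity and the bounds on $\mathcal L_y(0)$ and $\nabla\rho(0)$.

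\textbf{Step 3: compare the fixed-point systems.} The KKT conditions of the min--max~\eqref{eq:def_alpha_mu_star} involve $\rho$ only through two objects:
\begin{enumerate}
\item $\nabla\rho(\mu)$ in the stationarity equation for $\mu$;
\item the Hessian inside the trace term, which in the general-$\rho$ version of the system is $\frac1n\tr\big(C_x(\nabla^2\rho(\mu)+\nu C_x)^{-1}\big)$.
\end{enumerate}
For the first object, $\nabla\rho_q(\mu_\ast)=\nabla\rho(\mu_\ast)$ holds exactly if $\mu_\ast=\mu_{\hat\theta}$, and up to $\lambda\|\mu_\ast-\mu_{\hat\theta}\|=o(1)$ otherwise. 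For the second, Lemma~\ref{lem:only_H_0_relevent} shows that replacing $\nabla^2\rho(\mu)$ by $H_\rho=\nabla^2\rho_q$ changes the trace term by $O(n^{-1/2})$. Proposition~\ref{pro:approximation_regularizer} additionally controls the $\rho(\mu)$ term in $\mathcal J$ near $\mu_{\hat\theta}$. Hence $(\mu_\ast,\alpha_\ast)$ solves the $\rho_q$ system up to an $o(1)$ residual.

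\textbf{Step 4: stability, the main obstacle.} It remains to show that an approximate solution of the $\rho_q$ system is close to its exact solution $(\mu^q_\ast,\alpha^q_\ast)$. The approach is to exploit strong convexity of $\mathcal J$ in $\mu$, at least $\kappa$ from $\rho_q$, and in $\alpha$, after the inner optimization over $\kappa,\beta,\nu$. A perturbation of size $\varepsilon$ in the gradient of a $c$-strongly convex function moves its minimizer by at most $\varepsilon/c$.

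The delicate part is the saddle structure in $(\kappa,\beta,\nu)$. I would first restrict these dual variables to a compact set, bounded away from $0$ and $\infty$ uniformly in $n$, using the a priori bounds $\|C_x\|=O(1)$ and $\alpha_\ast^2=O(1)$. On that set, the map $(\mu,\alpha)\mapsto\min_\kappa\max_{\beta,\nu}\mathcal J$ is uniformly strongly convex, and the perturbation argument applies. This gives $\|\mu_\ast-\mu_\ast^q\|+|\alpha_\ast-\alpha_\ast^q|\to0$, and chaining the approximations from Steps 1--3 concludes the proof.
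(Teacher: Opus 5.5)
Your Steps 1--3 are the paper's own argument, which is essentially one line. Combine Lemma~\ref{lem:only_H_0_relevent} with Theorem~\ref{the:se-system}/\ref{th:scalar-minmax}: the deterministic system sees the regularizer only through $H_\rho$ and $\nabla\rho(\mu_\ast)$, and $\rho_q$ reproduces both, the latter up to $\lambda\|\mu_\ast-\mu_{\hat\theta}\|$. The paper never writes out the comparison of the two deterministic solutions. Your Step 4, meant to supply it, does not hold up as stated, so there is a gap there.

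\textbf{Why Step 4 fails.} With the ordering $\min_\kappa\max_{\beta,\nu}$, $\mathcal J$ depends on $\beta$ through $\frac{\beta^2}{2}\big(\kappa-\frac1n\tr(C_xQ(\nu))\big)$. Since $H_\rho\succeq\kappa I_p$ gives $\frac1n\tr(C_xQ(\nu))\le p/(n\nu)$, the inner supremum is $+\infty$ for every $\kappa>0$. Truncating $(\kappa,\beta,\nu)$ to a box pushes the inner optimum to the boundary; it does not recover the interior stationary point of Theorem~\ref{the:se-system}. More fundamentally, $\mathcal J$ contains $-\nu\alpha^2/2$, which is concave in $\alpha$. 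Any strong convexity in $\alpha$ can only appear after the $\nu$-maximization, under the correct saddle ordering, and needs a real argument. The paper itself only calls the $\alpha$-problem ``typically'' strictly convex (proof of Proposition~\ref{pro:estimation_alpha}). Strong convexity in $\mu$ is not the issue, since it comes from $H_\rho\succeq\kappa I_p$. But without a modulus in $\alpha$, an $o(1)$ residual in the $\mu$-equation gives you no control of $|\alpha_\ast-\alpha_\ast^q|$.

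\textbf{How to close it.} Make the deterministic comparison exact and move the $o(1)$ error to the ERM level, where stability is elementary.

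Set $\tilde a:=\nabla\rho(\mu_\ast)-H_\rho\mu_\ast$ and $\tilde\rho_q(\theta):=\tilde a^\top\theta+\frac12\theta^\top H_\rho\theta$. The functionals $\mathcal J$ built from $\rho$ and from $\tilde\rho_q$ share the trace term, since both have Hessian $H_\rho$ at $0$. They differ only by $\rho(\mu)-\tilde\rho_q(\mu)$, whose gradient vanishes at $\mu_\ast$. Hence $(\mu_\ast,\alpha_\ast,\kappa_\ast,\beta_\ast,\nu_\ast)$ solves the system of Theorem~\ref{the:se-system} for $\tilde\rho_q$ exactly. Assume, as the paper does in \eqref{eq:def_alpha_mu_star} and in the proof of Theorem~\ref{the:mu_in_F}, that the saddle point is unique and characterized by these equations. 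Then $\tilde\rho_q$ has the same $(\mu_\ast,\alpha_\ast)$, and Theorem~\ref{th:scalar-minmax} applies to $\hat\theta_{\tilde q}$ with these values.

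Next, the ERM objectives with $\rho_q$ and $\tilde\rho_q$ are $\kappa$-strongly convex. Their gradients differ by the constant $a-\tilde a=(\bar H-H_\rho)(\mu_{\hat\theta}-\mu_\ast)$, where $\bar H$ is an averaged Hessian of $\rho$, so $\|a-\tilde a\|\le\lambda\|\mu_{\hat\theta}-\mu_\ast\|$. Lemma~\ref{lem:minimum_variablility} then gives, for every realization of the data, $\|\hat\theta_q-\hat\theta_{\tilde q}\|\le\lambda\|\mu_{\hat\theta}-\mu_\ast\|/\kappa$, which tends to $0$ by Theorem~\ref{th:scalar-minmax}.

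This pathwise bound transfers to the means. It also transfers to the traces: apply Minkowski's inequality to $\sqrt{\tr(C_xC_{\hat\theta})}=\mathbb E\big[\|C_x^{1/2}(\hat\theta-\mu_{\hat\theta})\|^2\big]^{1/2}$. Chaining $\hat\theta\to(\mu_\ast,\alpha_\ast)\leftarrow\hat\theta_{\tilde q}\approx\hat\theta_q$ with your Step 1 finishes the proof. This uses only uniqueness of the saddle point, not a uniform strong-convexity modulus.
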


This result is equivalent to Theorem~\ref{th:scalar-minmax} presented below. 

\section{Non-Gaussian CGMT and Implications for ERM Asymptotics}\label{sse:CGMT}

The Convex Gaussian Min--Max Theorem (CGMT) transforms a min--max problem involving a random Gaussian matrix $A\in\mathbb{R}^{p\times n}$ into an associated \emph{auxiliary problem} whose randomness depends only on two Gaussian vectors $g\in\mathbb{R}^p$ and $h\in\mathbb{R}^n$ \citep{GOR88}. Originally used to study the singular values of Gaussian matrices \citep{Vershynin2012,OymakTropp2018}, this tool has become a cornerstone for analyzing high-dimensional risk minimization, beginning with \citep{Stojnic2013Lasso,thrampoulidis2014gmt}.

The first step is to rewrite the ERM problem~\eqref{eq:erm} as a constrained optimization problem with an objective linear in the random design matrix $X$:
\[
\min_{\theta\in\mathbb{R}^p,\ v\in\mathbb{R}^n}
\left\{
\frac{1}{n}\mathcal{L}_y(v) + \rho(\theta)
\right\}
\quad\text{s.t.}\quad
v = X^\top\theta.
\]
Introducing a dual variable $w\in\mathbb{R}^n$ for the constraint yields the Lagrangian:
\[
\mathcal{L}(\theta,v,w)
=
w^\top(X^\top\theta - v)
+ \frac{1}{n}\mathcal{L}_y(v) + \rho(\theta).
\]
Minimization over $v$ and maximization over $w$ commute, leading to the following min--max formulation suited for an application of the CGMT:
\begin{align}\label{eq:def_theta_0}
\hat\theta
\in \argmin_{\theta\in\mathbb{R}^p}\max_{w\in \mathbb{R}^n}
\left\{
\theta^\top X w - \frac{1}{n}\mathcal{L}_y^*(n w) + \rho(\theta)
\right\},
\end{align}
where $\mathcal{L}_y^*$ denotes the Fenchel conjugate of $\mathcal{L}_y$, i.e., $\mathcal{L}_y^*(w) = \sup_{v\in\mathbb{R}^n}\{v^\top w - \mathcal{L}_y(v)\}$.

In the claim below, we isolate the specific properties of the generalized CGMT required for our derivations (analogous to \citet[Theorem~4]{thrampoulidis2014gmt}).

\begin{claim}[\textbf{Generalized CGMT for concentrated column designs}]\label{cl:g-cgmt}
Let $C_1\subset\mathbb{R}^p$ and $C_2\subset\mathbb{R}^n$ be convex compact sets. For each $y\in \mathcal Y$, let $\psi_y:\mathbb{R}^p\times\mathbb{R}^n\to\mathbb{R}$ be a function that is convex in its first argument and concave in its second.
For a matrix $A\in\mathbb{R}^{p\times n}$, vectors $\mu, a\in\mathbb{R}^p$, and $h\in\mathbb{R}^n$, we define the primal and auxiliary value functions:
\begin{align*}
\Phi(A)
&:=\min_{\theta\in C_1} \mathcal{V}_\Phi(\theta,A),
\qquad
\phi(a,h)
:=\min_{\theta\in C_1} \mathcal{V}_\phi(\theta,a,h),
\end{align*}
where the objectives are given by:
\begin{align*}
\mathcal{V}_\Phi(\theta,A)
&:=\max_{w\in C_2}\big\{\theta^\top A w+ \mu^\top A w + \psi_y(\theta,w)\big\},\\
\mathcal{V}_\phi(\theta,a,h)
&:=\max_{w\in C_2}\Big\{w^\top h\,\|C_x^{\frac{1}{2}}\theta\| + \|w\|\,\theta^\top a\\
&\hspace{2cm}
+ \mu^\top a\,\|w\| + \psi_y(\theta,w)\Big\}.
\end{align*}
Let $\theta_\Phi(A)$ and $\theta_\phi(a,h)$ denote the minimizers of $\mathcal{V}_\Phi(\cdot,A)$ and $\mathcal{V}_\phi(\cdot,a,h)$, respectively.

With our notations and under Assumption~\ref{ass:design}, if we consider $g\sim N(0, I_n)$, independent with $(X,x)$ and assume:
\begin{enumerate}
\item\label{itm:cond_claim_a}
$\forall\varepsilon\!>\!0\!:$ $
\mathbb{P}\!\left(\big|\phi(x,g)-\mathbb{E}[\phi(x,g)]\big|\ge \varepsilon\right)\!\xrightarrow[n\to\infty]{}0$
\item\label{itm:cond_claim_b}$\forall \varepsilon\!>\!0\!:$ $\mathbb{P}\!\left(\big|\|\theta_\phi(x,g)\|\!-\!\mathbb{E}\big[\|\theta_\phi(x,g)\|\big]\big|\ge \varepsilon\right)\!\!\underset{n\to\infty}{\to}0$
\item\label{itm:cond_claim_c}
There exists $\tau>0$, independent of $p$ and $n$, such that for $n$ large enough, and $\theta\in\mathbb{R}^p$:
\[
\mathcal{V}_\phi(\theta;x,h)
\;\ge\;
\phi(x,h)
+\tau\big(\|\theta\|-\|\theta_\phi(x,g)\|\big)^2,
\quad a.s.;
\]
\end{enumerate}
then, $\forall \varepsilon>0$, $\mathbb P(|\Phi(X) - \phi(x,g)|\geq \varepsilon) \xrightarrow[n\to\infty]{}0$ and:
\[
\mathbb{P}\!\left(\big|\|\theta_\Phi(X)\|-\mathbb{E}\big[\|\theta_\phi(x,g)\|\big]\big|\ge \varepsilon\right)
\xrightarrow[n\to\infty]{}0.
\]
\end{claim}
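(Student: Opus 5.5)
The plan follows the classical CGMT argument of \citet[Theorem~4]{thrampoulidis2014gmt}, with Gordon's comparison replaced by a Gaussian-surrogate step that we justify only heuristically (hence the status of ``claim''). First, condition on the columns of $X$ and write $X = M + \tilde X$, with $M=\mathbb E[X]$ and $\tilde X$ centered. The centered part will be compared with a Gaussian matrix $G C_x^{1/2}$-type design sharing its second moments, i.e.\ columns $C_x^{1/2} g_i$. The heuristic justification is the concentration of Lipschitz functionals (Assumption~\ref{ass:design}) together with the fact that $\theta\mapsto\theta^\top \tilde X w$ is a bilinear form. Only its first two moments matter at the scale at which $\Phi$ fluctuates, because under Assumption~\ref{ass:design} the value $\Phi(X)$ is a Lipschitz function of $X$ on the compact sets $C_1,C_2$ and therefore concentrates.

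With this Gaussian surrogate, apply Gordon's inequality to $\theta^\top C_x^{1/2} G w$. This replaces it by $\|C_x^{1/2}\theta\|\, w^\top h + \|w\|\, \theta^\top C_x^{1/2} g'$. The vector $C_x^{1/2}g'$ is then identified, together with the mean term $\mu^\top A w$, with the single test vector $x$ appearing in $\mathcal V_\phi$. This is where the ``concentrated column design'' enters: the auxiliary randomness only needs to reproduce the law of the one-dimensional projections $\theta^\top x$ and $\mu^\top x$. The convex--concave structure gives the two one-sided comparisons. One is $\mathbb P(\Phi<c)\le 2\mathbb P(\phi\le c)$. The other is the reverse inequality, obtained via the minimax theorem on the compact convex sets $C_1,C_2$.

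Combining the two bounds with condition~\ref{itm:cond_claim_a} yields $\Phi(X)\to\mathbb E[\phi(x,g)]$ in probability.

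For the minimizer norm, use the standard localization argument. Let $S_\varepsilon=\{\theta\in C_1:|\|\theta\|-\mathbb E\|\theta_\phi\||\ge\varepsilon\}$. By condition~\ref{itm:cond_claim_c} and condition~\ref{itm:cond_claim_b}, the auxiliary value restricted to $S_\varepsilon$ exceeds $\phi+\tau\varepsilon^2/4$ with high probability. The one-sided comparison is then applied to the restricted primal problem; this step needs no convexity in $\theta$, since $S_\varepsilon$ is not convex and only the Gordon direction is used. It shows that the primal restricted value also exceeds $\Phi+\tau\varepsilon^2/8$ with high probability. Hence $\theta_\Phi(X)\notin S_\varepsilon$.

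The main obstacle is the first step: justifying that the non-Gaussian bilinear term can be replaced by the Gaussian one. Lindeberg or Slepian interpolation would need control of third derivatives of a smoothed min--max value uniformly over $C_1\times C_2$. This control fails precisely when the optimizer aligns with non-Gaussian directions of $x$. For this reason the auxiliary problem keeps the actual $x$ rather than a Gaussian: the dependence on $x$ is through $\theta^\top x$, and we would try to argue that this correctly captures the non-Gaussian one-dimensional marginals. We leave this step heuristic, consistent with the paper's presentation of the result as a claim.
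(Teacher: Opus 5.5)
\textbf{There is no paper proof to compare against.} The paper explicitly calls this statement a conjecture, and Appendix~\ref{app:proof_cgmt} only uses it. Your second half is the standard CGMT localization: you restrict to the closed, non-convex set $S_\varepsilon$, use only Gordon's direction there, and invoke conditions~\ref{itm:cond_claim_b}--\ref{itm:cond_claim_c}. That part is fine once the two comparison inequalities are granted.

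\textbf{The gap is the comparison step, and your heuristic for it points the wrong way.} Concentration of $\Phi(X)$ around $\mathbb E[\Phi(X)]$ says nothing about whether $\mathbb E[\Phi(X)]$ depends only on the first two moments of $x$. If you replace the whole centered design $\tilde X$ by $C_x^{1/2}G$, then, even granting Gordon, the auxiliary problem you obtain has purely Gaussian randomness. That is the prediction for the moment-matched Gaussian design, which Figure~\ref{fig:non_gaussian_scores} shows to be wrong. Your subsequent ``identification'' of $C_x^{1/2}g'$ with $x$ follows from no inequality. It re-inserts by hand the non-Gaussian information the first step discarded, and it changes the auxiliary value whenever $\mu^\top x$ is non-Gaussian. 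Your last paragraph already notes that interpolation fails along non-Gaussian directions, so the first step cannot be a full Gaussianization. The same identification also turns $\mu^\top Aw=\sum_i w_i\,\mu^\top x_i$ into $\|w\|\,\mu^\top a$, which no comparison does: the former can be of order $\sqrt n\,\|w\|$, the latter is $O(\|w\|)$. The paper itself uses this term as $(\mu^\top x)\,w^\top\mathbf 1$ in Proposition~\ref{pro:resolution_auxiliary_pb}.

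\textbf{Second, independent gap: label dependence.} $\psi_y$ depends on the training labels $Y$, which are correlated with $X$ (e.g.\ $y=\theta^{*\top}x+\varepsilon$, or class labels). Gordon's inequality requires the Gaussian matrix to be independent of the rest of the objective. Centering by the unconditional mean $M=\mathbb E[X]$ leaves $\tilde X$ dependent on $Y$, so Gordon cannot be applied even to your Gaussian surrogate. This label dependence is exactly what the Remark after the claim singles out.

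\textbf{The missing idea is a \emph{conditional} comparison.} Condition on $Y$ and on the low-dimensional projections of the $x_i$ that carry both the $X$--$Y$ dependence and the possible non-Gaussianity, in particular $(\mu^\top x_i)_{i\le n}$. Keep these exactly, and apply a Gaussian comparison only to the residual part of the design seen by the centered fluctuation $\theta$. That is what produces the separate terms $\|C_x^{1/2}\theta\|\,w^\top h$ and $\|w\|\,\theta^\top a$ while leaving the $\mu$-term non-Gaussian. Your opening ``condition on the columns of $X$'' gestures at this, but as written it conditions on everything. The open step is showing that this residual is conditionally close to Gaussian, uniformly enough over $C_1\times C_2$ to pass to the min--max value; a proof would have to start there.
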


\begin{remark}
Recent CGMT results such as those in \citep{AkhtiamovGhaneVarmaHassibiBosch2024NovelGMT,pmlr-v258-bosch25a}, while treating multiclass settings, do not explicitly model the dependence of $\psi$ on the label $y$, nor do they incorporate the deterministic shift parameter $\mu \in \mathbb R^p$. This distinction is not merely anecdotal; a central component of our approach is to leverage the concentration of $\hat \theta$ via the decomposition:
\begin{align}\label{eq:decomp_theta}
    \hat\theta=\mu_{\hat\theta}+\theta_0,
\end{align}
and to analyze the minimization with respect to the fluctuation $\theta_0$ and the mean $\mu_{\hat\theta}$ separately. When $\mu_{\hat\theta}$ is fixed, we apply our claim with $\mu = C_x^{\frac12}\mu_{\hat\theta}$, $\psi_y(\theta_0,w) = \frac{1}{n}\mathcal{L}_y^*(n w)+ \rho(\mu_{\hat\theta} + C_x^{-\frac12}\theta_0)$, and replace the random matrix $X$ with the whitened version $C_x^{-\frac12}X$. This yields theoretical predictions for quantities such as $\|C_x^{1/2}\theta_0\|$, which in our setting converges to $\sqrt{\tr(C_x C_{\hat\theta})}$. In contrast, standard CGMT approaches would focus on estimating the total norm $\|C_x^{1/2}(\theta_0 + \mu_{\hat \theta})\|$, which is less amenable to the subsequent convex analysis machinery we employ.
\end{remark}

One can verify that Claim~\ref{cl:g-cgmt} applies to \eqref{eq:def_theta_0} under decomposition~\eqref{eq:decomp_theta}, as conditions~\ref{itm:cond_claim_a}--\ref{itm:cond_claim_c} are natural consequences of Assumptions~\ref{ass:design} and~\ref{ass:regularity} (with Theorem~\ref{the:concentration_theta} directly supporting condition~\ref{itm:cond_claim_b}). The convex analysis derivation of Claim~\ref{cl:g-cgmt} (which is still only a conjecture), detailed in Appendix~\ref{app:proof_cgmt}, yields the result below. 

\begin{theorem}[\textbf{Min--max formulation of limiting asymptotics}]\label{th:scalar-minmax}
Under Assumptions~\ref{ass:design}, \ref{ass:regularity}, and~\ref{ass:regularizer_k_diff}:
\[
\|\mu_{\hat\theta} - \mu_\ast\| \xrightarrow[n\to\infty]{} 0
\quad\text{and}\quad
\tr(C_x C_{\hat\theta}) - \alpha_\ast^2 \xrightarrow[n\to\infty]{} 0,
\]
where $(\mu_\ast,\alpha_\ast)$ are the saddle-point solutions to the min--max problem in~\eqref{eq:def_alpha_mu_star}.
\end{theorem}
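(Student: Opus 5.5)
We will derive the result from Claim~\ref{cl:g-cgmt} applied to the decomposition \eqref{eq:decomp_theta}, followed by a scalarization of the auxiliary problem.

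\textbf{Step 1: reduce to a problem in the fluctuation.} Using Theorem~\ref{the:concentration_theta}, we restrict \eqref{eq:def_theta_0} to compact convex sets $C_1,C_2$ that contain $\hat\theta$ and the optimal dual $w$ with overwhelming probability. Since $\|C_{\hat\theta}\|=O(1/n)$ by Corollary~\ref{cor:cov_theta_bounded}, and $w=\mathcal L'_y(X^\top\hat\theta)/n$ has norm $O(1/\sqrt n)$, balls of radius $O(1)$ suffice for both.

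We then fix a deterministic $\mu$, write $\theta=\mu+\theta_0$, and whiten. We replace $\rho$ by its quadratic surrogate $\rho_q$. This replacement is justified by Proposition~\ref{pro:approximation_regularizer} together with Lemma~\ref{lem:only_H_0_relevent}, which show that only $H_\rho$ and $\nabla\rho(\mu)$ survive at the $O(1)$ precision. Finally, we apply Claim~\ref{cl:g-cgmt} with $\psi_y(\theta_0,w)=-\frac1n\mathcal L_y^*(nw)+\rho(\mu+C_x^{-1/2}\theta_0)$. Conditions~\ref{itm:cond_claim_a}--\ref{itm:cond_claim_c} follow from Assumption~\ref{ass:design} and from the $\kappa$-strong convexity in Assumption~\ref{ass:regularity}, which yields the quadratic growth $\tau$.

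\textbf{Step 2: scalarize the auxiliary problem.} We set $\alpha=\|C_x^{1/2}\theta_0\|$ and $\beta=\sqrt n\|w\|$. We optimize over the direction of $w$ for fixed $\beta$. After the change of variable and a square-completion trick, $\|h\|\alpha\,\cdot$ becomes $\min_{\kappa>0}\{\cdot\}$, and the sum over samples becomes an empirical average. This average concentrates, by Assumption~\ref{ass:design} and a law of large numbers, onto $\mathbb E[e_{\mathcal L_y}(\mu^\top x+\alpha z;\kappa)]$, where the Gaussian $z$ comes from $h$.

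For the minimization over $\theta_0$ at fixed $\alpha$, we enforce the constraint through a Lagrange multiplier $\nu/2$ on $\alpha^2$. This gives a quadratic problem in $\theta_0$ with matrix $\nu C_x+H_\rho$. The linear term $\beta\, a$ in that problem has covariance $C_x/n$, so its optimal value concentrates to $-\frac{\beta^2}{2n}\tr(C_x(\nu C_x+H_\rho)^{-1})$. Collecting the terms produces $\mathcal J$ of \eqref{eq:J-tilde}, and the auxiliary value concentrates to $\min_{\alpha,\kappa}\max_{\beta,\nu}\mathcal J(\mu,\cdot)$.

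\textbf{Step 3: identify the mean and the fluctuation size.} By the claim, $\|C_x^{1/2}\theta_0\|$ concentrates around the optimal $\alpha$ for the given $\mu$. To identify $\mu_{\hat\theta}$, we use that $\hat\theta$ minimizes the primal over all $\mu$, so the primal value at $\mu$ tracks $\min_{\alpha,\kappa}\max\mathcal J(\mu,\cdot)$ uniformly in $\mu$ on a compact set. A uniform version of the claim follows by a union bound over an $\varepsilon$-net, using the Lipschitz dependence in $\mu$. Combined with the strong convexity of $\mu\mapsto\min\max\mathcal J$, which is inherited from $\rho$'s $\kappa$-convexity, this forces $\mathbb E\hat\theta$ to be close to $\mu_\ast$.

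Concentration of $\|C_x^{1/2}\theta_0\|$ then gives $\tr(C_xC_{\hat\theta})=\mathbb E\|C_x^{1/2}\theta_0\|^2$ up to $O(1/n)$ variance. Hence $\tr(C_xC_{\hat\theta})\to\alpha_\ast^2$.

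\textbf{Main obstacle.} The hardest step is Step~3: exchanging $\min_\mu$ with the CGMT limit and showing that the minimizing $\mu$ of the auxiliary functional coincides with $\mathbb E\hat\theta$ rather than with a random minimizer. I would first try the strong-convexity localization argument. Its key ingredient is showing that any $\mu$ with $\|\mu-\mu_\ast\|>\varepsilon$ raises the auxiliary value by at least $c\varepsilon^2$.
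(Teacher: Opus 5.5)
\textbf{Step~3 has a genuine gap.} Your Steps~1--2 follow the paper: Claim~\ref{cl:g-cgmt} is applied to the shifted, whitened problem, and the auxiliary problem is then scalarized as in Proposition~\ref{pro:resolution_auxiliary_pb}, Lemma~\ref{lem:quad-ellipsoid-duality} and Proposition~\ref{pro:estimation_alpha}. The problem is Step~3, where the mean is identified.

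Once $\theta_0$ ranges over a ball containing $\hat\theta-\mu$, as your Step~1 arranges, the reparametrization $\theta=\mu+\theta_0$ is vacuous. You get $\min_{\theta_0}F_{X,Y}(\mu+\theta_0)=\min_\theta F_{X,Y}(\theta)$ for every $\mu$ in your compact set, so ``the primal value at $\mu$'' does not depend on $\mu$.
\begin{itemize}
\item This is exact for quadratic $\rho$, where $\rho_q=\rho$.
\item For general $\rho$, the only $\mu$-dependence comes from a surrogate expanded at $\mu$. Proposition~\ref{pro:approximation_regularizer} justifies that surrogate only at $\mu=\mu_{\hat\theta}$, where $\theta_0$ is the centered, concentrated fluctuation.
\end{itemize}

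Yet you also need $\mu\mapsto\min_{\alpha,\kappa}\max_{\beta,\nu}\mathcal J(\mu,\cdot)$ to increase by $c\varepsilon^2$ away from $\mu_\ast$. A constant function and a strongly convex one cannot track each other uniformly in $\mu$. The reason is that the auxiliary problem Gaussianizes only the $\theta_0$-part of $\theta^\top Xw$ and keeps $\mu^\top x$ with its true law. That is only sensible when $\theta_0$ is the mean-zero fluctuation, i.e.\ at $\mu=\mu_{\hat\theta}$, which is the only point where the paper invokes the Claim.

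A concrete symptom: at $\mu\neq\mu_{\hat\theta}$ one has $\|C_x^{1/2}(\hat\theta-\mu)\|^2\approx\tr(C_xC_{\hat\theta})+(\mu_{\hat\theta}-\mu)^\top C_x(\mu_{\hat\theta}-\mu)$. This is not the optimal $\alpha^2$ of $\mathcal J(\mu,\cdot)$; already for ridge the latter is $\nu^2A(\Delta(\mu)+\sigma_\varepsilon^2)/(1-\nu^2A)$.

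Separately, the union bound cannot deliver uniformity even if pointwise tracking held. An $\varepsilon$-net of a ball in $\mathbb R^p$ with $p\asymp n$ has $e^{\Theta(n)}$ points, while the Claim only gives $o(1)$ failure probabilities.

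\textbf{The missing idea} is a variational characterization of $\mathbb E[\hat\theta]$ in which the fluctuation is \emph{frozen} rather than re-optimized. Hold $\hat\theta_0=\hat\theta-\mu_{\hat\theta}$ fixed. Convexity and $\nabla F_{X,Y}(\hat\theta)=0$ give $F_{X,Y}(\hat\theta_0+m)\ge F_{X,Y}(\hat\theta)$ pointwise for every deterministic $m$. Hence, by Lemma~\ref{lem:mean_implicit_pb} and Corollary~\ref{cor:first_approx_mu_theta}, $\mu_{\hat\theta}=\arg\min_{\mu}\mathbb E[F_{X,Y}(\hat\theta_0+\mu)]$.

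This objective is deterministic, genuinely $\mu$-dependent and $\kappa$-strongly convex, and its minimizer is the expectation by construction. That resolves exactly the ``expected versus random minimizer'' issue you flag. The paper then compares it with $\mu\mapsto\mathbb E[\phi_\mu(x,g)]$, which is minimized at $\mu_\ast$. It passes from convergence of objectives to convergence of minimizers via strong convexity (Lemma~\ref{lem:minimum_variablility}) or an epi-convergence argument. Since both objectives are deterministic, no net is needed. The $\alpha$-statement then comes from the Claim at the single point $\mu=\mu_{\hat\theta}$, Proposition~\ref{pro:estimation_alpha}, and $\mu_{\hat\theta}\approx\mu_\ast$.
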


In plain words, Theorem~\ref{th:scalar-minmax} states that the high-dimensional ERM estimator can be ``summarized'' asymptotically by two deterministic quantities: $\mu_* \approx \mathbb E[\hat\theta]$ that captures the deterministic informative component of the estimator and $\alpha_*^2 \approx \mathrm{tr}(C_x C_{\hat\theta})$ that measures the scale of its random fluctuations.
These quantities can be explicitly obtained from a deterministic saddle-point problem depending only on the data model $(x,y)$, the loss, and the local quadratic structure of the regularizer. 
In this sense, the random high-dimensional optimization problem \eqref{eq:erm} admits  a low-dimensional deterministic characterization.

A central consequence, developed further in Section~\ref{sse:gaussian_theta}, is that the test score $x^\top \hat\theta$ decomposes asymptotically into a structured component $x^\top \mu_*$ and an independent Gaussian fluctuation of variance $\alpha_*^2$. 
Gaussian universality therefore holds \emph{if and only if} the informative projection $x^\top \mu_*$ itself is asymptotically Gaussian.

\medskip

A key novelty relative to previous analyses is the explicit role of the vector $\mu_\ast$, which captures the (generally non-zero) asymptotic mean of $\hat\theta$ for general losses and data distributions. Theorem~\ref{th:scalar-minmax} reveals that the distribution of $x$ influences the asymptotics specifically through the Gaussian-convolved variable $\mu^\top x + \alpha z$ (see again Section~\ref{sse:gaussian_theta}).

For theoretical analysis and numerical computation, it is convenient to reformulate the min--max problem as a fixed-point system. For any $y\in \mathcal Y$, we define the mapping $\xi_y:\mathbb{R}\times\mathbb{R}_+\to\mathbb{R}$ by:
\begin{align}\label{eq:def_r}
\xi_y(u,\kappa)
:=
u - \prox_{\kappa\mathcal{L}_y}(u),
\end{align}
where the proximal operator is given by $\prox_{\kappa\mathcal{L}_y}(u) := \argmin_{w\in\mathbb{R}} \{ \kappa \mathcal{L}_y(w) + \frac{1}{2}(u-w)^2 \}$.
We further define the resolvent $Q(\nu) := (\nu C_x + H_\rho)^{-1}$ and the functional $A(\nu) := \frac{1}{n}\tr(C_x Q(\nu) C_x Q(\nu))$.
\begin{figure}[t]
    \centering
    \includegraphics[width=\linewidth]{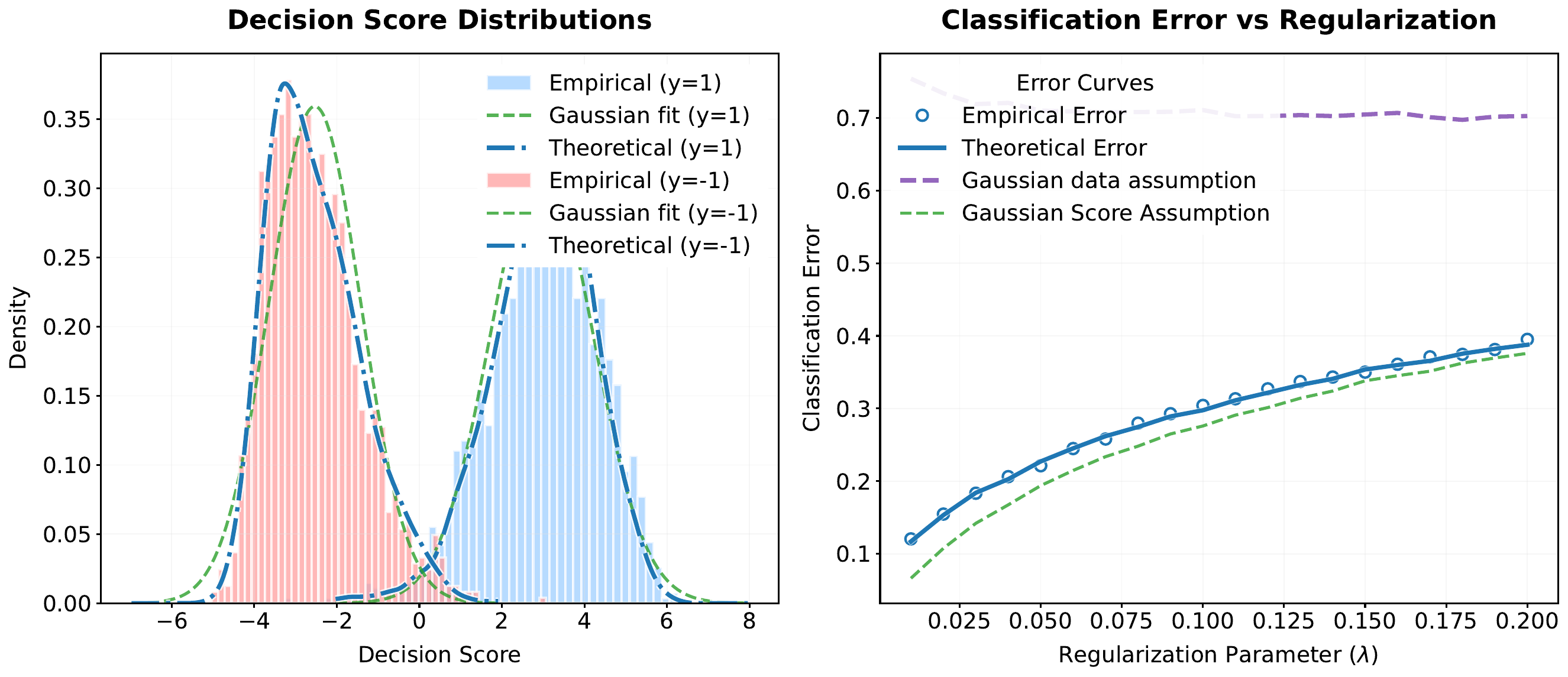}
    \caption{
    Universality breakdown on MNIST data. Setting described in Appendix~\ref{app:mnist_nongaussian}
    \emph{Left:} Empirical histograms of decision scores for Class 0 (\textbf{\textcolor[HTML]{99CCFF}{light blue}}) and Class 1 (\textbf{\textcolor[HTML]{FF9999}{light red}}) of $\hat\theta^\top x$, compared with a Gaussian approximation of matching mean and variance (\textbf{\textcolor[HTML]{2CA02C}{green}} dashed) and with the corrected theoretical density(dashed \textbf{\textcolor[HTML]{1F77B4}{blue}}).
    \emph{Right:} Generalization performance.
    As expected, predictions based on Gaussian score universality (\textbf{\textcolor[HTML]{2CA02C}{green}} dashed) fail to match empirical results.
    }
    \label{fig:mnist_nongaussian}
\end{figure}
\begin{theorem}[\textbf{Fixed-point formulation}]\label{the:se-system}
Under Assumptions~\ref{ass:regularity} and~\ref{ass:regularizer_k_diff}, the parameters $(\mu_\ast,\alpha_\ast)$ defined in~\eqref{eq:def_alpha_mu_star} satisfy the following system of equations:
\begin{align*}
\kappa &= \frac{1}{n}\tr\big(C_x Q(\nu)\big), \\
\nu &= \frac{1}{\alpha_\ast\kappa}\mathbb{E}\!\left[z\,\xi_y\big(\mu_\ast^\top x+\alpha_\ast z,\kappa\big)\right],
\end{align*}
\begin{align*}
\alpha_\ast^2 &= \frac{A(\nu)}{\kappa^2}\,\mathbb{E}\!\left[\xi_y\big(\mu_\ast^\top x+\alpha_\ast z,\kappa\big)^2\right],\\
0 &= \nabla\rho(\mu_\ast) + \frac{1}{\kappa}\mathbb{E}\!\left[x\,\xi_y\big(\mu_\ast^\top x+\alpha_\ast z,\kappa\big)\right].
\end{align*}
The expectations are taken with respect to $x, y$ and an independent $z \sim \mathcal{N}(0,1)$.
\end{theorem}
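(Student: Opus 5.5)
The plan is to write the first-order (KKT) conditions of the saddle-point problem \eqref{eq:def_alpha_mu_star} for $\mathcal J$ in \eqref{eq:J-tilde}, and then rewrite each stationarity equation with standard Moreau-envelope identities. The tools are the following. For fixed $y$, $e_{\mathcal L_y}(\cdot;\kappa)$ is $\mathcal C^1$ with
\[
\partial_u e_{\mathcal L_y}(u;\kappa)=\xi_y(u,\kappa)/\kappa,
\qquad
\partial_\kappa e_{\mathcal L_y}(u;\kappa)=-\xi_y(u,\kappa)^2/(2\kappa^2).
\]
Both follow from the envelope theorem applied to the definition of $e_{\mathcal L_y}$, with minimizer $v=\prox_{\kappa\mathcal L_y}(u)$. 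By Assumption~\ref{ass:regularity}, $\mathcal L_y'$ is $\lambda$-Lipschitz, so $\xi_y(u,\kappa)=\kappa\mathcal L_y'(\prox_{\kappa\mathcal L_y}(u))$ grows at most linearly in $u$. The moments of $x$ are bounded under Assumption~\ref{ass:design}. Together these justify differentiating under $\mathbb E$ by dominated convergence.

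First I take the inner maximization over $(\beta,\nu)$ and the minimization over $(\kappa,\alpha,\mu)$ at an interior saddle point; this is the step I regard as delicate. Since all variables are constrained to be positive, I need the optimum to lie in the interior. For $\beta$, the dependence $\beta^2\big(\kappa-\frac1n\tr(C_xQ(\nu))\big)/2$ is quadratic. An interior maximum with $\beta>0$ therefore forces the coefficient to vanish, which gives $\kappa=\frac1n\tr(C_xQ(\nu))$.

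This degeneracy in $\beta$ (the value is $0$ or $+\infty$) means the min--max is really a constrained problem. I would treat $\beta^2$ as a Lagrange multiplier for the constraint $\kappa=\frac1n\tr(C_xQ(\nu))$ and write the Lagrangian stationarity. The parameter $\beta$ is then fixed by the remaining equations rather than by its own derivative.

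Next come the $\nu$ and $\kappa$ derivatives. Using $\partial_\nu Q=-QC_xQ$, I get $\partial_\nu\frac1n\tr(C_xQ)=-A(\nu)$. The $\nu$-equation then reads $-\alpha^2/2+\frac{\beta^2}{2}A(\nu)=0$, so $\alpha^2=\beta^2A(\nu)$. The $\kappa$-equation is $\frac{\beta^2}{2}-\mathbb E[\xi_y^2]/(2\kappa^2)=0$, so $\beta^2=\mathbb E[\xi_y^2]/\kappa^2$. Combining the two yields the third equation, $\alpha_\ast^2=\frac{A(\nu)}{\kappa^2}\mathbb E[\xi_y^2]$.

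The $\alpha$-derivative gives $\mathbb E[z\,\xi_y(\mu^\top x+\alpha z,\kappa)]/\kappa-\nu\alpha=0$, which is the second equation. Finally, the gradient in $\mu$ gives $\nabla\rho(\mu)+\frac1\kappa\mathbb E[x\,\xi_y(\cdot)]=0$, which is the last equation. Note that the $\beta^2$ term does not depend on $\mu$ once $H_\rho$ replaces $\nabla^2\rho(\mu)$ (Lemma~\ref{lem:only_H_0_relevent}).

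The main obstacle is justifying that the saddle point exists, is attained in the interior, and satisfies these first-order conditions. For the $\alpha$-equation I would use Stein's lemma to see $\mathbb E[z\xi_y]=\alpha\,\mathbb E[\partial_u\xi_y]\ge0$, confirming that $\nu>0$ is consistent. For existence I would argue as follows. $\mathcal J$ is convex in $(\mu,\alpha)$: the Moreau envelope is jointly convex in $(u,\kappa)$ and composed with an affine map, and $\rho$ is strongly convex by Assumption~\ref{ass:regularity}. The $-\nu\alpha^2/2$ term together with the constraint produces the correct concave structure in $\nu$. Coercivity would come from the strong convexity $\kappa I_p$ of $\rho$ and from $\frac1n\tr(C_xQ(\nu))\to0$ as $\nu\to\infty$. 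With these, a Sion-type minimax argument together with the envelope theorem gives stationarity at the saddle point.
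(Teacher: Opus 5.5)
Your proposal is correct and follows the paper's proof essentially step for step. The paper likewise writes the stationarity conditions of $\mathcal J$ in $(\mu,\alpha,\kappa,\beta,\nu)$ using $\partial_u e_{\mathcal L_y}=\xi_y/\kappa$, $\partial_\kappa e_{\mathcal L_y}=-\xi_y^2/(2\kappa^2)$ and $\partial_\nu Q=-QC_xQ$. It reads $\kappa=\frac1n\tr(C_xQ(\nu))$ off the $\beta$-equation using $\beta>0$, and combines $\alpha^2=\beta^2A(\nu)$ with $\beta^2=\mathbb E[\xi_y^2]/\kappa^2$, taking existence and interiority of the saddle point for granted just as you do.

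The one caveat concerns your optional existence sketch, which goes beyond the paper. For fixed $\nu>0$ the term $-\nu\alpha^2/2$ is concave in $\alpha$; for the square loss, $\mathcal J$ is even constant in $\alpha$ at $\nu=1/(1+\kappa)$. So $\mathcal J$ is not convex in $(\mu,\alpha)$ as you claim, and any usable convexity in $\alpha$ would have to come from first maximizing over $\nu$, in the spirit of Lemma~\ref{lem:quad-ellipsoid-duality}.
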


In the specific case of squared loss with a linear model, Theorem~\ref{the:se-system} reduces to the two-equation system derived in \citet{elkaroui2013robust,bean2013optimal,donoho2016robust} (see Section~\ref{sec:warmup_non_universality_in_classification_of_linear_factor_models}). More broadly, it generalizes fixed-point characterizations previously established for regularized regression \citep{thrampoulidis2015regls} and high-dimensional classification in mixture models \citep{mai2020highdim,mai2025breakdown}.

\section{Performance Universality of Ridge Regression under Linear Model}\label{sec:perf_univ_with_ell_2_loss}

In this section, we apply our framework to the specific case of squared loss with a linear data model:
\begin{align}\label{eq:linear_model_ridge_reg}
    \mathcal{L}_y(t)= \frac{1}{2}(t - y)^2,
    \qquad \text{where} \quad y = \theta^{*\,\top}x + \varepsilon.
\end{align}
Here, $\theta^*\in\mathbb{R}^p$ is a deterministic signal vector independent of $x$, and $\varepsilon\in \mathbb R$ is a random noise variable independent of $x$ satisfying:
\begin{align}\label{eq:ridge_simpl_xi}
    \mathbb{E}[\varepsilon]=0
    \qquad\text{and}\qquad
    \sigma_\varepsilon^2 := \mathbb{E}[\varepsilon^2].
\end{align}
This setting mirrors the one analyzed in \citet{dobriban2018highdim}, with the generalization that $X$ need not be a linear transformation of a random matrix with independent entries.
Following the notation of the previous section, the proximal operator for the squared loss admits a closed-form expression, leading to:
\[
\xi_y(u,\kappa) = \frac{\kappa}{1+\kappa}(u - y).
\]

This greatly simplifies the fixed-point equations in Theorem~\ref{the:se-system}. Specifically, the relation between $\nu$ and $\kappa$ becomes explicit, yielding a single equation for $\nu_\ast$:
\begin{align}\label{eq:nu_rmt}
    \nu_\ast = \frac{1}{1+\kappa_\ast}
    = \frac{1}{\,1 + \tfrac{1}{n}\,\tr\!\big(C_x\,Q(\nu_\ast)\big)}.
\end{align}
Note that this identity is indeed satisfied for the $\ell_2$ norm on Figure~\ref{fig:alpha_nu_kappa}. 
This is the classical fixed-point equation from Random Matrix Theory, known to possess a unique solution \citep{silverstein1995empirical}.
Leveraging the quadratic universality established in Theorem~\ref{the:univ_reg}, we adopt the quadratic formulation for $\rho$ below to derive a structural result.


\begin{corollary}[\textbf{Generalization error of Ridge regression}]\label{cor:ridge_regression}
    Consider the setting~\eqref{eq:linear_model_ridge_reg} with a quadratic regularizer $\rho(\theta) = a^\top \theta + \frac{1}{2}\theta^\top H\theta$, where $a\in \mathbb R^p$ and $H\in \mathbb R^{p\times p}$ is positive symmetric. Under Assumptions~\ref{ass:design}--\ref{ass:regularizer_k_diff}, let $\nu_\ast$ be the unique solution to \Cref{eq:nu_rmt}. 
    Then:
    \begin{align*}
        \mathbb E \left[\mathcal L_y(x^\top \hat \theta)\right]
        -
         \frac{\sigma_\varepsilon^2 + \Delta(\nu_\ast) }{1-\nu_\ast^2 A(\nu_\ast)}\xrightarrow[n\to\infty]{}0,
    \end{align*}
    where, with notation $\Sigma_x := C_x + \mu_x\mu_x^\top $, we defined:
    \begin{align*}
        \forall \nu\in \mathbb R:\qquad
        \mu(\nu) &:= Q(\nu)\left(\nu C_x \theta^* - a\right), \\
        \Delta(\nu) &:= (\mu(\nu)-\theta^*)^\top \Sigma_x (\mu(\nu)-\theta^*).
    \end{align*}
\end{corollary}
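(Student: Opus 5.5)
The plan is to specialize the fixed-point system of Theorem~\ref{the:se-system} to the squared loss and solve it in closed form. Then I would compute the test risk directly from the first two moments of $\hat\theta$, so no score universality is needed. By Theorem~\ref{the:univ_reg} (or directly, since $\rho$ is already quadratic), $H_\rho = H$ and $\nabla\rho(\mu)=a+H\mu$. With $\xi_y(u,\kappa)=\frac{\kappa}{1+\kappa}(u-y)=\kappa\nu_\ast(u-y)$, where $\nu_\ast=1/(1+\kappa_\ast)$ and $\kappa_\ast=\frac1n\tr(C_xQ(\nu_\ast))$ as in \eqref{eq:nu_rmt}, every expectation in Theorem~\ref{the:se-system} becomes an explicit second-moment computation in $(x,\varepsilon,z)$.

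\emph{Step 1 (risk decomposition, exact at finite $n$).} Since the test pair $(x,\varepsilon)$ is independent of $\hat\theta$, and $\varepsilon$ is centered and independent of $x$, I would write
\[
\mathbb E[(x^\top\hat\theta-y)^2]=\mathbb E[(x^\top(\hat\theta-\theta^*))^2]+\sigma_\varepsilon^2 .
\]
Conditioning on $\hat\theta$ then gives
\[
\mathbb E[(x^\top\hat\theta-y)^2]=(\mu_{\hat\theta}-\theta^*)^\top\Sigma_x(\mu_{\hat\theta}-\theta^*)+\tr(\Sigma_xC_{\hat\theta})+\sigma_\varepsilon^2 .
\]
Since $\|\mu_x\|\le C$ and $\|C_{\hat\theta}\|=O(1/n)$ by Corollary~\ref{cor:cov_theta_bounded}, we have $\mu_x^\top C_{\hat\theta}\mu_x=O(1/n)$. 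Hence $\tr(\Sigma_xC_{\hat\theta})=\tr(C_xC_{\hat\theta})+o(1)$. Theorem~\ref{th:scalar-minmax} then lets me replace $\mu_{\hat\theta}$ by $\mu_\ast$ and $\tr(C_xC_{\hat\theta})$ by $\alpha_\ast^2$. For the first substitution I use $\|\mu_{\hat\theta}-\mu_\ast\|\to0$ together with $\|\Sigma_x\|=O(1)$ and the boundedness of $\mu_\ast,\theta^*$. Consequently
\[
\mathbb E[(x^\top\hat\theta-y)^2]=\Delta_\ast+\alpha_\ast^2+\sigma_\varepsilon^2+o(1),
\]
with $\Delta_\ast$ defined as $\Delta$ but evaluated at $\mu_\ast$. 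The loss $\mathcal L_y$ carries a factor $\tfrac12$, so the same identity holds for $\mathbb E[\mathcal L_y(x^\top\hat\theta)]$ up to that normalization, which I would track consistently.

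\emph{Step 2 (closing the system).} The fourth equation of Theorem~\ref{the:se-system} becomes the linear equation
\[
0=a+H\mu_\ast+\nu_\ast\,\mathbb E\big[x\,x^\top(\mu_\ast-\theta^*)\big]-\nu_\ast\,\mathbb E[x\varepsilon],
\]
and the last term vanishes. Solving with $Q(\nu_\ast)$ gives $\mu_\ast=\mu(\nu_\ast)$. Here the $\mu_x\mu_x^\top$ part of $\Sigma_x$ must be handled: either $\mu_x$ is absorbed into the mean/intercept convention implicit in the statement, or one checks that it leaves $\mu(\nu)$ as written. I expect this bookkeeping between $C_x$ and $\Sigma_x$ to be the one delicate point, and I would verify it first on the centered case.

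The third equation gives
\[
\alpha_\ast^2=\frac{A(\nu_\ast)}{\kappa_\ast^2}\,\kappa_\ast^2\nu_\ast^2\,\mathbb E[(\mu_\ast^\top x+\alpha_\ast z-y)^2]=\nu_\ast^2A(\nu_\ast)\,(\Delta_\ast+\alpha_\ast^2+\sigma_\varepsilon^2).
\]
The second equation is automatically consistent because $\mathbb E[z\xi_y]=\kappa_\ast\nu_\ast\alpha_\ast$.

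\emph{Step 3 (conclusion).} Provided $\nu_\ast^2A(\nu_\ast)<1$, solving the last relation for $\alpha_\ast^2$ yields
\[
\Delta_\ast+\alpha_\ast^2+\sigma_\varepsilon^2=\frac{\sigma_\varepsilon^2+\Delta(\nu_\ast)}{1-\nu_\ast^2A(\nu_\ast)} .
\]
Combining this with Step~1 proves the corollary. I would justify $\nu_\ast^2A(\nu_\ast)<1$ from $H\succeq\kappa I_p$: since $\nu C_xQ(\nu)$ has spectrum in $[0,1)$, we get $\nu^2A(\nu)\le\frac1n\tr(\nu C_xQ(\nu))=\nu\kappa_\ast<1$, using $\nu\kappa_\ast=\kappa_\ast/(1+\kappa_\ast)$. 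Uniqueness of $\nu_\ast$ is the classical fact recalled after \eqref{eq:nu_rmt}.
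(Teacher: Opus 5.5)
Your proposal follows the paper's proof essentially step by step. You decompose the risk via Corollary~\ref{cor:cov_theta_bounded} and Theorem~\ref{th:scalar-minmax}, close the squared-loss fixed-point system, and solve $\alpha_\ast^2=\nu_\ast^2A(\nu_\ast)(\Delta+\alpha_\ast^2+\sigma_\varepsilon^2)$ for $\alpha_\ast^2$. Your bound $\nu_\ast^2A(\nu_\ast)\le\nu_\ast\kappa_\ast<1$, obtained from the spectrum of $\nu C_xQ(\nu)$ lying in $[0,1)$, is correct and fills a step the paper leaves implicit.

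The point you flag as delicate cannot be checked away, so your second option fails. Since $z$ and $\varepsilon$ are centered and independent of $x$, the fourth equation gives $\mathbb E[x\,\xi_y]=\kappa_\ast\nu_\ast\,\Sigma_x(\mu_\ast-\theta^*)$. Hence
\[
\mu_\ast=(H+\nu_\ast\Sigma_x)^{-1}(\nu_\ast\Sigma_x\theta^*-a)
=\mu(\nu_\ast)-\nu_\ast\,Q(\nu_\ast)\mu_x\,\mu_x^\top(\mu_\ast-\theta^*),
\]
which coincides with $\mu(\nu_\ast)$ only if $\mu_x=0$ or $\mu_x^\top(\mu_\ast-\theta^*)=0$. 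A concrete failure: with $H=\lambda I_p$, $a=0$, $C_x=I_p$, $\theta^*=e_1$, $\mu_x=m e_1$, $m\neq0$, the two $\Delta$'s are $\lambda^2(1+m^2)/(\lambda+\nu)^2$ versus $\lambda^2(1+m^2)/(\lambda+\nu(1+m^2))^2$. The paper's proof makes the same substitution silently, writing $\mathbb E[\xi_y x]=\gamma C_x(\mu-\theta^*)$. So your argument, like the paper's, proves the corollary for centered $x$. For $\mu_x\neq0$ you must define $\mu(\nu)$ with $\Sigma_x$ in place of $C_x$; the equations for $\kappa_\ast,\nu_\ast,\alpha_\ast$ keep the form you derived.

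Likewise, $\xi_y=\tfrac{\kappa}{1+\kappa}(u-y)$ comes from the normalization $\mathcal L_y(t)=\tfrac12(t-y)^2$. The displayed limit is therefore that of $\mathbb E[(x^\top\hat\theta-y)^2]=2\,\mathbb E[\mathcal L_y(x^\top\hat\theta)]$, and the paper drops this factor too. State your final identity for the mean squared error, or carry the $\tfrac12$, rather than claiming the displayed formula verbatim.
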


Notably, the asymptotic generalization error depends on the distribution of $x$ solely through its first two moments ($C_x$ and $\mu_x$). 
This confirms that \emph{performance universality} holds in this setting. However, note that there is no reason for the \emph{score universality} to be satisfied, as $x^\top \hat\theta$ may be non-Gaussian depending on the distribution of $x^\top \theta^*$.

\begin{figure}[t]
    \centering
    \includegraphics[width=0.48\textwidth]{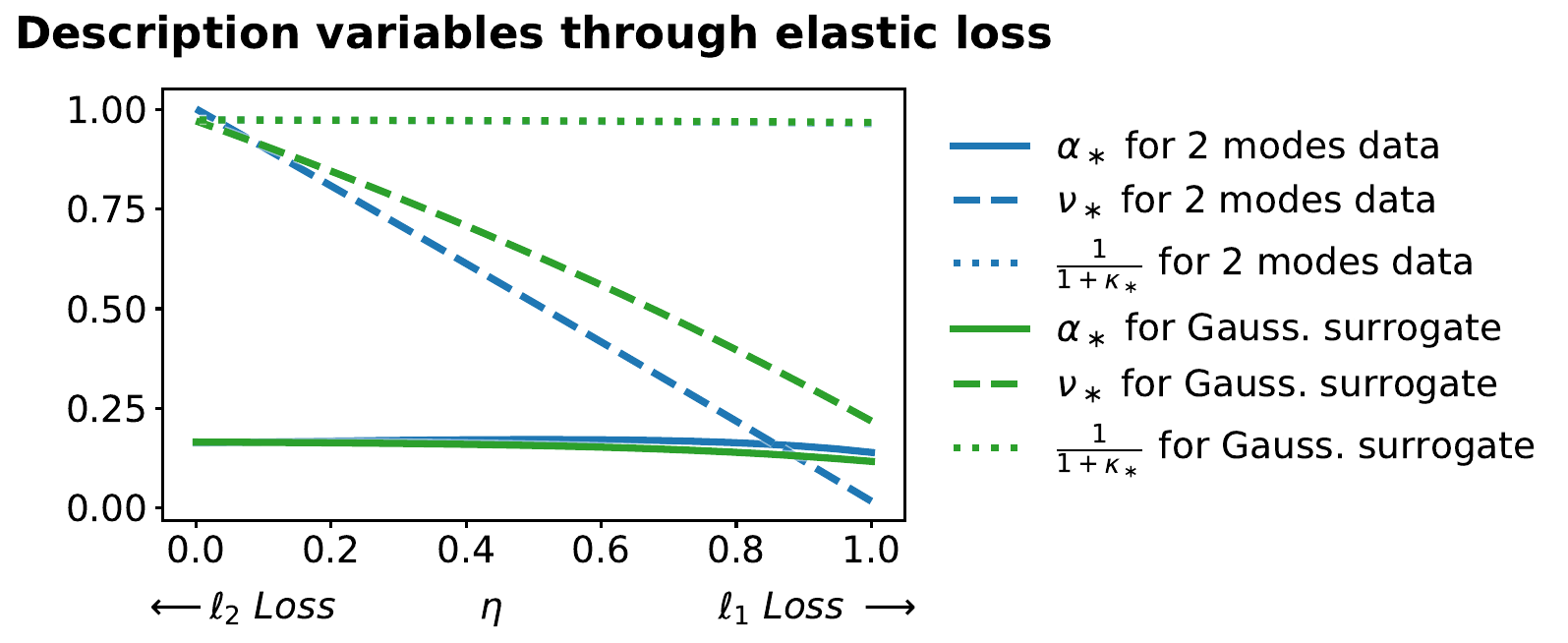}
\caption{%
\textbf{Evolution of the description variables through elastic loss.}
In the setting of Figure~\ref{fig:non_gaussian_scores}, we show the fixed-point solution variables $\alpha_*$, $\nu_*$ and $\frac{1}{1+\kappa_*}$.
For the $\ell_2$ loss ($\eta=0$), we have $\nu_* = \frac{1}{1+\kappa_*}$.
}
\label{fig:alpha_nu_kappa}
\end{figure}
\section{Score Universality Characterization}\label{sse:gaussian_theta}

Unlike the score universality condition~\eqref{eq:montanari_assumption} assumed in \citet{montanari2022erm} and \citet{MalloryHuangAustern2025}, we are not aware of any empirical counterexamples to the following assumption in the proportional regime. See again discussion around~\eqref{eq:heuristic_theta_gaussian} to understand the underlying heuristic. A proof of this result based on Chatterjee’s normal-approximation method (\cite{Chatterjee2008,ShaoZhang2025Functionals}) is under preparation.

\begin{assumption}[Minimizer universality]\label{ass:theta_gaussian}
For any bounded measurable function $f:\mathbb{R}\to\mathbb{R}$ and any deterministic sequence $(u_n)_{n\in\mathbb{N}}\subset\mathbb{R}^p$ with $\|u_n\|=O(1)$:
\begin{align*}
\mathbb{E}\big[f(u_n^\top \hat\theta)\big] - \mathbb{E}\big[f(u_n^\top g)\big] \xrightarrow[n\to\infty]{} 0,
\end{align*}
where $g\sim \mathcal{N}(\mu_{\hat\theta}, C_{\hat\theta})$.
\end{assumption}


Conditioning on an independent test covariate $x$,
Assumption~\ref{ass:theta_gaussian} implies the approximation:
\[
x^\top\hat\theta \;\approx\; x^\top g
\quad \implies \quad
x^\top g \mid x \;\sim\; \mathcal{N}\!\big(x^\top\mu_{\hat\theta},\, x^\top C_{\hat\theta}x\big).
\]
In fact, $x^\top A x$ concentrates with high probability around $\tr(\Sigma_x A)$ by Hanson--Wright inequality, see~\citet[Theorem~2.3]{adamczak2015note}, which leads to the following result.

\begin{theorem}[\textbf{Asymptotic behavior of test score}]\label{the:approximation_xT_theta}
Under Assumptions~\ref{ass:design}--\ref{ass:theta_gaussian}, for any bounded measurable function $f:\mathbb{R}\times\mathcal{Y}\to\mathbb{R}$:
\begin{align*}
\mathbb{E}\big[f(x^\top\hat\theta, y)\big] - \mathbb{E}\big[f(x^\top\mu_\ast + \alpha_\ast z, y)\big] \xrightarrow[n\to\infty]{} 0,
\end{align*}
where $(\mu_\ast,\alpha_\ast)$ are defined in Theorem~\ref{th:scalar-minmax} and $z\sim \mathcal{N}(0,1)$ is independent of $(x,y)$.
\end{theorem}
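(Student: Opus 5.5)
The plan is to condition on the test pair $(x,y)$, which is independent of $\hat\theta$, and to apply Assumption~\ref{ass:theta_gaussian} along the (random but frozen) direction $u=x$. Fix a bounded measurable $f$ and write
\[
\mathbb E[f(x^\top\hat\theta,y)]=\mathbb E_{x,y}\big[\,\mathbb E_{\hat\theta}[f(x^\top\hat\theta,y)\mid x,y]\,\big].
\]
For a fixed realization with $\|x\|\le R$, Assumption~\ref{ass:theta_gaussian} applied to $u_n=x$ and $f(\cdot,y)$ gives that the inner expectation is close to $\mathbb E_g[f(x^\top g,y)]$ with $g\sim\mathcal N(\mu_{\hat\theta},C_{\hat\theta})$. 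However, $\|x\|$ is of order $\sqrt p$, not $O(1)$. I would therefore rescale: apply the assumption to $u_n=x/\sqrt n$, so that $\|u_n\|=O(1)$ on a high-probability event of Assumption~\ref{ass:design}, and to the function $t\mapsto f(\sqrt n\, t,y)$. This function is still bounded and measurable, and the assumption is stated for arbitrary bounded measurable $f$. The statement is really about deterministic sequences, so I would need it uniformly over $u$ in a ball, or else use a subsequence/contradiction argument: if the conclusion failed, one could pick worst-case deterministic directions $u_n$ and bounded functions $f_n$. Dominated convergence over the law of $(x,y)$, together with boundedness of $f$ to handle the complement event, then yields
\[
\mathbb E[f(x^\top\hat\theta,y)]-\mathbb E[f(x^\top\mu_{\hat\theta}+\sqrt{x^\top C_{\hat\theta}x}\,z,y)]\to0 .
\]

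Next I replace the random variance $x^\top C_{\hat\theta}x$ by $\tr(\Sigma_x C_{\hat\theta})$. Corollary~\ref{cor:cov_theta_bounded} gives $\|C_{\hat\theta}\|=O(1/n)$, hence $\|C_{\hat\theta}\|_F=O(1/\sqrt n)$, and Hanson--Wright for concentrated vectors \citep[Theorem~2.3]{adamczak2015note} gives $x^\top C_{\hat\theta}x-\tr(\Sigma_x C_{\hat\theta})\to0$ in probability. I also need $\tr((\Sigma_x-C_x)C_{\hat\theta})=\mu_x^\top C_{\hat\theta}\mu_x=O(1/n)$, so the variance tends to $\tr(C_xC_{\hat\theta})$, which by Theorem~\ref{th:scalar-minmax} is close to $\alpha_\ast^2$. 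Likewise $x^\top\mu_{\hat\theta}-x^\top\mu_\ast\to0$ in probability, since $\|\mu_{\hat\theta}-\mu_\ast\|\to0$ and $x^\top v$ has mean bounded by $\|\mu_x\|\|v\|$ and fluctuations of order $\|v\|$ by Assumption~\ref{ass:design}.

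The difficulty is that $f$ is only bounded measurable, so perturbing the argument in probability does not directly control $\mathbb E f$. Two cases need separate treatment.

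\begin{itemize}
\item When $\alpha_\ast$ stays bounded below, the Gaussian convolution helps. Conditionally on $(x,y)$, the map $(m,s)\mapsto\mathbb E_z f(m+sz,y)$ is continuous, with modulus controlled by the total variation distance $\mathrm{TV}(\mathcal N(m,s^2),\mathcal N(m',s'^2))\lesssim |m-m'|/s+|s-s'|/s$, uniformly in $f$ with $\|f\|_\infty\le1$. This makes the perturbations in probability harmless.
\item When $\alpha_\ast\to0$ (along a subsequence), this argument breaks. I would treat it as a degenerate case, either by assuming $\alpha_\ast$ is bounded away from zero or by restricting to continuous $f$ there.
\end{itemize}

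The main obstacle is therefore the first step: legitimately applying Assumption~\ref{ass:theta_gaussian} with a random, $\sqrt n$-sized direction $x$ and bounded measurable test functions. I would first try the contradiction and subsequence argument combined with the rescaling to $x/\sqrt n$.
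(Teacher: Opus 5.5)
Your route is the paper's: condition on the test point, use Assumption~\ref{ass:theta_gaussian} to replace $x^\top\hat\theta$ by $x^\top\mu_{\hat\theta}+\sqrt{x^\top C_{\hat\theta}x}\,z$, then apply Hanson--Wright with $\|C_{\hat\theta}\|_F=O(n^{-1/2})$, the bound $\mu_x^\top C_{\hat\theta}\mu_x=O(1/n)$, and Theorem~\ref{th:scalar-minmax}. The paper does the first step in one line. It applies Assumption~\ref{ass:theta_gaussian} directly with $u=x$, conditionally on $x$; it ignores $y$ and claims an $O(n^{-1/2})$ rate that the assumption does not provide. It then puts $\alpha_\ast$ in place of the random standard deviation inside a merely measurable $f$, with no continuity argument, and its final display still contains $\mu_{\hat\theta}$ rather than $\mu_\ast$. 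Your Gaussian total-variation smoothing, with $\alpha_\ast$ bounded below, is what makes that second half rigorous. The degenerate regime $\alpha_\ast\to0$, where smoothing gives no control on indicator-type $f$, is not treated by the paper either.

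The real gap is your repair of the first step. You rightly call it the crux, but you do not close it. Assumption~\ref{ass:theta_gaussian} fixes $f$ before $n\to\infty$.

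A contradiction argument does upgrade it, for each \emph{fixed} $f$, to uniformity over $\|u\|\le R$: pick a near-worst $u_n$ for every $n$. That uniformity is what handles the randomness of $x$; dominated convergence does not, since the law of $(x,y)$ changes with $n$. But the same argument cannot produce worst-case $f_n$. So neither $t\mapsto f(\sqrt n\,t,y)$ nor the family $\{f(\cdot,y)\}_{y\in\mathcal Y}$ is covered.

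The rescaling is not cosmetic. By Corollary~\ref{cor:cov_theta_bounded}, $u^\top\hat\theta$ with $\|u\|=O(1)$ fluctuates only at scale $n^{-1/2}$; for fixed continuous $f$ the assumption is even trivially true. The theorem, however, needs the Gaussian law of $x^\top(\hat\theta-\mu_{\hat\theta})$ at scale $O(1)$.

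What both your argument and the paper's one-liner actually need is the total-variation form
\[
\sup_{\|f\|_\infty\le1}\big|\mathbb E[f(u_n^\top\hat\theta)]-\mathbb E[f(u_n^\top g)]\big|\xrightarrow[n\to\infty]{}0
\quad\text{for deterministic }\|u_n\|=O(1).
\]
Total variation is invariant under $t\mapsto\sqrt n\,t$, so with this form your rescaling to $x/\sqrt n$ costs nothing. The supremum over $f$ absorbs the $y$-dependence. The contradiction argument gives uniformity over $\|u\|\le R$, and concentration of $\|x\|/\sqrt n$ handles the complement event. The rest of your proof then goes through. Without strengthening the assumption this way, the first step does not follow from Assumption~\ref{ass:theta_gaussian} as stated, in your proposal or in the paper.
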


\Cref{the:approximation_xT_theta} fully characterizes the score universality:
\begin{align*}
     \text{``$x^\top\hat \theta$ Gaussian} \ \Longleftrightarrow \ \text{$x^\top\mu_{\ast}$ Gaussian''}.
\end{align*}
It is natural at this stage to relate our score characterization to the recent literature on Gaussian and conditional Gaussian equivalence for random feature models~\cite{gerace2022gaussian,dandi2023universality,wen2025cge}. A common formulation assumes the existence of a low-dimensional random vector $s\in\mathbb R^r$, with $r=O(1)$, such that
\begin{align*}
    y = \eta(s,\varepsilon),
    \qquad
    x \mid s \sim \mathcal N\!\big(m(s),\Sigma(s)\big),
\end{align*}
where $\varepsilon$ is independent of $(x,s)$. Under such a representation, our score formula yields
\begin{align*}
    x^\top \hat \theta
    \;\approx\;
    m(s)^\top \mu_{\ast}
    +
    \sqrt{\mu_{\ast}^\top\Sigma(s)\mu_{\ast}+\alpha_\ast^2}\,z,
\end{align*}
so that the fixed-point system of Theorem~\ref{the:se-system} reduces to expectations over the low-dimensional variable $s$. In particular, the non-Gaussian part of the score is entirely carried by the scalar quantity $m(s)^\top\mu_\ast$, while the remaining fluctuations are Gaussian.
We leave a systematic study of this reduced system to future work, and instead focus below on a simpler structural consequence of our analysis, namely the confinement of $\mu_{\hat\theta}$ when only specific projections of $x$ are informative for predicting $y$.

\section{Subspace Confinement of \texorpdfstring{$\mu_{\hat \theta}$}{mu}}
\label{sec:warmup_non_universality_in_classification_of_linear_factor_models}

As defined in Eq.~\eqref{eq:J-tilde}, the vector $\mu_\ast$ is an implicit parameter that depends on the interplay between the alignment of the learning task and the distribution profile of $x$. This dependency makes the geometric interpretation of the score universality condition in Theorem~\ref{the:approximation_xT_theta} difficult. However, in structured signal-plus-noise models, the geometry of the problem imposes strong constraints on the location of $\mu_\ast$.

\begin{theorem}\label{the:mu_in_F}
Consider the ERM problem~\eqref{eq:erm} with a quadratic regularizer $\rho(\theta) = a^\top \theta + \frac{1}{2} \theta^\top H \theta$, where $a\in \mathbb R^p$ and $H\in \mathbb R^{p\times p}$ is a positive symmetric matrix. Assume the data decomposes as:
\begin{align*}
    x = x_I + x_N \qquad \text{almost surely},
\end{align*}
where $x_I$ lies in a subspace $F\subset \mathbb R^p$ and $x_N$ is a zero-mean random vector independent of both $x_I$ and $y$. Then, the asymptotic mean vector $\mu_*$ defined in Theorem~\ref{th:scalar-minmax} satisfies:
\[
\mu_* \in H^ {-1}F + \mathrm{span}(a).
\]
\end{theorem}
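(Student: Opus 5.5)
The plan is to work directly with the variational characterization \eqref{eq:def_alpha_mu_star} rather than with the fixed-point system. The argument is a projection argument: any component of $\mu$ that is not aligned with $H^{-1}F$ or the $a$-direction can only increase the objective. This is because its sole effect on the loss term is to add independent, zero-mean noise inside a convex function. Concretely, let $G(\mu,\alpha):=\min_{\kappa>0}\max_{\beta,\nu>0}\mathcal J(\mu,\alpha,\kappa;\beta,\nu)$, so that $(\mu_\ast,\alpha_\ast)$ minimizes $G$. The goal is to show that if $\mu$ has a nonzero component outside the target subspace, then removing that component gives a strictly smaller value of $G$ at the same $\alpha$. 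This forces $\mu_\ast$ into the subspace.

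\textbf{Step 1 (decomposition).} Let $V:=H^{-1}\big(F+\mathrm{span}(a)\big)$, and decompose $\mu=m+r$ orthogonally for the inner product $\langle u,v\rangle_H=u^\top Hv$, with $m\in V$ and $r$ $H$-orthogonal to $V$. The $H$-orthogonality means $r^\top f=0$ for every $f\in F$ and $r^\top a=0$, so $r\perp F$ and $r\perp a$ in the Euclidean sense. Since $Hm\in F+\mathrm{span}(a)$, we also get $r^\top Hm=0$. For the quadratic regularizer this gives the exact splitting
\[
\rho(\mu)=\rho(m)+a^\top r+r^\top Hm+\tfrac12 r^\top Hr=\rho(m)+\tfrac12 r^\top Hr .
\]
This identity explains why the natural target is $H^{-1}F+\mathrm{span}(H^{-1}a)$: the $a$-direction enters through $\nabla\rho(\mu)=a+H\mu$. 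I read the ``$\mathrm{span}(a)$'' in the statement in this sense, i.e.\ as the direction contributed by the linear part of $\rho$ after inverting $H$. The two readings coincide whenever $a$ is an eigenvector of $H$, and in particular for the ridge case $H\propto I_p$. Note also that the fixed-point equation $H\mu_\ast+a=-\frac1\kappa\mathbb E[x\,\xi_y]$ of Theorem~\ref{the:se-system} makes this direction explicit.

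\textbf{Step 2 (the loss term).} Since $r\perp F$ and $x_I\in F$, we have $\mu^\top x+\alpha z=m^\top x+\alpha z+r^\top x_N$. The variable $r^\top x_N$ has zero mean and is independent of $(x_I,y,z)$; the remaining dependence of $m^\top x$ on $x_N$ (through $m^\top x_N$) needs care and is treated in the next paragraph. The Moreau envelope $u\mapsto e_{\mathcal L_y}(u;\kappa)$ is convex. Conditional Jensen with respect to the added zero-mean term then gives, for every $\kappa$,
\[
\mathbb E\big[e_{\mathcal L_y}(\mu^\top x+\alpha z;\kappa)\big]\ \ge\ \mathbb E\big[e_{\mathcal L_y}(m^\top x+\alpha z;\kappa)\big].
\]
The remaining terms $\frac{\beta^2\kappa}{2}$, $-\frac{\nu\alpha^2}{2}$ and the trace term do not depend on $\mu$. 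Combining with Step 1, we obtain $\mathcal J(\mu,\alpha,\kappa;\beta,\nu)\ge \mathcal J(m,\alpha,\kappa;\beta,\nu)+\frac12 r^\top Hr$ pointwise in $(\kappa,\beta,\nu)$. Taking $\max_{\beta,\nu}$ and then $\min_\kappa$ preserves this inequality, so $G(\mu,\alpha)\ge G(m,\alpha)+\frac12 r^\top Hr$.

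\textbf{Main obstacle.} The delicate point is the conditional Jensen step in Step 2. To apply it cleanly I need the added noise $r^\top x_N$ to be conditionally mean zero given the quantity $m^\top x$ it is added to, and $m^\top x$ itself contains $m^\top x_N$. To handle this, I will first try to also split $x_N$ along $H$-orthogonal directions. If that fails, I will fall back on the elementary inequality $\mathbb E[\phi(U+W)]\ge \mathbb E[\phi(U)]+\mathbb E[\phi'(U)W]$ for convex $\phi$ and then control the cross term $\mathbb E[\phi'(U)\,r^\top x_N]$ directly, using independence of $x_N$ from $(x_I,y)$. Granting this step, positive definiteness of $H$ makes the inequality strict whenever $r\neq0$. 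Minimality of $(\mu_\ast,\alpha_\ast)$ in \eqref{eq:def_alpha_mu_star} then forces $r=0$, i.e.\ $\mu_\ast\in V$, which is the claimed confinement.
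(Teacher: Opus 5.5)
Your route is the paper's own: an $H$-orthogonal split of $\mu_*$, the exact splitting of the quadratic $\rho$, and conditional Jensen on the convex Moreau envelope.

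\textbf{The subspace correction is right.} Your switch to $V=H^{-1}(F+\mathrm{span}(a))$ is a genuine correction, not a matter of reading. The paper's proof expands $\rho(\mu_*)$ without the term $a^\top\mu_{F_a^\perp}$. That term vanishes only if the complement is $H$-orthogonal to $H^{-1}a$, i.e.\ for your $V$. Already with $x_N\equiv0$, the $\mu$-equation gives $\mu_*=-H^{-1}\big(a+\tfrac1\kappa\mathbb{E}[x\,\xi_y]\big)$, which is generally not in $H^{-1}F+\mathrm{span}(a)$.

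\textbf{The step you grant is false.} The Jensen step is not just delicate: it fails in general, and so does the conclusion in either reading. Take $m\in V$ and a direction $r$ that is $H$-orthogonal to $V$. Since $r^\top(Hm+a)=0$, the derivative of $\mu\mapsto\mathbb{E}[e_{\mathcal L_y}(\mu^\top x+\alpha z;\kappa)]+\rho(\mu)$ at $m$ along $r$ is exactly your cross term $\tfrac1\kappa\mathbb{E}[\xi_y(U,\kappa)\,r^\top x_N]$, with $U=m^\top x+\alpha z$. For $x_N\sim\mathcal N(0,C_N)$, Gaussian integration by parts conditionally on $(x_I,y,z)$ turns it into $\tfrac1\kappa(r^\top C_N m)\,\mathbb{E}[\partial_u\xi_y(U,\kappa)]$. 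So if $\mu_*\in V$ and $\mathbb{E}[\partial_u\xi_y]>0$, stationarity of $\mu_*$ forces $C_N\mu_*\in HV=F+\mathrm{span}(a)$. Nothing in the hypotheses provides this. Consequently:
\begin{itemize}
\item Your second fallback cannot control the cross term in general.
\item Your first fallback (splitting $x_N$) works only under an extra assumption. One option is $x_N\perp V$ almost surely; then $m^\top x_N=0$ and conditioning on $(x_I,y,z)$ is legitimate. Another, for Gaussian $x_N$, is $C_NV\subset HV$; then $r^\top x_N$ is independent of $m^\top x_N$.
\end{itemize}

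\textbf{A counterexample within the paper's framework.} Take ridge with $H=\lambda I_p$, $a=0$, $F=\mathrm{span}(e_1)$ and $x_I=s\,e_1$. Let $x_N=w(e_1+e_2)/\sqrt2+\sum_{j\ge3}g_je_j$, where $s,w,g_j,\varepsilon$ are independent centered Gaussians and $s,w$ have variances $\sigma_s^2,\sigma_w^2$. Set $y=s+\varepsilon=\theta^{*\top}x+\varepsilon$ with $\theta^*=e_1-e_2$. Every hypothesis holds. Yet the ridge formula of Corollary~\ref{cor:ridge_regression} gives $\mu_*=\nu(\lambda I_p+\nu C_x)^{-1}C_x\theta^*=\nu\sigma_s^2(\lambda I_p+\nu C_x)^{-1}e_1$. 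Let $M$ be the upper-left $2\times2$ block of $\lambda I_p+\nu C_x$. The $e_2$-coordinate of $\mu_*$ is then $-\nu^2\sigma_s^2\sigma_w^2/(2\det M)\neq0$. The estimator uses $x^\top e_2$ to cancel the noise carried by $x^\top e_1$, so $\mu_*\notin F=V$.

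The paper's proof has the same hole. Its Jensen display replaces $x_N^\top\mu_{F_a^\perp}$ by its mean while keeping $x^\top\mu_{F_a}$. But $x^\top\mu_{F_a}$ still contains $x_N^\top\mu_{F_a}$, so it is not a function of the conditioning variables $(x_I,y,z)$.
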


This theorem confines the effective signal proxy $\mu_*$ to the subspace containing the informative part of the data and the regularization shift. This significantly simplifies the verification of score universality.
Figure~\ref{fig:Gaussianity} below demonstrates the characterization of score universality provided in the following corollary.

\begin{corollary}\label{cor:gaussian_projection}
    In the setting of Theorem~\ref{the:mu_in_F}, assume further that for all $u\in H^ {-1}F + \mathrm{span}(a)$, the projection $u^\top x$ follows a Gaussian distribution. Then, for any bounded measurable function $f:\mathbb{R}\times\mathcal{Y}\to\mathbb{R}$:
    \begin{align*}
    \mathbb{E}\big[f(x^\top\hat\theta, y)\big]
    \xrightarrow[n\to\infty]{}
    \mathbb{E}\big[f(G, y)\big],
    \end{align*} 
    where $G \sim \mathcal{N}\left( \mu_\ast^\top \mathbb{E}[x],\; \mu_\ast^\top C_x \mu_\ast + \alpha_\ast^2 \right)$.
\end{corollary}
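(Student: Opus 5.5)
The plan is to start from Theorem~\ref{the:approximation_xT_theta}, which already gives, for bounded measurable $f$,
\[
\mathbb{E}[f(x^\top\hat\theta,y)]-\mathbb{E}[f(x^\top\mu_\ast+\alpha_\ast z,y)]\to 0 .
\]
The task therefore reduces to identifying the law of $x^\top\mu_\ast+\alpha_\ast z$, jointly with $y$, as that of $(G,y)$.

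\textbf{Step 1: locate $\mu_\ast$.} By Theorem~\ref{the:mu_in_F}, $\mu_\ast\in H^{-1}F+\mathrm{span}(a)$. The Gaussianity hypothesis then applies to $u=\mu_\ast$, so $\mu_\ast^\top x\sim\mathcal N(\mu_\ast^\top\mathbb E[x],\,\mu_\ast^\top C_x\mu_\ast)$.

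\textbf{Step 2: independence of $z$.} Since $z$ is independent of $(x,y)$, the variable $\mu_\ast^\top x+\alpha_\ast z$ is a sum of two independent Gaussians. Hence it is Gaussian with mean $\mu_\ast^\top\mathbb E[x]$ and variance $\mu_\ast^\top C_x\mu_\ast+\alpha_\ast^2$, which is exactly the marginal law of $G$.

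\textbf{Step 3: the joint law with $y$.} The main obstacle is that $f$ depends on $y$ as well, so the marginal statement of Step 2 is not enough. I would read $G$ in the statement as being coupled to $y$ via $G=\mu_\ast^\top x+\alpha_\ast z$. To make this meaningful, I would use the decomposition $x=x_I+x_N$ and write
\[
\mu_\ast^\top x=\mu_\ast^\top x_I+\mu_\ast^\top x_N ,
\]
where $x_N$ is independent of $(x_I,y)$. I would then attempt to express the joint law of $(\mu_\ast^\top x,y)$ through $(\mu_\ast^\top x_I,y)$ convolved with the independent Gaussian-type noise $\mu_\ast^\top x_N+\alpha_\ast z$.

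If the statement is interpreted literally, with $G$ independent of $y$, it can only hold when $\mu_\ast^\top x_I$ carries no information about $y$. I would flag this in the proof, and I would prove the corollary in the form $(x^\top\hat\theta,y)\approx(G,y)$ with $G=\mu_\ast^\top x+\alpha_\ast z$ Gaussian with the stated parameters. This follows immediately from Steps 1 and 2 combined with Theorem~\ref{the:approximation_xT_theta}.

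A subsidiary point is that $\mu_\ast$, $\mathbb E[x]$ and $C_x$ depend on $n$, so the Gaussian $G$ varies with $n$. The "convergence" should therefore be understood as a vanishing difference $\mathbb E[f(x^\top\hat\theta,y)]-\mathbb E[f(G,y)]\to0$, which is exactly what the chain above gives without any further uniformity argument.
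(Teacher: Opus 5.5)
Your proof is correct and follows the argument the paper intends; the paper gives no separate proof, but the corollary is clearly meant as this combination. Theorem~\ref{the:mu_in_F} places $\mu_\ast$ in $H^{-1}F+\mathrm{span}(a)$, so $\mu_\ast^\top x$ is Gaussian with mean $\mu_\ast^\top\mathbb{E}[x]$ and variance $\mu_\ast^\top C_x\mu_\ast$, and adding the independent $\alpha_\ast z$ from Theorem~\ref{the:approximation_xT_theta} gives the stated law. Your caveat is also right and worth keeping: $G$ must be read as $\mu_\ast^\top x+\alpha_\ast z$, coupled to $y$ through $x$, because a $G$ independent of $y$ makes the claim false whenever $\mu_\ast^\top x_I$ is informative about $y$, as in the regression example of Figure~\ref{fig:Gaussianity}.
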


\begin{figure}[H]
    \centering
    \includegraphics[width=\linewidth]{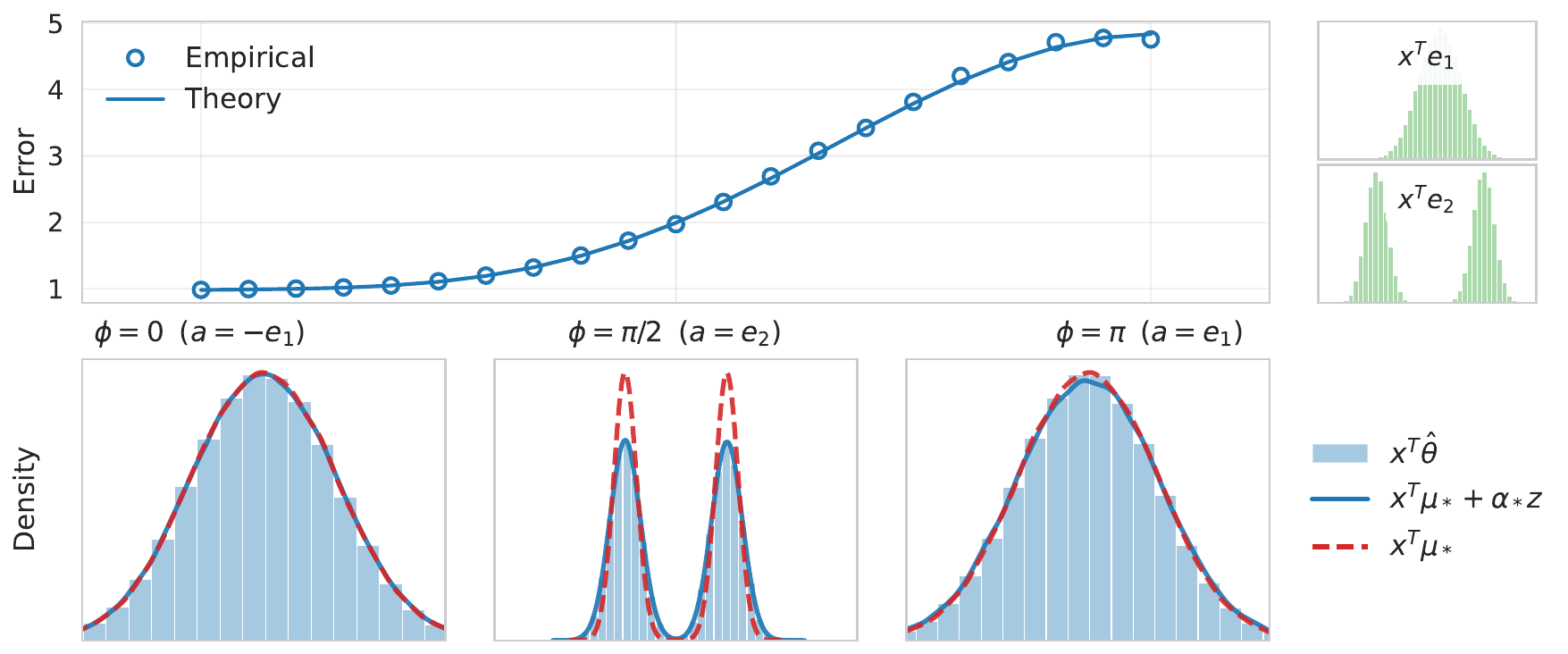}
    \caption{We examine different score distributions for various regularization functions $\rho: \theta \mapsto a^\top \theta + \|\theta\|^2$, where $a = (-\cos(\phi), \sin(\phi), 0,\ldots, 0)$ for some angle $\phi = 0, \frac{\pi}{2}, \pi$ (from \textit{bottom left} to \textit{bottom right}). We use the squared loss $\mathcal{L}_y(z) = (z-y)^2$, with $y = \theta^*{}^\top x + \varepsilon$, where $\theta^* = e_1$. For all $i \in [p] \setminus {2}$, $x^\top e_i \sim \mathcal{N}(0,1)$, while $x^\top e_2$ follows a bimodal distribution.
According to Corollary~\ref{cor:gaussian_projection}, the score is Gaussian when $x^\top a$ is Gaussian, which occurs here only at the extremal points of the graph ($\phi\in\{0,\pi\}$). The error \textit{(top left)} is minimized when $a = -e_1$ (i.e. $\phi = 0$), in which case $\theta^*$ minimizes $\rho$.
    }
    \label{fig:Gaussianity} 
\end{figure}


\section*{Software and Data}

\sloppypar{The code to reproduce our experiments is available at: 
\url{https://github.com/cosmital/Empirical-risk-minimization-asymptotics}.}



\section*{Acknowledgements}

This work is supported by the National Natural Science Foundation of China (Research fund for international young scientists grant W2533014), Huawei Strategic Research Institute, the Shenzhen Municipal Government Talent Research Start-up Funding, the National Key Research and Development Program of China (No.~2025YFA1018600), the National Natural Science Foundation of China (via fund NSFC-12571561), and the Fundamental Research Support Program of HUST (2025BRSXB0004).

\section*{Impact Statement}

This paper presents work whose goal is to advance the field of Machine Learning. 
There are many potential societal consequences of our work, none of which we feel must be specifically highlighted here.

\bibliography{biblio_ERM}
\bibliographystyle{icml2026}

\newpage
\appendix
\onecolumn
\section{Multiclass ERM asymptotics}\label{sec:multiclass}
Classification settings are simply characterized by the fact that $\mathcal Y$ is a finite set, then each possible value of $y\in \mathcal Y$ characterizes a different class. For simplicity, we assume here that $\mathcal Y =\{1,\ldots, k\}$ where $k$ is the number of classes and denote for all $\ell\in [k]$:
\begin{align*}
    \gamma_\ell := \mathbb P(Y=\ell);\qquad \mu_\ell := \mathbb E[x \ | \ y=\ell];\\
    C_\ell := \mathbb E[xx^\top\ | \ y=\ell] - \mu_\ell\mu_\ell^\top.
\end{align*}

Some natural extensions of Claim~\ref{cl:g-cgmt} to multiclass settings are already available in the literature for Gaussian data (\cite{MalloryHuangAustern2025,AkhtiamovGhaneVarmaHassibiBosch2024NovelGMT,pmlr-v258-bosch25a}). 
Assuming the claim is also valid in these cases, one can then deduce an extension of Theorem~\ref{the:se-system} to the multiclass settings as follows: 
\begin{claim}[Multiclass formulation of ERM asymptotics]
Under Assumptions~\ref{ass:design}--\ref{ass:theta_gaussian}, there exists some unique parameters $\mu_\ast\in \mathbb R^p$, $\alpha, \kappa, \nu\in \mathbb R_+^k$ such that $\forall \ell \in [k]$:
\begin{align*}
&Q(\nu) = \left(\sum_{h =1}^k\gamma_h\nu_h C_h + H\right)^{-1},
\qquad
\kappa_\ell = \frac{1}{n}\tr\big(C_\ell Q(\nu)\big),\nonumber\\
&\nu_\ell = \frac{1}{\alpha_\ell\kappa_\ell}\mathbb{E}\!\left[z\,\xi_y\big(\mu_\ast^\top x+\alpha_\ell z,\kappa\big)\ | \ y=\ell\right]\nonumber\\
&\alpha_\ell ^2 = \sum_{h=1}^{k}\frac{\gamma_h\tr\big(C_h Q(\nu) C_\ell Q(\nu)\big)}{\kappa_h^2n}\,
\mathbb{E}\!\left[\xi_y\big(\mu_\ast^\top x+\alpha_h z,\kappa_h\big)^2\ | \ y=h\right]\nonumber\\
&\nabla\rho(\mu_\ast)
+ \sum_{h=1}^k\frac{\gamma_h}{\kappa_h}\mathbb{E}\!\left[x\,\xi_y\big(\mu_\ast^\top x+\alpha_h z,\kappa_h\big)\ | \ y=h\right] = 0.
\end{align*}
and for any bounded measurable function $f:\mathbb{R}\to\mathbb{R}$,
\begin{align*}
\mathbb{E}\big[f(x^\top\hat\theta)\ | \ y=\ell\big]
\xrightarrow[n\to\infty]{}
\mathbb{E}\big[f(x^\top\mu_\ast + \alpha_\ell z) \ |\  y=\ell\big].
\end{align*}
\end{claim}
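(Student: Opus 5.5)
The plan is to repeat the single-class pipeline of Claim~\ref{cl:g-cgmt}, Theorem~\ref{th:scalar-minmax}, Theorem~\ref{the:se-system} and Theorem~\ref{the:approximation_xT_theta}, now keeping track of the class labels. The key difference is that the columns of $X$ are no longer identically whitenable. I would condition on the label vector $Y$, which fixes the class counts $n_\ell$ with $n_\ell/n\to\gamma_\ell$, and split $X=(X_1,\ldots,X_k)$ by class. As in \eqref{eq:decomp_theta}, I write $\hat\theta=\mu+\theta_0$ with $\mu$ frozen. Each centered block $X_\ell-\mu_\ell\mathbf 1^\top$ has covariance $C_\ell$, so the bilinear term in \eqref{eq:def_theta_0} becomes
\[
\sum_\ell \theta_0^\top C_\ell^{1/2}\tilde X_\ell w_\ell \;+\; \text{terms involving } \mu^\top x \text{ only}.
\]
I then invoke the assumed multiclass version of Claim~\ref{cl:g-cgmt}, as in \citet{AkhtiamovGhaneVarmaHassibiBosch2024NovelGMT,pmlr-v258-bosch25a}, block by block. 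This yields an auxiliary objective of the form
\[
\sum_\ell \Big(\|C_\ell^{1/2}\theta_0\|\, w_\ell^\top h_\ell+\|w_\ell\|\, g_\ell^\top C_\ell^{1/2}\theta_0\Big),
\]
with independent Gaussian vectors $g_\ell\in\mathbb R^p$ and $h_\ell\in\mathbb R^{n_\ell}$, and with $\mu^\top x_i$ kept exactly rather than Gaussianized.

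Next I scalarize. I set $\alpha_\ell=\|C_\ell^{1/2}\theta_0\|$ and $\beta_\ell=\|w_\ell\|/\sqrt n$. I enforce the constraint defining $\alpha_\ell$ through a multiplier $\nu_\ell$, using the usual $\sqrt{\cdot}$ trick, and I introduce $\kappa_\ell$ via the representation $\min_{\kappa}\{\beta^2\kappa/2+e_{\mathcal L}(\cdot;\kappa)\}$ of the dual-loss term. By Theorem~\ref{the:univ_reg}, the regularizer may be replaced by the quadratic surrogate. The inner minimization over $\theta_0$ is then an explicit quadratic program with solution
\[
\theta_0=-Q(\nu)\sum_h\sqrt{\gamma_h}\,\beta_h C_h^{1/2}g_h/\sqrt n,\qquad Q(\nu)=\Big(\textstyle\sum_h\gamma_h\nu_h C_h+H\Big)^{-1}.
\]
Substituting back and concentrating the Gaussian quadratic forms in $g_h$ gives a deterministic objective $\mathcal J_k(\mu,\alpha,\kappa;\beta,\nu)$. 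This is the analogue of \eqref{eq:J-tilde}: per-class Moreau envelopes $\gamma_\ell\mathbb E[e_{\mathcal L_y}(\mu^\top x+\alpha_\ell z;\kappa_\ell)\mid y=\ell]$ together with the trace term $-\sum_h \frac{\beta_h^2}{2n}\tr(C_hQ(\nu))$.

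The four displayed equations are the stationarity (KKT) conditions of $\mathcal J_k$, derived exactly as in Theorem~\ref{the:se-system}:
\begin{itemize}
\item $\partial_{\beta_\ell}$ gives $\kappa_\ell=\frac1n\tr(C_\ell Q)$.
\item $\partial_{\alpha_\ell}$, together with Stein's lemma on $z$ and $\partial_u e=\xi/\kappa$, gives the $\nu_\ell$ equation.
\item $\partial_{\kappa_\ell}$ gives $\beta_\ell^2=\kappa_\ell^{-2}\mathbb E[\xi^2\mid y=\ell]$. Inserting this into $\alpha_\ell^2=\mathbb E\|C_\ell^{1/2}\theta_0\|^2=\sum_h\gamma_h\beta_h^2\tr(C_hQC_\ell Q)/n$ gives the $\alpha_\ell$ equation.
\item $\partial_\mu$ gives the mean equation.
\end{itemize}
Existence and uniqueness would follow from the strong convexity in $(\mu,\alpha)$ inherited from $\nabla^2\rho\succeq\kappa I_p$ (Assumption~\ref{ass:regularity}), combined with concavity in $(\beta,\nu)$. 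Condition~\ref{itm:cond_claim_c} of the claim then transfers the identification $\alpha_\ell^2\approx\tr(C_\ell C_{\hat\theta})$ and $\mu_\ast\approx\mu_{\hat\theta}$ to the primal problem.

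For the score law I condition on $y=\ell$ and apply Assumption~\ref{ass:theta_gaussian}, which gives $x^\top\hat\theta\approx x^\top g$ with $g\sim\mathcal N(\mu_{\hat\theta},C_{\hat\theta})$. Conditional Hanson--Wright (the class-conditional law still concentrates) gives $x^\top C_{\hat\theta}x\approx\tr(C_\ell C_{\hat\theta})+\mu_\ell^\top C_{\hat\theta}\mu_\ell$. The second term is $O(1/n)$ by Corollary~\ref{cor:cov_theta_bounded}, so the conditional variance is $\alpha_\ell^2$. Replacing $\mu_{\hat\theta}$ by $\mu_\ast$ then concludes.

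The main obstacle is the multiclass CGMT step itself. One needs $k$ simultaneous Gordon comparisons with a block-dependent covariance, and the $k$ norms $\|C_\ell^{1/2}\theta_0\|$ cannot be reduced to a single norm by one whitening. I would first try a sequential Gordon argument that Gaussianizes one block at a time, conditioning on the others, and then verify condition~\ref{itm:cond_claim_c} jointly in the vector $(\alpha_\ell)_\ell$. The uniqueness of the saddle point in $\mathbb R_+^k$ is the secondary difficulty.
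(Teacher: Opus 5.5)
Your plan follows the paper's own route. The paper offers no proof of this claim beyond assuming that Claim~\ref{cl:g-cgmt} extends to the multiclass setting (citing the Gaussian multiclass CGMTs) and redoing the single-class pipeline: per-class scalarization, the four equations as stationarity conditions of the multiclass analogue of $\mathcal J$, and the score law from Assumption~\ref{ass:theta_gaussian} plus class-conditional Hanson--Wright as in Theorem~\ref{the:approximation_xT_theta}. That is exactly what you propose.

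One caveat, on a point the paper also leaves unproved: your uniqueness argument does not work as stated. The bound $\nabla^2\rho\succeq\kappa I_p$ only gives strong convexity in $\mu$. The term $-\gamma_\ell\nu_\ell\alpha_\ell^2/2$ is concave in $\alpha_\ell$, and the $\beta_\ell^2\kappa_\ell$ term is convex in the max-variable $\beta_\ell$. So $\mathcal J_k$ is not convex--concave in this parametrization, and uniqueness would have to be pulled back from the strongly convex problem in $\theta$ (or from a convex--concave reparametrization).
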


\section{Proofs of Subsection~\ref{sse:subsection_regularization} results}\label{app:proof_conc_theta}
For simplicity, in the next proposition we keep the notation $(X,Y)$ for deterministic objects.
We define
\begin{align}\label{eq:def_F_XY}
    F_{X, Y}(\theta)
    := \frac1n\sum_{i=1}^n \mathcal{L}_{y_i}(x_i^\top\theta)+\rho(\theta),
    \qquad \theta\in\mathbb R^p.
\end{align}

We will use multiple times the following classical stability lemma for minimizers.

\begin{lemma}[Stability of minimizers under strong convexity]\label{lem:minimum_variablility}
Let $\phi,\psi:\mathbb R^p\to\mathbb R$ be convex and $\mathcal C^1$. Assume moreover that $\phi$ is $\mathcal C^2$ and \emph{$\kappa$-strongly convex}, i.e.
\[
\forall x\in\mathbb R^p,\qquad \nabla^2\phi(x)\succeq \kappa I_p
\quad\text{for some }\kappa>0.
\]
Let $\mu_\phi:=\arg\min_{x\in\mathbb R^p}\phi(x)$ (which is then unique). Then for any $x\in\mathbb R^p$,
\begin{align}\label{eq:stability-one-function}
	\|\mu_\phi-x\|_2 \le \frac{1}{\kappa}\,\|\nabla \phi(x)\|_2.
\end{align}
In particular, if $\mu_\psi:=\arg\min_{x\in\mathbb R^p}\psi(x)$ exists and $\psi$ is differentiable at $\mu_\psi$, then
\begin{align}\label{eq:stability-two-functions}
	\|\mu_\phi - \mu_\psi\|_2
	\le \frac{1}{\kappa}\,\|(\nabla \phi-\nabla\psi)(\mu_\psi)\|_2
	\leq \frac{1}{\kappa}\,\|\nabla \phi - \nabla \psi \|_{\mathcal B(0, \|\mu_\psi\|_2)},
\end{align}
where, for any $f:\mathbb R^p \to \mathbb R^p$ and any $C>0$,
\[
\|f \|_{\mathcal B(0, C)} := \sup_{\|x\|_2\leq C} \|f(x)\|_2.
\]
\end{lemma}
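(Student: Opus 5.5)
The plan is the standard first-order argument for strongly convex functions. We prove the single-function bound \eqref{eq:stability-one-function} first and then deduce \eqref{eq:stability-two-functions} by a direct substitution.

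\textbf{Step 1: strong monotonicity of the gradient.} Since $\phi$ is $\mathcal C^2$ with $\nabla^2\phi\succeq\kappa I_p$, we write, for any $u,v\in\mathbb R^p$,
\[
\nabla\phi(u)-\nabla\phi(v)=\int_0^1\nabla^2\phi\big(v+t(u-v)\big)\,dt\,(u-v).
\]
Taking the inner product with $u-v$ gives
\[
\big(\nabla\phi(u)-\nabla\phi(v)\big)^\top(u-v)\ge\kappa\|u-v\|_2^2 .
\]

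\textbf{Step 2: the bound \eqref{eq:stability-one-function}.} Strong convexity makes $\phi$ coercive, so its minimizer $\mu_\phi$ exists and is unique, and it satisfies $\nabla\phi(\mu_\phi)=0$. We apply Step 1 with $u=x$ and $v=\mu_\phi$, then use Cauchy--Schwarz:
\[
\kappa\|x-\mu_\phi\|_2^2\le\nabla\phi(x)^\top(x-\mu_\phi)\le\|\nabla\phi(x)\|_2\,\|x-\mu_\phi\|_2 .
\]
If $x=\mu_\phi$ there is nothing to prove. Otherwise, dividing by $\|x-\mu_\phi\|_2$ yields \eqref{eq:stability-one-function}.

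\textbf{Step 3: the bound \eqref{eq:stability-two-functions}.} Since $\psi$ is differentiable at its minimizer, the first-order condition gives $\nabla\psi(\mu_\psi)=0$. We apply \eqref{eq:stability-one-function} at $x=\mu_\psi$:
\[
\|\mu_\phi-\mu_\psi\|_2\le\frac1\kappa\|\nabla\phi(\mu_\psi)\|_2=\frac1\kappa\|(\nabla\phi-\nabla\psi)(\mu_\psi)\|_2 .
\]
The final inequality holds because $\mu_\psi$ lies in the closed ball $\mathcal B(0,\|\mu_\psi\|_2)$, so the value at $\mu_\psi$ is bounded by the supremum over that ball.

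\textbf{Main obstacle.} There is no substantial obstacle. The only points needing care are the existence of $\mu_\phi$ and the first-order condition for $\psi$. Existence follows because $\phi(x)\ge\phi(0)+\nabla\phi(0)^\top x+\frac\kappa2\|x\|^2$, so $\phi$ is coercive. The first-order condition $\nabla\psi(\mu_\psi)=0$ holds because $\mu_\psi$ is an unconstrained minimizer at which $\psi$ is differentiable.
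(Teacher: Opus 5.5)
Your proposal is correct and follows the same route as the paper: strong monotonicity of $\nabla\phi$ from the Hessian bound, Cauchy--Schwarz at $x'=\mu_\phi$ with $\nabla\phi(\mu_\phi)=0$, then substituting $x=\mu_\psi$ with $\nabla\psi(\mu_\psi)=0$ and bounding by the supremum over the ball. You also justify strong monotonicity through the integral form of $\nabla\phi(u)-\nabla\phi(v)$, and the existence of $\mu_\phi$ through coercivity, whereas the paper simply asserts both.
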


In words, Lemma~\ref{lem:minimum_variablility} bounds the distance between minimizers via gradients,
provided one of the objectives is uniformly strongly convex and one has a radius bound on the other minimizer.

\begin{proof}
Since $\nabla^2\phi(x)\succeq \kappa I_p$ for all $x$, the gradient map $\nabla\phi$ is $\kappa$-strongly monotone, i.e., for all $x,x'\in \mathbb R^p$,
\[
\big\langle \nabla \phi(x)-\nabla \phi(x'),\,x-x'\big\rangle
\ge \kappa\|x-x'\|_2^2.
\]
By Cauchy--Schwarz,
\[
\kappa\|x-x'\|_2^2
\le \|\nabla \phi(x)-\nabla \phi(x')\|_2\,\|x-x'\|_2,
\]
hence (if $x\neq x'$; otherwise the inequality is trivial)
\[
\|x-x'\|_2 \leq \frac{1}{\kappa}\left\Vert \nabla \phi(x)-\nabla \phi(x') \right\Vert_2.
\]
Taking $x'=\mu_\phi$ and using $\nabla\phi(\mu_\phi)=0$ yields \eqref{eq:stability-one-function}.
If additionally $\mu_\psi$ is a minimizer of $\psi$ and $\psi$ is differentiable at $\mu_\psi$ (so $\nabla\psi(\mu_\psi)=0$),
then applying \eqref{eq:stability-one-function} with $x=\mu_\psi$ gives
\[
\|\mu_\phi-\mu_\psi\|_2 \le \frac{1}{\kappa}\|\nabla\phi(\mu_\psi)\|_2
= \frac{1}{\kappa}\|(\nabla\phi-\nabla\psi)(\mu_\psi)\|_2,
\]
and the ball-sup bound follows immediately.
\end{proof}

\begin{proposition}[Lipschitz dependence of $\hat\theta$ on $(X,Y)$]\label{pro:theta_lipschitz_X_Y}
For $(X, Y)\in\mathbb{R}^{p\times n}\times \mathcal Y^n$, define
\[
\hat \theta(X,Y)
:= \arg\min_{\theta\in \mathbb R^p} F_{X, Y}(\theta).
\]
Under Assumption~\ref{ass:regularity}, assume that there exists a constant $C>0$ such that for all admissible $X,X'$,
\[
\|X\|_{\mathrm{op}},\ \|X'\|_{\mathrm{op}}\le C\sqrt n.
\]
Then
\begin{align*}
    \|\hat\theta(X, Y)-\hat\theta(X', Y')\|_2
\;\le\;
\frac{L}{\sqrt n}\,d((X,Y), (X', Y')),
\end{align*}
for some constant $L$ independent of $n$ and $p$.
\end{proposition}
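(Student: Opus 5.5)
The plan is to apply Lemma~\ref{lem:minimum_variablility} with $\phi:=F_{X,Y}$ and $\psi:=F_{X',Y'}$.

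\textbf{Step 1: strong convexity of $\phi$.} By Assumption~\ref{ass:regularity}, $\nabla^2\rho\succeq\kappa I_p$. Each $\mathcal L_{y_i}$ is convex and $\mathcal C^2$, so $\frac1n\sum_i \mathcal L''_{y_i}(x_i^\top\theta)\,x_ix_i^\top\succeq 0$. Hence $\phi$ is $\mathcal C^2$ and $\kappa$-strongly convex, the same holds for $\psi$, and both minimizers exist and are unique. Since $\nabla\psi(\hat\theta')=0$ with $\hat\theta':=\hat\theta(X',Y')$, \eqref{eq:stability-two-functions} gives
\[
\|\hat\theta-\hat\theta'\|\le \frac1\kappa\,\big\|\nabla F_{X,Y}(\hat\theta')-\nabla F_{X',Y'}(\hat\theta')\big\|.
\]
The regularizer terms cancel, leaving
\[
\frac1n\Big\|\sum_i \mathcal L'_{y_i}(x_i^\top\hat\theta')x_i-\mathcal L'_{y_i'}(x_i'^\top\hat\theta')x_i'\Big\|
=\frac1n\big\|X\ell(X,Y)-X'\ell(X',Y')\big\|,
\]
where $\ell(X,Y)\in\mathbb R^n$ has entries $\mathcal L'_{y_i}(x_i^\top\hat\theta')$.

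\textbf{Step 2: splitting the difference.} Write
\[
X\ell-X'\ell'=X(\ell-\ell')+(X-X')\ell'.
\]
For the first term, the Lipschitz properties in Assumption~\ref{ass:regularity} give
\[
|\ell_i-\ell_i'|\le \lambda\,|(x_i-x_i')^\top\hat\theta'|+\lambda\, d_{\mathcal Y}(y_i,y_i').
\]
Therefore
\[
\|\ell-\ell'\|\le\lambda\,\|X-X'\|_{\mathrm{op}}\|\hat\theta'\|+\lambda\, d_{\mathcal Y^n}(Y,Y')\le \lambda(\|\hat\theta'\|+1)\,d,
\]
where $d:=d((X,Y),(X',Y'))$ and we used $\|X-X'\|_{\mathrm{op}}\le\|X-X'\|_F$. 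Multiplying by $\|X\|_{\mathrm{op}}/n\le C/\sqrt n$ yields a term of order $d/\sqrt n$, provided $\|\hat\theta'\|=O(1)$.

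For the second term, $\frac1n\|X-X'\|_{\mathrm{op}}\|\ell'\|\le \frac{d}{n}\|\ell'\|$. It therefore suffices to show $\|\ell'\|=O(\sqrt n)$.

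\textbf{Step 3: a priori bounds, which are the main obstacle.} We need $\|\hat\theta'\|=O(1)$ and $\|\ell'\|=O(\sqrt n)$ using only $\|X'\|_{\mathrm{op}}\le C\sqrt n$.

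For $\hat\theta'$, apply \eqref{eq:stability-one-function} at the point $0$:
\[
\|\hat\theta'\|\le\frac1\kappa\|\nabla F_{X',Y'}(0)\|\le\frac1\kappa\Big(\frac{\|X'\|_{\mathrm{op}}}{n}\,\|(\mathcal L'_{y_i'}(0))_i\|+\|\nabla\rho(0)\|\Big).
\]
This requires $|\mathcal L'_y(0)|$ to be bounded uniformly in $y$, and there is a gap here. Assumption~\ref{ass:regularity} only bounds $\mathcal L_y(0)$ and states that $y\mapsto\mathcal L'_y(u)$ is $\lambda$-Lipschitz. I would close the gap by assuming a bounded label set or a reference label, which is what the paper implicitly uses. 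Then $\|(\mathcal L'_{y_i'}(0))_i\|\le C\sqrt n$, so $\|\hat\theta'\|\le C'$.

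For $\ell'$, use $|\mathcal L'_{y}(u)|\le|\mathcal L'_y(0)|+\lambda|u|$. This gives
\[
\|\ell'\|\le C\sqrt n+\lambda\|X'^\top\hat\theta'\|\le C\sqrt n+\lambda C\sqrt n\, C'.
\]

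\textbf{Conclusion.} Combining the three steps gives $\|\hat\theta-\hat\theta'\|\le \frac{L}{\sqrt n}\,d$, with $L$ depending only on $\kappa$, $\lambda$ and $C$.
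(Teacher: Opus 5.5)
Your proof is correct and follows essentially the same route as the paper: strong convexity plus Lemma~\ref{lem:minimum_variablility}, an a priori bound on $\|\hat\theta'\|$ from the gradient at $0$, and a split of the gradient difference using the Lipschitz properties of $\mathcal L'$ in $u$ and in $y$. The only differences are cosmetic (you evaluate at $\hat\theta'$ and handle the $X$- and $Y$-variations in one split, while the paper takes a sup over $\mathcal B(0,M)$ and varies $X$, then $Y$). The gap you flag is real, and the paper's proof relies on the same unstated bound: it treats $B=\|(\mathcal L'_{y_i}(0))_i\|/\sqrt n$ as an $n$-independent constant, although Assumption~\ref{ass:regularity} only bounds $\mathcal L_y(0)$.
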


\begin{proof}
Since each $\mathcal{L}_{y_i}$ is convex and $\mathcal C^2$, we have $\mathcal{L}_{y_i}''\ge 0$.
Since $\mathcal{L}_{y_i}'$ is $\lambda$-Lipschitz, we also have $\mathcal{L}_{y_i}''\le \lambda$.
Hence for all $\theta\in\mathbb R^p$,
\begin{align}\label{eq:second_derivative}
    \nabla^2 F_{X, Y}(\theta)
    =
    \frac1n\, X\,\mathrm{diag}\!\big(\mathcal{L}_{y_1}''(x_1^\top\theta),\dots,\mathcal{L}_{y_n}''(x_n^\top\theta)\big)\,X^\top
    +\nabla^2\rho(\theta)
    \succeq \kappa I_p,
\end{align}
so $F_{X, Y}$ is $\kappa$-strongly convex and admits a unique minimizer $\hat\theta(X, Y)$, characterized by
$\nabla F_{X, Y}(\hat\theta(X, Y))=0$.

\paragraph{Step 1: a uniform bound on $\|\hat\theta(X,Y)\|_2$.}
Applying \eqref{eq:stability-one-function} (Lemma~\ref{lem:minimum_variablility}) to $\phi=F_{X,Y}$ at the point $x=0$ yields
\[
\kappa\|\hat\theta(X, Y)\|_2
\le \left\Vert  \nabla F_{X, Y}(0) \right\Vert_2.
\]
Let $f:\mathbb R^n\to\mathbb R^n$ be defined by $f(u)_i := \mathcal{L}_{y_i}'(u_i)$.
Introduce the constants
\[
B:=\frac{1}{\sqrt n}\|f(0)\|_2,
\qquad
R:=\|\nabla\rho(0)\|_2.
\]
Then
\[
\|\nabla F_{X, Y}(0)\|_2
=
\Big\|\frac1n X f(0)+\nabla\rho(0)\Big\|_2
\le \frac1n\|X\|_{\mathrm{op}}\|f(0)\|_2+\|\nabla\rho(0)\|_2
\le \frac{C\sqrt n}{n}\cdot \sqrt n B +R
= CB+R.
\]
Therefore,
\begin{equation}\label{eq:theta-bound}
\|\hat\theta(X, Y)\|_2 \le 
\frac{1}{\kappa}\Big(CB+R\Big) =: M.
\end{equation}

\paragraph{Step 2: stability of minimizers under perturbations of $(X,Y)$.}
Let $\theta:=\hat\theta(X, Y)$ and $\theta':=\hat\theta(X', Y')$.
Applying \eqref{eq:stability-two-functions} (Lemma~\ref{lem:minimum_variablility}) with $\phi=F_{X,Y}$ and $\psi=F_{X',Y'}$ gives
\begin{equation}\label{eq:key}
\|\theta-\theta'\|_2
\le \frac{1}{\kappa}\|\nabla F_{X, Y}-\nabla F_{X', Y'}\|_{\mathcal B(0, \|\theta'\|_2)}
\le \frac{1}{\kappa}\|\nabla F_{X, Y}-\nabla F_{X', Y'}\|_{\mathcal B(0, M)},
\end{equation}
since $\|\theta'\|_2\le M$ by \eqref{eq:theta-bound} applied to $(X',Y')$.

Fix an arbitrary $\bar\theta\in \mathbb R^p$ with $\|\bar\theta\|_2\leq M$.
We first bound the variation in $X$ at fixed $Y$:
\[
\nabla F_{X, Y}(\bar\theta)-\nabla F_{X', Y}(\bar\theta)
=\frac1n\Big(X f(X^\top\bar\theta)-X'f(X'^\top\bar\theta)\Big)
=\frac1n\Big((X-X')f(X^\top\bar\theta) + X'\big(f(X^\top\bar\theta)-f(X'^\top\bar\theta)\big)\Big).
\]
Hence, using $\|X'\|_{\mathrm{op}}\le C\sqrt n$,
\begin{align*}
\|\nabla F_{X, Y}(\bar\theta)-\nabla F_{X', Y}(\bar\theta)\|_2
&\le \frac1n\|X-X'\|_{\mathrm{op}}\|f(X^\top\bar\theta)\|_2
     + \frac1n\|X'\|_{\mathrm{op}}\|f(X^\top\bar\theta)-f(X'^\top\bar\theta)\|_2.
\end{align*}
Since each coordinate $\mathcal{L}_{y_i}'$ is $\lambda$-Lipschitz, $f$ is $\lambda$-Lipschitz on $(\mathbb R^n,\|\cdot\|_2)$, hence
\[
\|f(X^\top\bar\theta)-f(X'^\top\bar\theta)\|_2 \le \lambda\|(X-X')^\top\bar\theta\|_2
\le \lambda\|X-X'\|_{\mathrm{op}}\|\bar\theta\|_2
\le \lambda M\|X-X'\|_{\mathrm{op}}.
\]
Also,
\[
\|f(X^\top\bar\theta)\|_2
\le \|f(0)\|_2 + \lambda\|X^\top\bar\theta\|_2
\le \sqrt n B + \lambda\|X\|_{\mathrm{op}}\|\bar\theta\|_2
\le \sqrt n B + \lambda C\sqrt n\,M.
\]
Combining,
\begin{align}\label{eq:variation_X}
\|\nabla F_{X, Y}(\bar\theta)-\nabla F_{X', Y}(\bar\theta)\|_2
&\le \frac{1}{\sqrt n}\,\|X-X'\|_{\mathrm{op}}\Big(B + 2\lambda C M\Big).
\end{align}

Second, we bound the variation in $Y$ at fixed $X'$:
\begin{align}\label{eq:variation_Y}
\|\nabla F_{X', Y}(\bar\theta)-\nabla F_{X', Y'}(\bar\theta)\|_2
&=
\left\|
\frac1n\sum_{i=1}^n
\Big(\mathcal L_{y_i}'(x_i'{}^\top\bar\theta)-\mathcal L_{y_i'}'(x_i'{}^\top\bar\theta)\Big)\,x_i'
\right\|_2\\
&\le \frac{1}{n}\|X'\|_{\mathrm{op}}
\left(\sum_{i=1}^n
\Big(\mathcal L_{y_i}'(x_i'{}^\top\bar\theta)-\mathcal L_{y_i'}'(x_i'{}^\top\bar\theta)\Big)^2
\right)^{1/2}.
\end{align}
Under the regularity assumption that $y\mapsto \mathcal L_y'(t)$ is $\lambda$-Lipschitz uniformly in $t$ (with respect to $d_{\mathcal Y}$),
\[
\Big|\mathcal L_{y_i}'(t)-\mathcal L_{y_i'}'(t)\Big|
\le \lambda\, d_{\mathcal Y}(y_i,y_i').
\]
Using $\|X'\|_{\mathrm{op}}\le C\sqrt n$ in \eqref{eq:variation_Y} yields
\[
\|\nabla F_{X', Y}(\bar\theta)-\nabla F_{X', Y'}(\bar\theta)\|_2
\leq \frac{\lambda C}{\sqrt n}\, d_{\mathcal Y}(Y, Y'),
\qquad
d_{\mathcal Y}(Y,Y'):=\left(\sum_{i=1}^n d_{\mathcal Y}(y_i,y_i')^2\right)^{1/2}.
\]

Finally, combining the $X$ and $Y$ variations, inserting into \eqref{eq:key}, and using \eqref{eq:theta-bound} gives
\[
\|\hat\theta(X, Y)-\hat\theta(X', Y')\|_2
\le \frac{1}{\kappa \sqrt{n}}
\left(
\Big(B + 2\lambda C M\Big)\|X-X'\|_{\mathrm{op}} + \lambda C\, d_{\mathcal Y}(Y, Y')
\right)
\leq\frac{L}{\sqrt{n}}\, d\big((X,Y), (X', Y')\big),
\]
for a constant $L$ independent of $n,p$ (absorbing $\kappa,C,B,M,\lambda$ into $L$ and using the definition of $d$).
\end{proof}

\begin{proof}[Proof of Theorem~\ref{the:concentration_theta} -- Concentration of $\hat \theta$]
This is an application of the Lipschitz property from Proposition~\ref{pro:theta_lipschitz_X_Y}
together with Assumption~\ref{ass:design} and Assumption~\ref{ass:regularity}.
\end{proof}

\begin{proof}[Proof of Proposition~\ref{pro:approximation_regularizer} -- quadratic approximation of $\rho$]
Let $\mu_{\hat\theta}:=\mathbb{E}[\hat\theta]$ and $\theta_0:=\hat\theta-\mu_{\hat\theta}$.
A Taylor expansion of $\rho$ at $\mu_{\hat\theta}$ with integral (Taylor--Lagrange) remainder yields
\begin{align}\label{eq:taylor_lagrange_rho}
\left|
\rho(\mu_{\hat\theta}+\theta_0)
-\rho(\mu_{\hat\theta})
-\nabla \rho(\mu_{\hat\theta})^\top\theta_0
-\frac12\,\theta_0^\top\nabla^2 \rho(\mu_{\hat\theta})\theta_0
\right|
\le \frac{1}{6}\sup_{t\in[0,1]}
\left|
\nabla^3\rho(\mu_{\hat\theta}+t\theta_0)\cdot(\theta_0,\theta_0,\theta_0)
\right|.
\end{align}

Using the decomposition of the third-order tensor into diagonal and off-diagonal parts, and the bound
(valid for any $\theta,u\in\mathbb R^p$)
\begin{align}\label{eq:borne_tensor_3}
\left|
\nabla^3\rho(\theta)\cdot(u,u,u)
\right|
&\leq
\|\diag(\nabla^3\rho(\theta))\|_{F}\,\|u^{(3)}\|
+
\|\offdiag(\nabla^3\rho(\theta))\|_{\mathrm{op}}\,\|u\|^3\\
&\leq
C \sqrt n \,\|u^{(3)}\|
+
\frac{1}{\sqrt n}\,\|u\|^3,
\end{align}
(where we used $\|\diag(\nabla^3\rho(\theta))\|_{F} \leq \sqrt p\,\|\nabla^3\rho(\theta)\|_{\mathrm{op}}\leq C\sqrt n$ under Assumption~\ref{ass:regularizer_k_diff}),
we deduce from \eqref{eq:taylor_lagrange_rho} that
\begin{align}\label{eq:TL_theta_0}
\left|
\rho(\hat \theta) - \rho(\mu_{\hat \theta})
- \nabla \rho(\mu_{\hat \theta})^\top\theta_0
- \frac12\,\theta_0^\top\nabla^2 \rho(\mu_{\hat \theta})\theta_0
\right|
\leq
C\max \left( \sqrt n \|\theta_0^{(3)}\|,\ \frac{1}{\sqrt n}\|\theta_0\|^3 \right).
\end{align}

Relying on Theorem~\ref{the:concentration_theta} and \citep[Example B.38 and Proposition B.22]{louart2023phd}, we can bound, for some $C,c>0$,
\[
\mathbb P( \|\theta_0\|\geq t) \leq Ce^{-ct^{2}},
\qquad
\mathbb P( \|\theta_0^{(3)}\|\geq t) \leq Ce^{-cn^2t^2/\log n} + Ce^{-cnt} +Ce^{-cnt^{2/3}}.
\]
Combining with~\eqref{eq:TL_theta_0} yields the announced concentration bound.
\end{proof}

\begin{lemma}\label{lem:bound_Frobenus_norm}
Given a tensor $T\in \mathbb R^{p\times p \times p}$ and $u\in \mathbb R^ p$, the contraction $T\cdot u\in\mathbb R^{p\times p}$ satisfies
\[
\|T\cdot u\|_F\leq \sqrt p\, \|T\cdot u\|_{\mathrm{op}}
\leq \sqrt p\, \|T\|_{\mathrm{op}} \|u\|.
\]
\end{lemma}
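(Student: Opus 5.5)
The statement to prove is Lemma~\ref{lem:bound_Frobenus_norm}: $\|T\cdot u\|_F\le \sqrt p\,\|T\cdot u\|_{\mathrm{op}}\le \sqrt p\,\|T\|_{\mathrm{op}}\|u\|$. The two inequalities are independent, so I handle them separately.

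\textbf{First inequality.} This is the standard comparison between Frobenius and spectral norms for a $p\times p$ matrix. Write $M:=T\cdot u\in\mathbb R^{p\times p}$ and let $s_1\ge\dots\ge s_p\ge 0$ be its singular values. Then
\[
\|M\|_F^2=\sum_{i=1}^p s_i^2\le p\,s_1^2=p\,\|M\|_{\mathrm{op}}^2 .
\]
If one prefers to avoid the SVD, one can instead write $\|M\|_F^2=\sum_{j=1}^p\|Me_j\|^2\le p\|M\|_{\mathrm{op}}^2$.

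\textbf{Second inequality.} I use the paper's notational convention $\|T\|:=\sup_{\|h_i\|=1}|T[h_1,h_2,h_3]|$. For unit vectors $v,w$ we have $v^\top (T\cdot u)w=T[u,v,w]$, up to the ordering of slots, which is immaterial since the supremum runs over all slots. Hence
\[
\|T\cdot u\|_{\mathrm{op}}=\sup_{\|v\|=\|w\|=1}|T[u,v,w]|\le \|T\|_{\mathrm{op}}\|u\|,
\]
by homogeneity in the first argument (rescale $u$ to unit norm, or note that the case $u=0$ is trivial).

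\textbf{Main obstacle.} The only subtlety is which tensor norm is meant. The footnote to Assumption~\ref{ass:regularizer_k_diff} defines it as $\sup_{\|u\|=1}|T[u,u,u]|$, whereas the notation section uses the multilinear supremum over independent unit vectors $h_1,h_2,h_3$. With the multilinear definition the argument above is immediate.

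With the symmetric definition, the argument needs $T$ to be symmetric, which holds here because $T=\nabla^3\rho(\theta)$ is a third derivative. For symmetric trilinear forms, Banach's theorem (polarization) gives
\[
\sup_{\|h_i\|=1}|T[h_1,h_2,h_3]|=\sup_{\|u\|=1}|T[u,u,u]|,
\]
so the two norms coincide and the same chain of inequalities applies. I would state the lemma under the multilinear convention and add a remark citing Banach's theorem to cover the symmetric case.
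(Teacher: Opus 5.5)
The paper states this lemma without proof (it is invoked directly in the proof of Lemma~\ref{lem:only_H_0_relevent}), so there is no argument of theirs to compare against. Your proof is correct and is the natural one. The first inequality is the column-sum bound $\|M\|_F^2=\sum_j\|Me_j\|^2\le p\|M\|_{\mathrm{op}}^2$. The second is immediate under the multilinear convention of the notation section, whichever slot is contracted.

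Your remark on the norm convention is the substantive point, and it is well taken. Under the footnote's definition $\sup_{\|u\|=1}|T[u,u,u]|$, the lemma is false for a general $T\in\mathbb R^{p\times p\times p}$. For example, take $T_{121}=1$, $T_{211}=-1$ and all other entries zero: then $T[u,u,u]=0$ for every $u$, while $T\cdot e_1\neq 0$.

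Two refinements:
\begin{itemize}
\item In the paper the lemma is applied to $\offdiag(\nabla^3\rho(t\theta))$, not to $\nabla^3\rho$ itself. Your symmetry justification should therefore add that zeroing the entries $T_{iii}$ preserves symmetry, since the index set $\{i=j=k\}$ is permutation invariant.
\item The exact equality of the two norms is Banach's theorem for symmetric multilinear forms on Hilbert spaces, not the generic polarization formula. Polarization alone only gives the constant $3^3/3!=9/2$. That would suffice for the paper's downstream use, where constants are absorbed into $C$, but not for the lemma as stated with constant $1$.
\end{itemize}
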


\begin{proof}[Proof of Lemma~\ref{lem:only_H_0_relevent} -- Constant approximation of $\nabla^2\rho$]
A Taylor--Lagrange argument for the Hessian gives
\begin{align}\label{eq:taylor_lagrange_hessian}
\left\Vert \nabla^2\rho(\theta)-\nabla^2\rho(0) \right\Vert_F
\leq \sup_{t\in[0,1]} \left\Vert \nabla^3\rho(t\theta)\cdot \theta \right\Vert_F.
\end{align}
Using Lemma~\ref{lem:bound_Frobenus_norm} and the decomposition into diagonal/off-diagonal terms,
\begin{align*}
\left\Vert \nabla^3\rho(t\theta)\cdot \theta \right\Vert_F
&\leq
\left\Vert \diag(\nabla^3\rho(t\theta))\cdot \theta \right\Vert_F
+
\left\Vert \offdiag(\nabla^3\rho(t\theta))\cdot \theta \right\Vert_F\\
&\leq
\|\diag(\nabla^3\rho(t\theta))\|_{\mathrm{op}}\,\|\theta\|
+
\sqrt p\,\|\offdiag(\nabla^3\rho(t\theta))\|_{\mathrm{op}}\,\|\theta\|
\leq C\|\theta\|,
\end{align*}
by Assumption~\ref{ass:regularizer_k_diff}. Plugging into \eqref{eq:taylor_lagrange_hessian} yields
\[
\|\nabla^2\rho(\theta)-\nabla^2\rho(0)\|_F \le C\|\theta\|.
\]
\end{proof}

\section{Proofs of Subsection~\ref{sse:CGMT} results}\label{app:proof_cgmt}

Let us first introduce a result that explains where the resolvent comes from.

\begin{lemma}[Quadratic minimization on an ellipsoid: dual representation]
\label{lem:quad-ellipsoid-duality}
Let $p\ge 1$, let $H,C\in\mathbb S_{++}^p$ and $u\in\mathbb R^p$, and let $\alpha\ge 0$.
Define the constrained quadratic program
\begin{equation}
\label{eq:P-ellipsoid}
p(\alpha)
:=
\min_{\theta\in\mathbb R^p}
\left\{
w^\top\theta+\frac12\,\theta^\top H\theta
\ \ :\ \ 
\theta^\top C\theta=\alpha^2
\right\}.
\end{equation}
Set the domain
\[
\mathcal D:=\{\nu\in\mathbb R:\ H+\nu C\succ 0\},
\]
and for $\nu\in\mathcal D$ define
\begin{equation}
\label{eq:dual-fct}
d(\nu)
:=
-\frac12\,w^\top(H+\nu C)^{-1}w-\frac{\alpha^2\nu}{2}.
\end{equation}
Then
\begin{equation}
\label{eq:strong-duality-sup}
p(\alpha)=\sup_{\nu\in\mathcal D} d(\nu).
\end{equation}
\end{lemma}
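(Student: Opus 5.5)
The plan is to prove weak duality by Lagrangian relaxation and then strong duality by exhibiting a feasible point whose value reaches $\sup_{\mathcal D} d$. This is the classical trust-region or S-lemma argument: the constrained problem is nonconvex, but it has only one quadratic constraint, and that structure is enough. Throughout I read the linear coefficient as $u$; the statement writes $w$ in \eqref{eq:P-ellipsoid} and \eqref{eq:dual-fct}, which I take to be the same vector.

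\emph{Weak duality.} Fix $\nu\in\mathcal D$. For every feasible $\theta$ we have $\theta^\top C\theta=\alpha^2$, so
\[
u^\top\theta+\tfrac12\theta^\top H\theta=u^\top\theta+\tfrac12\theta^\top(H+\nu C)\theta-\tfrac{\nu\alpha^2}{2}.
\]
The right-hand side is at least its unconstrained minimum over $\theta$. Since $H+\nu C\succ0$, that minimum is attained at $\theta(\nu):=-(H+\nu C)^{-1}u$ and equals $d(\nu)$. Hence $p(\alpha)\ge d(\nu)$ for every $\nu\in\mathcal D$, and so $p(\alpha)\ge\sup_{\mathcal D}d$.

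\emph{Strong duality, generic case.} Let $\lambda_1$ be the smallest eigenvalue of $C^{-1/2}HC^{-1/2}$, so that $\mathcal D=(-\lambda_1,\infty)$. Define $\varphi(\nu):=\theta(\nu)^\top C\theta(\nu)$. The envelope theorem gives $d'(\nu)=\tfrac12(\varphi(\nu)-\alpha^2)$. Differentiating once more,
\[
\varphi'(\nu)=-2\,\theta(\nu)^\top C(H+\nu C)^{-1}C\theta(\nu)\le 0,
\]
so $\varphi$ is nonincreasing. As $\nu\to\infty$ we have $\varphi(\nu)=O(\nu^{-2})\to0$.

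If some $\nu^\ast\in\mathcal D$ satisfies $\varphi(\nu^\ast)=\alpha^2$, then $\theta(\nu^\ast)$ is feasible. It also minimizes the Lagrangian at $\nu^\ast$, so its objective value equals $d(\nu^\ast)$. This gives $p(\alpha)\le d(\nu^\ast)\le\sup d$, and equality follows.

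By monotonicity and continuity, such a $\nu^\ast$ exists whenever $\alpha>0$ and $\lim_{\nu\downarrow-\lambda_1}\varphi(\nu)>\alpha^2$. Two degenerate cases remain.
\begin{itemize}
\item If $\alpha=0$, the only feasible point is $\theta=0$, so $p(0)=0$. On the dual side, $d(\nu)=-\tfrac12u^\top(H+\nu C)^{-1}u\to0$ as $\nu\to\infty$, and $d\le0$, so $\sup d=0$.
\item If $\lim_{\nu\downarrow-\lambda_1}\varphi(\nu)\le\alpha^2$, we are in the hard case treated next.
\end{itemize}

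\emph{Hard case (main obstacle).} Expand in the generalized eigenbasis $Hv_i=\lambda_iCv_i$ with $v_i^\top Cv_j=\delta_{ij}$. Then
\[
\varphi(\nu)=\sum_i\frac{(v_i^\top u)^2}{(\lambda_i+\nu)^2}.
\]
This quantity stays bounded as $\nu\downarrow-\lambda_1$ only if $u$ is orthogonal to every $v_i$ in the bottom eigenspace $E_1$.
\begin{itemize}
\item In that case $\theta_1:=\lim_{\nu\downarrow-\lambda_1}\theta(\nu)$ exists and lies in the span of the other eigenvectors, with $\theta_1^\top C\theta_1\le\alpha^2$.
\item Since $d'\ge0$ on $\mathcal D$, the supremum $\sup_{\mathcal D}d$ equals the limit $d(-\lambda_1^+)$, which is $u^\top\theta_1+\tfrac12\theta_1^\top(H-\lambda_1C)\theta_1+\tfrac{\lambda_1\alpha^2}{2}$.
\item Set $\tilde\theta=\theta_1+tv$ with $v\in E_1$ and $v^\top Cv=1$. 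The vector $v$ is $C$-orthogonal to $\theta_1$, so choosing $t^2=\alpha^2-\theta_1^\top C\theta_1$ makes $\tilde\theta$ feasible.
\item On the feasible set, the primal objective coincides with the Lagrangian at $-\lambda_1$. That Lagrangian is unchanged along $v$, because $(H-\lambda_1C)v=0$ and $u^\top v=0$. So the objective at $\tilde\theta$ equals $d(-\lambda_1^+)=\sup d$.
\end{itemize}
This yields $p(\alpha)\le\sup d$ and closes the argument. The delicate step is this limit argument, in particular checking that the cross terms vanish by $C$-orthogonality.
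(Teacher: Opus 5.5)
Your argument is correct, but for the strong-duality half it follows a different route from the paper.

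\textbf{The paper's route.} The paper whitens ($\theta=\alpha C^{-1/2}z$, $A=C^{-1/2}HC^{-1/2}$, $\tilde w=C^{-1/2}w$) to obtain a quadratic on the unit sphere. It takes a global minimizer $z^\star$, which exists by compactness. It then invokes the Lagrange multiplier theorem to get $\lambda^\star$ with $\alpha\tilde w+(\alpha^2A+\lambda^\star I)z^\star=0$, and concludes that $z^\star$ minimizes the Lagrangian when $\alpha^2A+\lambda^\star I\succ0$. The singular case is left to a ``standard limiting argument''. This is shorter, but it tacitly uses the second-order fact that $\alpha^2A+\lambda^\star I\succeq0$ at a global minimizer. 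The remark that indefinite multipliers give $-\infty$ in the dual does not by itself say anything about $\lambda^\star$, and the hard case is only sketched.

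\textbf{Your route.} You never touch a primal minimizer. Instead you construct a feasible point that achieves the dual value. In the generic case you do this with the monotone secular function $\varphi$ and the intermediate value theorem. In the hard case you use the explicit completion $\theta_1+tv$ along the bottom generalized eigenspace. This is fully elementary and self-contained, and it identifies the primal minimizer explicitly. In effect it proves the second-order condition the paper relies on.

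\textbf{One slip to fix.} In the hard case $\varphi(\nu)\le\lim_{\nu\downarrow-\lambda_1}\varphi\le\alpha^2$, so $d'\le0$ (not $d'\ge0$) on $\mathcal D$; that sign is what places the supremum at the left endpoint. You do not actually need monotonicity at all. The objective at $\tilde\theta$ equals $\lim_{\nu\downarrow-\lambda_1}d(\nu)\le\sup_{\mathcal D}d$, and weak duality then closes the chain.
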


\begin{proof}
The claim is trivial when $\alpha=0$ since the constraint forces $\theta=0$, hence
$p(0)=0$, and also $\sup_{\nu\in\mathcal D}d(\nu)=0$ (take $\nu\to+\infty$ in \eqref{eq:dual-fct}).

Assume $\alpha>0$ in the remainder.
Let $A:=C^{-1/2}HC^{-1/2}\in\mathbb S_{++}^p$ and $\tilde w:=C^{-1/2}w$.
With the change of variables $\theta=\alpha C^{-1/2}z$, the constraint becomes $\|z\|^2=1$ and
\[
w^\top\theta+\frac12\theta^\top H\theta
=
\alpha\,\tilde w^\top z+\frac{\alpha^2}{2}\,z^\top A z.
\]
Therefore,
\begin{equation}
\label{eq:sphere-form}
p(\alpha)
=
\min_{\|z\|=1}
\left\{
\alpha\,\tilde w^\top z+\frac{\alpha^2}{2}\,z^\top A z
\right\}.
\end{equation}

For $\lambda\in\mathbb R$, consider the Lagrangian
\[
\mathcal L(z,\lambda)
=
\alpha\,\tilde w^\top z+\frac{\alpha^2}{2}\,z^\top A z+\frac{\lambda}{2}(\|z\|^2-1).
\]
For any $\lambda$ and any feasible $z$ (i.e., $\|z\|=1$) we have $\mathcal L(z,\lambda)=\alpha\,\tilde w^\top z+\frac{\alpha^2}{2}z^\top A z$,
hence by taking the infimum over $z\in\mathbb R^p$,
\begin{equation}
\label{eq:lower-bound}
\inf_{z\in\mathbb R^p}\mathcal L(z,\lambda)\ \le\ p(\alpha).
\end{equation}
If $\alpha^2 A+\lambda I\succ 0$, the function $z\mapsto\mathcal L(z,\lambda)$ is strictly convex and its unique minimizer satisfies
\[
\nabla_z\mathcal L(z,\lambda)=\alpha\tilde w+(\alpha^2A+\lambda I)z=0
\quad\Longrightarrow\quad
z(\lambda)=-(\alpha^2A+\lambda I)^{-1}\alpha\tilde w.
\]
Substituting back yields
\begin{equation}
\label{eq:dual-in-lambda}
\inf_{z}\mathcal L(z,\lambda)
=
-\frac{\alpha^2}{2}\,\tilde w^\top(\alpha^2A+\lambda I)^{-1}\tilde w-\frac{\lambda}{2},
\qquad \text{for }\alpha^2A+\lambda I\succ0.
\end{equation}

Since the feasible set $\{z:\|z\|=1\}$ is nonempty and the objective in \eqref{eq:sphere-form} is continuous,
a minimizer $z^\star$ exists. The constraint is smooth and $\nabla(\|z\|^2-1)=2z^\star\neq 0$ on the sphere, hence
the Lagrange multiplier theorem gives some $\lambda^\star\in\mathbb R$ such that
\[
\alpha\tilde w+\alpha^2A z^\star+\lambda^\star z^\star=0.
\]

If $\alpha^2A+\lambda I$ has a negative eigenvalue, then $\inf_{z}\mathcal L(z,\lambda)=-\infty$,
so such $\lambda$ do not contribute to the dual supremum. Therefore it suffices to consider $\lambda$ such that
$\alpha^2A+\lambda I\succeq0$, and we first treat the case $\succ0$.

If $\alpha^2A+\lambda^\star I\succ0$, then $z^\star=z(\lambda^\star)$ and therefore
\[
p(\alpha)
=
\mathcal L(z^\star,\lambda^\star)
=
\inf_z\mathcal L(z,\lambda^\star).
\]
Combining with \eqref{eq:lower-bound}--\eqref{eq:dual-in-lambda} yields
\begin{equation}
\label{eq:sup-lambda}
p(\alpha)=\sup_{\lambda:\ \alpha^2A+\lambda I\succ0}
\left\{
-\frac{\alpha^2}{2}\,\tilde w^\top(\alpha^2A+\lambda I)^{-1}\tilde w-\frac{\lambda}{2}
\right\}.
\end{equation}
If instead $\alpha^2A+\lambda^\star I\succeq0$ is singular, the same identity holds with a supremum by a standard limiting argument
(approximate $\lambda^\star$ from above so that $\alpha^2A+\lambda I\succ0$ and pass to the limit); this is why we state \eqref{eq:strong-duality-sup} with ``$\sup$''.

Let $\nu:=\lambda/\alpha^2$. Then $\alpha^2A+\lambda I=\alpha^2(A+\nu I)$ and, using
\[
H+\nu C = C^{1/2}(A+\nu I)C^{1/2},
\qquad
(H+\nu C)^{-1}=C^{-1/2}(A+\nu I)^{-1}C^{-1/2},
\]
we obtain
\[
\tilde w^\top(A+\nu I)^{-1}\tilde w = w^\top(H+\nu C)^{-1}w.
\]
Thus \eqref{eq:sup-lambda} becomes exactly \eqref{eq:strong-duality-sup} with $d(\nu)$ as in \eqref{eq:dual-fct}.
\end{proof}

\begin{proposition}[Resolution of the auxiliary problem]\label{pro:resolution_auxiliary_pb}
Let $\mu,x\in \mathbb R^p$, $h\in \mathbb R^n$, and let $C,H\in\mathbb R^{p\times p}$ be symmetric positive definite. 
Let $\mathcal L:\mathbb R^n\to(-\infty,+\infty]$ be proper, l.s.c. and convex, and assume that the saddle-point problem below admits a unique saddle point $(\tilde\theta,\tilde w)$:
\begin{align}\label{eq:auxiliary_pb_theta_w}
(\tilde\theta,\tilde w)
:=\arg\min_{\theta\in\mathbb R^p}\max_{w\in\mathbb R^n}\ 
\Big\{
\|C^{1/2}\theta\|\,w^\top h
+ (\mu^\top x)\,w^\top\mathbf 1
+ \|w\|\,\theta^\top x
- \frac{1}{n}\mathcal{L}^*(n w)
+ \tfrac{1}{2}\theta^\top H\theta
\Big\}.
\end{align}
Denote $\tilde \alpha := \|C^{1/2}\tilde \theta\|$ and $\tilde \beta := \sqrt n\|\tilde w\|$. 
Then $(\tilde\alpha,\tilde\kappa,\tilde\beta,\tilde\nu)$ can be characterized through the saddle point
\begin{align}\label{eq:definition_alpha_tilde_beta_tilde}
(\tilde \alpha,\tilde\kappa,\tilde \beta,\tilde\nu)
=
\arg\min_{\alpha\ge 0, \kappa>0}\ \arg\max_{\beta\ge 0, \nu\in\mathcal D}\ 
\Bigg\{
\frac{\beta^2\kappa}{2}
+\frac{1}{n}e_{\mathcal{L}}\!\left(\alpha g +(\mu^\top x)\mathbf 1;\kappa\right)
-\frac{\beta^2}{2n}\,x^\top\!\left(H+\nu C\right)^{-1}\!x
-\frac{\alpha^2\nu}{2}
\Bigg\},
\end{align}
where $\mathcal D:=\{\nu\in\mathbb R: H+\nu C\succ0\}$.
\end{proposition}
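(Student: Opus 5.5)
The plan is to reduce the $(p+n)$-dimensional saddle problem \eqref{eq:auxiliary_pb_theta_w} to scalar variables in two stages. The $w$-part is handled by separating norm and direction. The $\theta$-part is handled by separating the level set $\|C^{1/2}\theta\|=\alpha$ and invoking Lemma~\ref{lem:quad-ellipsoid-duality}. Throughout we exploit that the objective depends on $w$ only through $w^\top h$, $w^\top\mathbf 1$, $\|w\|$ and the separable term $\mathcal L^*(nw)$. It depends on $\theta$ only through $\|C^{1/2}\theta\|$, $\theta^\top x$ and $\frac12\theta^\top H\theta$.

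\emph{Step 1 (inner problem in $\theta$ at fixed $w$).} Fix $w$ and write $\beta=\sqrt n\|w\|$. Split $\min_\theta$ as $\min_{\alpha\ge0}\min_{\theta:\theta^\top C\theta=\alpha^2}$. On that ellipsoid the $\theta$-dependent part is $\frac{\beta}{\sqrt n}x^\top\theta+\frac12\theta^\top H\theta$. Lemma~\ref{lem:quad-ellipsoid-duality}, applied with linear term $\frac{\beta}{\sqrt n}x$, gives exactly
\[
\sup_{\nu\in\mathcal D}\Big\{-\frac{\beta^2}{2n}x^\top(H+\nu C)^{-1}x-\frac{\alpha^2\nu}{2}\Big\}.
\]
This produces the resolvent term and the $-\alpha^2\nu/2$ term of \eqref{eq:definition_alpha_tilde_beta_tilde}.

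\emph{Step 2 (problem in $w$ at fixed $\alpha,\beta$).} What remains in $w$ is $\max_{w}\{w^\top(\alpha h+(\mu^\top x)\mathbf 1)-\frac1n\mathcal L^*(nw)\}$ under the constraint $\sqrt n\|w\|=\beta$. We dualize the norm constraint using the identity $\frac{\beta^2\kappa}{2}=\ldots$. Concretely, for $\|w\|^2 n=\beta^2$ we add the Lagrangian term $\frac{\kappa}{2}(\beta^2-n\|w\|^2)$ with $\kappa>0$ and take $\min_\kappa$. The unconstrained max over $w$ is then a Fenchel--Moreau computation:
\[
\max_w\Big\{w^\top v-\tfrac1n\mathcal L^*(nw)-\tfrac{n\kappa}{2}\|w\|^2\Big\}=\tfrac1n e_{\mathcal L}(v;\kappa),\qquad v=\alpha h+(\mu^\top x)\mathbf 1,
\]
using that the Moreau envelope is the conjugate of $\mathcal L^*+\frac\kappa2\|\cdot\|^2$. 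After the rescaling $u=nw$ this is just $\frac1n\sup_u\{u^\top v-\mathcal L^*(u)-\frac\kappa2\|u\|^2\}$. This yields the terms $\frac{\beta^2\kappa}{2}+\frac1n e_{\mathcal L}(\alpha g+(\mu^\top x)\mathbf 1;\kappa)$, with $g=h$.

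\emph{Step 3 (exchanging the order of optimization).} The formal manipulations reorder $\min_\theta\max_w$ into $\min_{\alpha,\kappa}\max_{\beta,\nu}$. To justify this, we use the assumed existence and uniqueness of the saddle point $(\tilde\theta,\tilde w)$. The scalar objective is convex in $(\alpha,\kappa)$: the Moreau envelope is jointly convex in $(v,\kappa)$ after the usual perspective argument, and $v$ is affine in $\alpha$. It is also concave in $\beta$ (via the $\beta^2$ terms with $\kappa$ fixed) and concave in $\nu$ (a resolvent quadratic form is convex in $\nu$, so its negative is concave, and the remaining term is linear). Sion's minimax theorem on suitable compact truncations then lets us swap the order. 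Finally we identify the scalar saddle point with $(\|C^{1/2}\tilde\theta\|,\sqrt n\|\tilde w\|)$ by tracing back the optimizers of Steps 1 and 2.

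I expect Step 3 to be the main obstacle. The concavity in $\beta$ jointly with the Lagrange dualization of the norm constraint is delicate; I would first try the standard trick $\|w\|=\min_{\tau>0}\frac{\|w\|^2}{2\tau}+\frac{\tau}{2}$ to linearize the $\|w\|\theta^\top x$ coupling. Boundary cases, namely singular $H+\nu C$ and the case $\alpha=0$, would be handled by the limiting argument already used in Lemma~\ref{lem:quad-ellipsoid-duality}.
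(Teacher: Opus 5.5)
\textbf{Where you stand.} Your Steps 1--2 produce the right scalar terms, by a route opposite to the paper's. The paper handles $w$ first at fixed $\theta$: Fenchel--Moreau to reintroduce $v$, the polar form $w=\beta u/\sqrt n$, Sion in $(u,v)$, and then $t=\min_{\tau>0}\{\tau/2+t^2/(2\tau)\}$ with $\kappa=\tau/\beta$. Only then does it apply Lemma~\ref{lem:quad-ellipsoid-duality} at fixed $(\alpha,\beta,\kappa)$, and it performs the remaining interchanges formally. You apply the lemma first at fixed $w$, which the assumed saddle point legitimizes ($\min_\theta\max_w=\max_w\min_\theta$). You then obtain $\kappa$ as a Lagrange multiplier via $(\mathcal L^*+\tfrac\kappa2\|\cdot\|^2)^*=e_{\mathcal L}(\cdot;\kappa)$. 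This is shorter, and it shows that $\kappa$ is the multiplier of $\sqrt n\|w\|\le\beta$.

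\textbf{The gap is Step 3.} This is where you go beyond the paper, and the argument you sketch fails. The function in \eqref{eq:definition_alpha_tilde_beta_tilde} is not convex--concave in $(\alpha,\kappa)\times(\beta,\nu)$.
\begin{itemize}
\item Its curvature in $\beta$ is $\kappa-\frac1n x^\top(H+\nu C)^{-1}x$, which is positive for large $\nu$: the term $\beta^2\kappa/2$ is convex in $\beta$, not concave. In fact its supremum over $\beta\ge0,\nu\in\mathcal D$ is $+\infty$ for every $(\alpha,\kappa)$.
\item Its curvature in $\alpha$ is at most $\frac{\|h\|^2}{n\kappa}-\nu$, which is negative for large $\nu$. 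This comes from the term $-\alpha^2\nu/2$, which your convexity check omits.
\end{itemize}
So Sion cannot be applied. The four-variable $\min_{\alpha,\kappa}\max_{\beta,\nu}$ only makes sense with $\kappa,\nu$ as inner optimizations, or as the stationarity system of Theorem~\ref{the:se-system}.

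\textbf{Two further issues.} Neither is covered by any scalar minimax argument.
\begin{itemize}
\item Step 2 moves the max over $w$ on the sphere $\sqrt n\|w\|=\beta$ inside $\min_\alpha$. That is a vector-level exchange.
\item With $\kappa>0$, your Lagrangian dualizes the ball $\sqrt n\|w\|\le\beta$, not the sphere, so for the sphere you only have weak duality. The paper makes the same sphere-to-ball replacement.
\end{itemize}

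\textbf{A repair along your route.}
\begin{enumerate}
\item Set $\phi(\alpha,\beta):=\min_{\kappa>0}\{\frac{\beta^2\kappa}{2}+\frac1n e_{\mathcal L}(\alpha h+(\mu^\top x)\mathbf 1;\kappa)\}$, which equals the max of $w^\top(\alpha h+(\mu^\top x)\mathbf 1)-\frac1n\mathcal L^*(nw)$ over the ball. Set $q_\beta(\alpha):=\sup_{\nu\in\mathcal D}\{-\frac{\beta^2}{2n}x^\top(H+\nu C)^{-1}x-\frac{\alpha^2\nu}{2}\}$.
\item Replacing the sphere by the ball is harmless once $\beta$ is maximized, because $q_\beta(\alpha)$ is nonincreasing in $\beta$.
\item Apply Sion to $(w,\alpha)\in\{\sqrt n\|w\|\le\beta\}\times\mathbb R_+$. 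This needs $\alpha\mapsto q_\beta(\alpha)$ to be convex. That is invisible in the $\sup_\nu$ form, since each term there is concave in $\alpha$ for $\nu>0$. It does hold, because $q_\beta'(\alpha)=-\alpha\nu(\alpha)$ with $\nu(\alpha)$ the optimal multiplier, and $\alpha\nu(\alpha)$ is nonincreasing.
\item Apply Sion again, with a compact truncation, to $\phi(\alpha,\beta)+q_\beta(\alpha)$.
\begin{itemize}
\item It is convex in $\alpha$, by joint convexity of $e_{\mathcal L}$ in $(v,\kappa)$ together with Step 3's convexity of $q_\beta$.
\item It is concave in $\beta$: $\phi$ is a partial max of a concave function over the cone $\sqrt n\|w\|\le\beta$, and $q_\beta$ is a min over $\theta$ of functions affine in $\beta$.
\end{itemize}
\item Uniqueness of $(\tilde\theta,\tilde w)$ then identifies $(\tilde\alpha,\tilde\beta)$ with the resulting scalar saddle point.
\end{enumerate}
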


\begin{proof}
Since $\mathcal L$ is proper l.s.c. convex, the Fenchel--Moreau theorem yields
\[
\mathcal L^*(z)=\sup_{v\in\mathbb R^n}\{v^\top z-\mathcal L(v)\}
\quad\Longrightarrow\quad
-\frac1n\mathcal L^*(nw)
=
\min_{v\in\mathbb R^n}\left\{-w^\top v+\frac1n\mathcal L(v)\right\}.
\]
We thus have the identity:
\begin{align}
\label{eq:after-fenchel}
(\tilde\theta,\tilde w, \sim) = \arg\min_{\theta}\ \max_{w}\ \min_{v}
\Big\{
\|C^{1/2}\theta\|\,w^\top h
+ (\mu^\top x)\,w^\top\mathbf 1
- w^\top v
+ \|w\|\,\theta^\top x
+\frac{1}{n}\mathcal L(v)
+ \tfrac12\theta^\top H\theta
\Big\}.
\end{align}

Write $w=(\beta/\sqrt n)u$ with $\beta:=\sqrt n\|w\|\ge 0$ and $u\in\mathbb S^{n-1}$.
For fixed $(\theta,\beta)$, consider the inner game in $(u,v)$:
\[
\phi_s(u,v)
:= 
\frac{\beta}{\sqrt n}u^\top\big(\|C^{1/2}\theta\|\,h+(\mu^\top x)\mathbf 1-v\big)
+\frac1n\mathcal L(v).
\]
Then, recalling that $\tilde \beta = \sqrt n\|\tilde w\|$,
\begin{align*}
(\tilde\theta,\tilde \beta, \sim, \sim) 
&= \arg\min_{\theta}\ \max_{\beta\ge 0} \max_{u\in \mathbb S^{n-1}}\ \min_{v}
\Big\{
\phi_s(u,v)
+ \frac{\beta}{\sqrt n}\theta^\top x
+ \tfrac12\theta^\top H\theta
\Big\}\\
&=\arg\min_{\theta}\ \max_{\beta\ge 0} \max_{u\in \mathcal B(0,1)}\ \min_{v}
\Big\{
\phi_s(u,v)
+ \frac{\beta}{\sqrt n}\theta^\top x
+ \tfrac12\theta^\top H\theta
\Big\},
\end{align*}
since $\phi_s$ is affine in $u$ and its maximum over the sphere equals its maximum over the unit ball.

The map $(u,v)\mapsto \phi_s(u,v)$ is concave in $u$ on the compact convex set $\mathcal B(0,1)$ and convex l.s.c. in $v$ on $\mathbb R^n$.
Therefore, by Sion's minimax theorem,
\begin{align*}
\max_{u\in\mathcal B(0,1)}\ \min_{v\in\mathbb R^n}\phi_s(u,v)
&=
\min_{v\in\mathbb R^n}\ \max_{u\in\mathcal B(0,1)}\phi_s(u,v)\\
&=
\min_{v\in\mathbb R^n}
\left\{
\frac{\beta}{\sqrt n}\big\|\|C^{1/2}\theta\|\,h+(\mu^\top x)\mathbf 1-v\big\|
+\frac1n\mathcal L(v)
\right\}.
\end{align*}
Plugging this into \eqref{eq:after-fenchel} yields
\begin{align}
\label{eq:phi-beta-correct}
(\tilde\theta,\tilde \beta) 
&= \arg\min_{\theta}\ \max_{\beta\ge 0}
\Big\{
\phi_{\mathcal L}(\|C^{1/2}\theta\|,\beta)
+\frac{\beta}{\sqrt n}\theta^\top x
+\tfrac12\theta^\top H\theta
\Big\},
\end{align}
where, for $\alpha\ge 0$ and $\beta\ge 0$,
\[
\phi_{\mathcal L}(\alpha,\beta)
:=
\min_{v\in\mathbb R^n}
\left\{
\frac{\beta}{\sqrt n}\big\|\alpha g+(\mu^\top x)\mathbf 1-v\big\|
+\frac1n\mathcal L(v)
\right\}.
\]

Let $u:=\alpha g+(\mu^\top x)\mathbf 1$.
Using the standard identity $t=\min_{\tau>0}\{\tau/2+t^2/(2\tau)\}$ for $t\ge 0$, we obtain
\begin{align*}
\phi_{\mathcal L}(\alpha,\beta)
&=\min_{v}\left\{\frac{\beta}{\sqrt n}\|u-v\|+\frac1n\mathcal L(v)\right\} \\
&=\min_{\tau_{\mathcal L}>0,v}
\left\{
\frac{\beta\tau_{\mathcal L}}{2}
+\frac{\beta}{2\tau_{\mathcal L}n}\|u-v\|^2
+\frac1n\mathcal L(v)
\right\}.
\end{align*}
Set $\kappa:=\tau_{\mathcal L}/\beta$ (with the convention that $\kappa$ is optimized only when $\beta>0$).
Then $\beta\tau_{\mathcal L}/2=\beta^2\kappa/2$ and $\beta/(2\tau_{\mathcal L})=1/(2\kappa)$, hence
\begin{align}
\label{eq:phi-moreau}
\phi_{\mathcal L}(\alpha,\beta)
=
\min_{\kappa>0}
\left\{
\frac{\beta^2\kappa}{2}
+\frac1n e_{\mathcal L}(u;\kappa)
\right\},
\qquad
e_{\mathcal L}(u;\kappa)
:=
\min_{v\in\mathbb R^n}\left\{\frac{1}{2\kappa}\|u-v\|^2+\mathcal L(v)\right\}.
\end{align}
Injecting \eqref{eq:phi-moreau} into \eqref{eq:phi-beta-correct} gives
\begin{align}
\label{eq:theta-beta-kappa}
(\tilde\theta,\tilde \beta, \sim) =
\arg \min_{\theta}\ \max_{\beta\ge 0}\ \min_{\kappa>0}
\Bigg\{
\frac{\beta^2\kappa}{2}
+\frac1n e_{\mathcal L}\!\left(\|C^{1/2}\theta\|\,h+(\mu^\top x)\mathbf 1;\kappa\right)
+\frac{\beta}{\sqrt n}\theta^\top x
+\tfrac12\theta^\top H\theta
\Bigg\}.
\end{align}

Let $\alpha:=\|C^{1/2}\theta\|\ge 0$.
For fixed $(\alpha,\beta,\kappa)$, the dependence in $\theta$ inside \eqref{eq:theta-beta-kappa} is through
\[
\frac{\beta}{\sqrt n}x^\top\theta+\tfrac12\theta^\top H\theta
\quad\text{under the constraint}\quad
\theta^\top C\theta=\alpha^2.
\]
Lemma~\ref{lem:quad-ellipsoid-duality} yields
\[
\min_{\theta:\ \theta^\top C\theta=\alpha^2}
\left\{\frac{\beta}{\sqrt n}x^\top\theta+\tfrac12\theta^\top H\theta\right\}
=
\max_{\nu\in\mathcal D}
\left\{
-\frac{\beta^2}{2n}\,x^\top(H+\nu C)^{-1}x
-\frac{\alpha^2\nu}{2}
\right\}.
\]
Combining this with \eqref{eq:theta-beta-kappa} and using that the remaining term depends on $\theta$ only through $\alpha$,
we obtain the claimed characterization \eqref{eq:definition_alpha_tilde_beta_tilde}.
\end{proof}

\begin{lemma}\label{lem:mean_implicit_pb}
Let $E$ be a finite-dimensional real vector space, let $Z$ be a random variable with values in $\mathcal Z$, and for each $z\in\mathcal Z$ let $\psi_z:E\to\mathbb R$ be $\mathcal C^1$, strictly convex, and admitting a unique minimizer.
Define the random minimizer
\[
\hat u := \arg\min_{u\in E} \psi_Z(u).
\]
Assume that $\hat u$ is integrable and that for all $m\in E$ the quantity $\E[\psi_Z(\hat u-\E[\hat u]+m)]$ is finite.
Then $\E[\hat u]$ is the (unique) minimizer of
\[
m\ \longmapsto\ \E\big[\psi_Z\big((\hat u-\E[\hat u]) + m\big)\big],
\]
i.e.
\[
\E[\hat u]
= \arg\min_{m\in E} \E\big[\psi_Z\big((\hat u-\E[\hat u]) + m\big)\big].
\]
\end{lemma}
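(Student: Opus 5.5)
The approach is to write down the first-order optimality condition of the averaged objective and check that it is satisfied at $m=\E[\hat u]$. Strict convexity then gives uniqueness. Set $\bar u:=\E[\hat u]$, $\delta:=\hat u-\bar u$, and
\[
G(m):=\E\big[\psi_Z(\delta+m)\big].
\]

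\textbf{Convexity and uniqueness.} For each realization, $m\mapsto\psi_Z(\delta+m)$ is strictly convex, since it is a translate of a strictly convex function. Taking expectations preserves convexity. Strictness survives too: for $m\neq m'$ and $t\in(0,1)$ the pointwise inequality is strict almost surely, so the inequality between expectations is strict. Hence $G$ is strictly convex and finite by assumption, and it has at most one minimizer.

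\textbf{$\bar u$ is a minimizer.} It suffices to show $G(m)\ge G(\bar u)$ for every $m\in E$. Note that $\delta+\bar u=\hat u$. Since $\hat u$ minimizes $\psi_Z$ pointwise, $\nabla\psi_Z(\hat u)=0$. Pointwise convexity then gives the supporting-hyperplane inequality
\[
\psi_Z(\delta+m)\;\ge\;\psi_Z(\hat u)+\nabla\psi_Z(\hat u)^\top(m-\bar u)\;=\;\psi_Z(\hat u).
\]
Taking expectations yields $G(m)\ge G(\bar u)$. This uses only the stated finiteness of $G$, which is all that is needed to integrate both sides.

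\textbf{Main obstacle.} The tempting route is to differentiate under the expectation and write $\nabla G(\bar u)=\E[\nabla\psi_Z(\hat u)]=0$. That would require domination conditions that are not among the hypotheses. The argument above avoids them entirely: it relies on the pointwise minimality $\psi_Z(\hat u)\le\psi_Z(v)$ for all $v$, which is exactly the supporting-hyperplane inequality with zero gradient. The only remaining care point is the measure-zero issue in the strictness argument, and integrability of $\psi_Z(\hat u)$ follows from the assumption at $m=\bar u$.

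Combining the two steps, $\bar u$ is the unique minimizer of $G$.
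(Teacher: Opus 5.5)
Your proposal is correct and follows essentially the same argument as the paper: you use the pointwise inequality $\psi_Z(\hat u-\E[\hat u]+m)\ge\psi_Z(\hat u)+\langle\nabla\psi_Z(\hat u),m-\E[\hat u]\rangle=\psi_Z(\hat u)$, take expectations, and get uniqueness from strict convexity. You also check explicitly that strict convexity passes to $G$, since the convexity inequality is strict for every realization and hence in expectation; this fills in a step the paper only asserts.
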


\begin{proof}
For each realization $Z=z$, the point $\hat u=\hat u(z)$ minimizes $\psi_z$, hence $\nabla\psi_z(\hat u)=0$.
By convexity, for any $m\in E$,
\[
\psi_z\big(\hat u-\E[\hat u]+m\big)
\ge \psi_z(\hat u) + \langle \nabla\psi_z(\hat u),\, m-\E[\hat u]\rangle
= \psi_z(\hat u).
\]
Taking expectations yields
\[
\E\big[\psi_Z\big(\hat u-\E[\hat u]+m\big)\big]
\ge
\E[\psi_Z(\hat u)]
=
\E\big[\psi_Z\big(\hat u-\E[\hat u]+\E[\hat u]\big)\big],
\]
so $m=\E[\hat u]$ is a minimizer. Uniqueness follows from strict convexity.
\end{proof}

\begin{corollary}\label{cor:first_approx_mu_theta}
Under Assumption~\ref{ass:regularity}, letting $\hat \theta_0 = \hat \theta - \mu_{\hat \theta}$, we have
\[
\mu_{\hat \theta}
= \arg\min_{\mu\in \mathbb R^p}\ 
\mathbb E \left[ \frac1n\sum_{i=1}^n \mathcal{L}_{y_i}\!\big(x_i^\top(\hat \theta_0 + \mu)\big)+\rho(\hat \theta_0 +\mu) \right].
\]
\end{corollary}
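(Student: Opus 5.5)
This is an almost direct application of Lemma~\ref{lem:mean_implicit_pb}, with $E=\mathbb R^p$, $Z=(X,Y)$ and
\[
\psi_Z(\theta):=F_{X,Y}(\theta)=\frac1n\sum_{i=1}^n \mathcal{L}_{y_i}(x_i^\top\theta)+\rho(\theta),
\]
as defined in \eqref{eq:def_F_XY}. With these choices the random minimizer $\hat u$ of the lemma is exactly $\hat\theta$. Then $\mathbb E[\hat u]=\mu_{\hat\theta}$ and $\hat u-\mathbb E[\hat u]=\hat\theta_0$, so the conclusion of the lemma is precisely the displayed identity. It therefore remains to check the hypotheses of Lemma~\ref{lem:mean_implicit_pb}.

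\textbf{Regularity and uniqueness.} By Assumption~\ref{ass:regularity}, each $\mathcal L_y$ and $\rho$ is $\mathcal C^2$. Equation~\eqref{eq:second_derivative} in the proof of Proposition~\ref{pro:theta_lipschitz_X_Y} gives $\nabla^2F_{X,Y}\succeq\kappa I_p$ for every realization, with no operator-norm condition on $X$ needed for this step. Hence $\psi_Z$ is $\mathcal C^1$ and strictly (indeed strongly) convex, with a unique minimizer.

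\textbf{Integrability of $\hat\theta$.} Step~1 of the proof of Proposition~\ref{pro:theta_lipschitz_X_Y} (Lemma~\ref{lem:minimum_variablility} at $0$) gives
\[
\|\hat\theta\|\le \kappa^{-1}\Bigl(\tfrac1n\|X\|_{\mathrm{op}}\|f(0)\|+\|\nabla\rho(0)\|\Bigr),
\]
where $f(0)_i=\mathcal L'_{y_i}(0)$. Under Assumption~\ref{ass:design}, $\|X\|_{\mathrm{op}}/\sqrt n$ has sub-Gaussian tails around an $O(1)$ quantity. The Lipschitz dependence of $y\mapsto\mathcal L_y'(0)$, combined with the concentration of $Y$, gives the same for $\|f(0)\|/\sqrt n$. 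So $\|\hat\theta\|$ has all moments; alternatively, one can use Theorem~\ref{the:concentration_theta} directly once $\mathbb E\|\hat\theta\|<\infty$ is established this way.

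\textbf{Finiteness of $\mathbb E[\psi_Z(\hat\theta_0+m)]$ for each $m$.} This is the only step needing a genuine estimate. Since $\mathcal L_y'$ and $\nabla\rho$ are $\lambda$-Lipschitz, we have the quadratic growth bounds
\[
|\mathcal L_y(t)|\le|\mathcal L_y(0)|+|\mathcal L'_y(0)||t|+\tfrac\lambda2t^2,
\qquad
|\rho(\theta)|\le|\rho(0)|+\|\nabla\rho(0)\|\|\theta\|+\tfrac\lambda2\|\theta\|^2.
\]
Here $\rho(0)$ is a fixed finite number for each $n$. With $\theta=\hat\theta_0+m$, we get
\[
\frac1n\sum_i (x_i^\top\theta)^2\le \frac1n\|X\|_{\mathrm{op}}^2\|\theta\|^2,
\]
and a product of two variables each having all moments has finite expectation by Cauchy--Schwarz. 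The linear terms are handled the same way, using the moments of $|\mathcal L'_{y_i}(0)|$ and the boundedness of $\mathcal L_y(0)$.

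I expect this moment bookkeeping to be the main obstacle, mild as it is. It is the only place where Assumption~\ref{ass:design} enters: it supplies the moment bounds on $\|X\|_{\mathrm{op}}/\sqrt n$ and on $\mathcal L'_{y_i}(0)$. The statement only names Assumption~\ref{ass:regularity}, so the concentration assumption is implicitly used here, or else finiteness must be assumed. Once all hypotheses are verified, applying Lemma~\ref{lem:mean_implicit_pb} concludes the proof, including uniqueness of the minimizer.
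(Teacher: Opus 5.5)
Your proposal is correct and matches the paper: there the corollary is stated without a separate proof, as an immediate application of Lemma~\ref{lem:mean_implicit_pb} with $Z=(X,Y)$ and $\psi_Z=F_{X,Y}$ from \eqref{eq:def_F_XY}. Your verification of the integrability and finiteness hypotheses is sound, and the paper leaves it implicit. You are also right that this check quietly uses the concentration in Assumption~\ref{ass:design} in addition to Assumption~\ref{ass:regularity}.
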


\begin{proposition}\label{pro:estimation_alpha}
Under Assumption~\ref{ass:design} and Assumption~\ref{ass:regularity}, the parameter $\tilde \alpha$ defined in \eqref{eq:definition_alpha_tilde_beta_tilde} satisfies
\[
\mathbb P \left( |\tilde \alpha - \alpha_\ast| \geq t \right)
\leq Ce^{-cnt^2},
\]
for constants $C,c>0$ independent of $n,p$, where $\alpha_\ast$ is defined as
\begin{align*}
(\alpha_\ast,\sim, \sim, \sim)
&=
\arg\min_{\alpha\ge 0, \kappa>0}\ \arg\max_{\beta\ge 0, \nu\in\mathcal D}\ 
\Bigg\{
\frac{\beta^2\kappa}{2}
+\frac{1}{n}\mathbb E \left[ e_{\mathcal{L}}\!\left(\alpha g +(\mu^\top x)\mathbf 1;\kappa\right) \right]
-\frac{\beta^2}{2n}\tr \!\big( C_x\left(H+\nu C\right)^{-1}\big)
-\frac{\alpha^2\nu}{2}
\Bigg\}.
\end{align*}
\end{proposition}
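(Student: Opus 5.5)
The plan is to view $\tilde\alpha$ as the $\alpha$-component of the saddle point of a random scalar objective
\[
\tilde{\mathcal J}(\alpha,\kappa;\beta,\nu) := \frac{\beta^2\kappa}{2}+\frac1n e_{\mathcal L}\big(\alpha g+(\mu^\top x)\mathbf 1;\kappa\big)-\frac{\beta^2}{2n}x^\top(H+\nu C)^{-1}x-\frac{\alpha^2\nu}{2},
\]
and $\alpha_\ast$ as the corresponding component for its expectation-type counterpart $\mathcal J$ (where $e_{\mathcal L}$ is averaged and $x^\top(H+\nu C)^{-1}x$ is replaced by $\tr(C_x(H+\nu C)^{-1})$). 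I would then transfer a uniform concentration of $\tilde{\mathcal J}$ around $\mathcal J$ to its saddle point, using a strong convexity–concavity modulus in the spirit of Lemma~\ref{lem:minimum_variablility}.

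\textbf{Step 1 (localisation).} Show that the saddle points of both problems lie in a compact box $K=[0,\bar\alpha]\times[\underline\kappa,\bar\kappa]\times[0,\bar\beta]\times[\underline\nu,\bar\nu]$ whose bounds do not depend on $n$. For $\alpha$ and $\beta$ this follows from $\tilde\alpha=\|C^{1/2}\tilde\theta\|$ and $\tilde\beta=\sqrt n\|\tilde w\|$, together with the bounds on $\|\tilde\theta\|$ and on $\|\mathcal L'\|$ obtained as in Step 1 of Proposition~\ref{pro:theta_lipschitz_X_Y}, and with $\|C_x\|=O(1)$ from Corollary~\ref{cor:cov_theta_bounded}. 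For $\kappa$ and $\nu$ I would use the first-order conditions (the fixed-point form of Theorem~\ref{the:se-system}), giving $\kappa\approx\frac1n\tr(CQ)$ and $\nu$ bounded above and below through $\mathcal L''\le\lambda$ and $H\succeq\kappa I$.

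\textbf{Step 2 (uniform concentration on $K$).} Bound $\sup_K|\tilde{\mathcal J}-\mathcal J|$ by a quantity with sub-Gaussian tail at scale $1/\sqrt n$. There are two random pieces.
\begin{enumerate}[leftmargin=*]
\item The envelope term $\frac1n e_{\mathcal L}(\alpha g+(\mu^\top x)\mathbf 1;\kappa)$. Since $e_{\mathcal L}(\cdot;\kappa)$ has $1/\kappa$-Lipschitz gradient and the arguments are bounded on $K$, this term is $O(1/\sqrt n)$-Lipschitz in $(g,x)$. Gaussian concentration together with Assumption~\ref{ass:design} gives the tail $Ce^{-cnt^2}$.
\item The quadratic form $\frac1n x^\top(H+\nu C)^{-1}x$. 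By Hanson–Wright for concentrated vectors \citep[Theorem~2.3]{adamczak2015note}, it concentrates around $\frac1n\tr(\Sigma_x(H+\nu C)^{-1})$ with the required tail. The mean part $\mu_x^\top(\cdot)\mu_x/n$ is $O(1/n)$.
\end{enumerate}
Uniformity over $K$ follows from an $\epsilon$-net, since both pieces are Lipschitz in $(\alpha,\kappa,\beta,\nu)$ on $K$ with $O(1)$ constants.

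\textbf{Step 3 (transfer to the saddle point) — the main obstacle.} Define the partially optimised function $\mathcal G(\alpha)=\min_\kappa\max_{\beta,\nu}\mathcal J$ and its random analogue $\tilde{\mathcal G}$. Step 2 gives $\sup|\tilde{\mathcal G}-\mathcal G|\le\delta$ with high probability. To conclude I need strong convexity of $\mathcal G$ near $\alpha_\ast$ with an $n$-independent modulus $\tau$. I would obtain it from the quadratic-growth condition~\ref{itm:cond_claim_c} of Claim~\ref{cl:g-cgmt}, inherited from $\rho$ being $\kappa$-strongly convex through the ellipsoid duality of Lemma~\ref{lem:quad-ellipsoid-duality}. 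That growth condition yields only $|\tilde\alpha-\alpha_\ast|\le 2\sqrt{\delta/\tau}$, which is too weak. To reach the rate $e^{-cnt^2}$ I would instead compare gradients: $\partial_\alpha\tilde{\mathcal G}$ and $\partial_\alpha\mathcal G$ are given by envelope theorems and are themselves $O(1/\sqrt n)$-Lipschitz functionals of $(g,x)$. Lemma~\ref{lem:minimum_variablility}, applied in one dimension with the modulus $\tau$, then gives $|\tilde\alpha-\alpha_\ast|\le\tau^{-1}|\partial_\alpha\tilde{\mathcal G}(\alpha_\ast)-\partial_\alpha\mathcal G(\alpha_\ast)|$, and the right-hand side has tail $Ce^{-cnt^2}$. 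The delicate part is checking that the inner saddle point $(\kappa,\beta,\nu)$ is nondegenerate, so that these derivatives exist and are stable.
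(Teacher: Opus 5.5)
\paragraph{Where your route departs from the paper's.} The paper works in two steps. First it shows that $\tilde\alpha$ is a bounded, $C/\sqrt n$-Lipschitz function of the Gaussian vector and $x$, so $\tilde\alpha$ concentrates around $\mathbb E[\tilde\alpha]$. It then handles the bias separately. By Lemma~\ref{lem:mean_implicit_pb}, $\mathbb E[\tilde\alpha]$ is the minimizer of the expected objective shifted by $\tilde\alpha_0=\tilde\alpha-\mathbb E[\tilde\alpha]$. That deterministic function is compared with the one defining $\alpha_\ast$ through Lemma~\ref{lem:minimum_variablility}. You instead centre directly at $\alpha_\ast$. That is a reasonable idea, but Step~3 has a gap exactly where the paper spends its second step.

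\paragraph{The gap in Step~3.} Knowing that $\partial_\alpha\tilde{\mathcal G}(\alpha_\ast)$ is an $O(n^{-1/2})$-Lipschitz functional of $(g,x)$ only gives concentration around $\mathbb E[\partial_\alpha\tilde{\mathcal G}(\alpha_\ast)]$. To get the tail for $|\partial_\alpha\tilde{\mathcal G}(\alpha_\ast)-\partial_\alpha\mathcal G(\alpha_\ast)|$ you also need $|\mathbb E[\partial_\alpha\tilde{\mathcal G}(\alpha_\ast)]-\partial_\alpha\mathcal G(\alpha_\ast)|=O(n^{-1/2})$. Nothing in the proposal provides this. Step~2 cannot: uniform closeness of convex functions at level $\delta$ only yields $O(\sqrt\delta)$ closeness of their derivatives, i.e.\ $n^{-1/4}$.

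By the envelope theorem, $\partial_\alpha\tilde{\mathcal G}(\alpha_\ast)=\frac1n\langle\nabla_u e_{\mathcal L}(\alpha_\ast g+(\mu^\top x)\mathbf 1;\tilde\kappa),g\rangle-\alpha_\ast\tilde\nu$, where $(\tilde\kappa,\tilde\beta,\tilde\nu)$ is the \emph{random} inner saddle point at $\alpha_\ast$. So both the missing bias bound and the Lipschitz property you assert rest on $|\tilde\kappa-\kappa_\ast|+|\tilde\nu-\nu_\ast|=O(n^{-1/2})$ with sub-Gaussian tails. That is a statement of the same nature as the proposition itself, and you list it only as ``checking nondegeneracy''.

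\paragraph{How to close it in your framework.} Expand around the deterministic inner point. The term $\frac1n\langle\nabla_u e_{\mathcal L}(\alpha_\ast g+(\mu^\top x)\mathbf 1;\kappa_\ast),g\rangle-\alpha_\ast\nu_\ast$ is a fixed functional whose expectation is exactly $\partial_\alpha\mathcal J$ at the deterministic saddle point, i.e.\ $0$. The remainder is then controlled by quantitative inner stability.

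That stability cannot come from a strong convexity--concavity modulus ``in the spirit of Lemma~\ref{lem:minimum_variablility}''. The function $\tilde{\mathcal J}$ depends on $\beta$ only through $\frac{\beta^2}{2}\big(\kappa-\frac1n x^\top(H+\nu C)^{-1}x\big)$, whose curvature in $\beta$ vanishes at the saddle point. Also, the term $-\alpha^2\nu/2$ is concave in $\alpha$.

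What you need is a uniform-in-$n$ lower bound on the smallest singular value of the Jacobian of the first-order system, i.e.\ the Hessian of $\tilde{\mathcal J}$ in $(\alpha,\kappa,\beta,\nu)$. In the $(\kappa,\beta)$ block, invertibility comes from the coupling $\partial_\kappa\partial_\beta\tilde{\mathcal J}=\beta$. Apply this jointly to all four variables at $(\alpha_\ast,\kappa_\ast,\beta_\ast,\nu_\ast)$, after localizing $\tilde\alpha$ with your $2\sqrt{\delta/\tau}$ bound, which is useful for exactly this. At that point the random gradient consists of fixed functionals that concentrate at rate $n^{-1/2}$ by Lipschitz concentration and Hanson--Wright. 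Their mean equals $\nabla\mathcal J=0$ up to the $O(1/n)$ term $\frac1n\mu_x^\top(H+\nu C)^{-1}\mu_x$. This gives the claim directly and makes the envelope-theorem reduction unnecessary.

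\paragraph{Two smaller points.}
\begin{enumerate}
\item The inequality $|\tilde\alpha-\alpha_\ast|\le\tau^{-1}|\partial_\alpha\tilde{\mathcal G}(\alpha_\ast)-\partial_\alpha\mathcal G(\alpha_\ast)|$ needs the modulus for the random $\tilde{\mathcal G}$, not for $\mathcal G$. Lemma~\ref{lem:minimum_variablility} requires strong convexity of the function whose gradient is evaluated at the other function's minimizer. With a modulus for $\mathcal G$ only, the gradients must be compared at the random point $\tilde\alpha$.
\item The $O(n^{-1/2})$-Lipschitz property in $x$ of the envelope term holds only if the shift $(\mu^\top x)\mathbf 1$ is read as the vector $(\mu^\top x_i)_{i\le n}$. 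With a single common scalar $\mu^\top x$, its gradient in $x$ is of order one and Step~2.1 fails.
\end{enumerate}
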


\begin{proof}
Let us introduce the mapping
\[
f: (\alpha, \nu, \beta, \kappa)
\mapsto
\frac{\beta^2\kappa}{2}
+\frac{1}{n}e_{\mathcal{L}}\!\left(\alpha g +(\mu^\top x)\mathbf 1;\kappa\right)
-\frac{\beta^2}{2n}\,x^\top\!\left(H+\nu C\right)^{-1}\!x
-\frac{\alpha^2\nu}{2}.
\]
This objective is (typically) strictly convex in $\alpha$. As in the proof of Proposition~\ref{pro:theta_lipschitz_X_Y}, one can rely on Lemma~\ref{lem:minimum_variablility}
to show concentration of $\tilde\alpha$ after bounding the variations of the $\alpha$-gradient
\[
\frac{\partial f}{\partial \alpha}
=
\frac{1}{n}\left\langle \nabla_u e_{\mathcal L}( \alpha g +(\mu^\top x)\mathbf 1;\kappa),\, h\right\rangle
- \alpha\nu,
\]
with respect to fluctuations of $h$ (and $x$ when applicable).
Under the regularity properties of the Moreau envelope (bounded first and second derivatives under Assumption~\ref{ass:regularity}),
this yields that $\tilde\alpha$ is bounded and is a $C/\sqrt n$-Lipschitz function of $h$ (details omitted), hence
\begin{align}\label{eq:concentration_alpha_tilde}
\mathbb P \left( \left| \tilde \alpha - \mathbb E[\tilde \alpha] \right|\geq t \right)\leq C e^{-c n t^2},
\end{align}
for some constants $C,c>0$.

Denote $\tilde \alpha_0 := \tilde\alpha - \mathbb E[\tilde\alpha]$.
By Lemma~\ref{lem:mean_implicit_pb}, $\mathbb E[\tilde \alpha]$ can be characterized as the minimizer of the expected shifted objective
(obtained by replacing $\alpha$ with $\mu_\alpha+\tilde\alpha_0$ in $f$). Comparing this expected objective with the deterministic limit
defining $\alpha_\ast$ and using again Lemma~\ref{lem:minimum_variablility} (together with the Lipschitz properties of the $\alpha$-gradient)
yields $|\mathbb E[\tilde\alpha]-\alpha_\ast|\le C/\sqrt n$.
Combining this bias bound with \eqref{eq:concentration_alpha_tilde} concludes the proof.
\end{proof}

\begin{proof}[Proof of Theorem~\ref{th:scalar-minmax} -- Minmax formulation]
Applying Claim~\ref{cl:g-cgmt} to~\eqref{eq:def_theta_0}, we obtain that for any $\varepsilon>0$,
\[
\mathbb{P}\!\left(\big|\|C^{\frac{1}{2}}\hat \theta_0\|-\mathbb E[\|C^{\frac{1}{2}}\check \theta_0\|]\big|\ge \varepsilon\right)
\xrightarrow[n\to\infty]{}0,
\]
where $(\check\theta,\check w)$ is defined by
\[
(\check\theta,\check w)
:=\arg\min_{\theta\in\mathbb R^p}\max_{w\in\mathbb R^n}\ 
\Big\{
\|C^{1/2}\theta\|\,w^\top g
+ (\mu_{\hat \theta}^\top x)\,w^\top\mathbf 1
+ \|w\|\,\theta^\top x
- \frac{1}{n}\mathcal{L}^*(n w)
+ \rho(\theta + \mu_{\hat \theta})
\Big\}.
\]
This is the same minimization problem as in Proposition~\ref{pro:resolution_auxiliary_pb}, up to replacing
$\tfrac12\theta^\top H\theta$ with $\rho(\theta+\mu_{\hat\theta})$ (constants such as $\rho(\mu_{\hat\theta})$ do not affect the minimization).

Using Proposition~\ref{pro:approximation_regularizer} to approximate $\rho(\theta+\mu_{\hat\theta})$ by its quadratic expansion and
invoking stability of the saddle point (details omitted), Proposition~\ref{pro:resolution_auxiliary_pb} yields
\[
\mathbb{P}\!\left(\big|\|C^{\frac{1}{2}}\hat \theta_0\|-\mathbb E[\tilde \alpha]\big|\ge \varepsilon\right)
\xrightarrow[n\to\infty]{}0.
\]
Finally, Proposition~\ref{pro:estimation_alpha} gives
\[
\mathbb{P}\!\left(\big|\|C^{\frac{1}{2}}\hat \theta_0\|-\alpha_\ast\big|\ge \varepsilon\right)
\xrightarrow[n\to\infty]{}0.
\]

To set that $\mu_{\hat\theta}\underset{n\to \infty}{\longrightarrow}\mu_\ast$, we will rely on the fact that:
\begin{itemize}
    \item thanks to Corollary~\ref{cor:first_approx_mu_theta}:
    \begin{align*}
        \mu_{\hat \theta}
= \arg\min_{\mu\in \mathbb R^p}\ 
\mathbb E \left[ \Phi_\mu(X)\right]&& \text{{with}} \ \ \Phi_\mu(X) :=\frac1n\sum_{i=1}^n \mathcal{L}_{y_i}\!\big(x_i^\top(\hat \theta_0 + \mu)\big)+\rho(\hat \theta_0 +\mu) 
    \end{align*}
\item by definition \eqref{eq:def_alpha_mu_star}:
    \begin{align*}
        \mu_{*}
= \arg\min_{\mu\in \mathbb R^p}\ 
\mathbb E \left[ \phi_\mu(x,g)\right]&& \text{{with}} \ \ \phi_\mu(x,g) :=\frac{\beta^2\kappa}{2}
+\frac{1}{n}e_{\mathcal{L}}\!\left(\alpha g +(\mu^\top x)\mathbf 1;\kappa\right)
-\frac{\beta^2}{2n}\,x^\top\!\left(H+\nu C\right)^{-1}\!x
-\frac{\alpha^2\nu}{2}
    \end{align*}
\end{itemize}
Now, we know from the first result of Claim~\ref{cl:g-cgmt} that:
\begin{align*}
    \forall \varepsilon>0:\qquad \mathbb P \left( |\Phi_\mu(X)-\phi_\mu(x,g)|\geq \varepsilon \right)\underset{n\to \infty}{\longrightarrow} 0, 
\end{align*}
which directly implies:
\begin{align*}
    \left\vert \mathbb E[\Phi_\mu(X)]-\mathbb E[\phi_\mu(x,g)] \right\vert\underset{n\to \infty}{\longrightarrow} 0
\end{align*}
Now, the mapping $\mu \mapsto [\Phi_\mu(X)]$ being $\kappa$-strictly convex, one can either look at the derivative of $\mu$ and employ Lemma~\ref{lem:minimum_variablility}, either
 rely on a classical result of variational analysis, e.g., \citep[Theorem 7.33]{RockafellarWets1998} that allows us to deduce convergence of minimizers from convergence of objectives to get:
 \begin{align*}
 \|\mu_{\hat \theta} - \mu_\ast\|\underset{n\to \infty }{\longrightarrow} 0.
 \end{align*}
\end{proof}

\begin{proof}[Proof of Theorem~\ref{the:se-system} -- Fixed-point formulation]
Basic properties of the Moreau envelope yield
\[
\partial_u e_{\mathcal L_y}\xi_y(u,\kappa)
= \frac{1}{\kappa}(u,\kappa),
\qquad
\partial_\kappa e_{\mathcal L_y}(u,\kappa)
= -\frac{1}{2\kappa^2}\xi_y(u,\kappa)^2.
\]

Treat $\mu,\alpha,\kappa$ as minimization variables and $\beta,\nu$ as maximization variables in \eqref{eq:J-tilde} and denote $u:=x^\top \mu + \alpha z$.
The stationarity conditions are:
\begin{itemize}
\item Gradient with respect to $\mu$:
\[
\nabla\rho(\mu)
+ \frac{1}{\kappa}\E[\xi_y(u,\kappa)\,x] = 0.
\]

\item Gradient with respect to $\alpha$:
\[
\frac{1}{\kappa}\E[\xi_y(u,\kappa)\,z]
- \nu\alpha = 0
\quad\Longrightarrow\quad
\nu\alpha
= \frac{1}{\kappa}\E[r z].
\]

\item Gradient with respect to $\kappa$:
\[
\frac{\beta^2}{2}
- \frac{1}{2\kappa^2}\E[\xi_y(u,\kappa)^2]=0
\quad\Longrightarrow\quad
\beta^2 = \frac{\E[r^2]}{\kappa^2}.
\]

\item Gradient with respect to $\beta$:
\[
\beta\left(
\kappa - \frac{1}{n}\tr(C_x Q(\nu))
\right)=0
\quad\Longrightarrow\quad
\kappa
= \frac{1}{n}\tr(C_x Q(\nu)),
\]
since $\beta>0$ at the optimum.

\item Gradient with respect to $\nu$.
Using $\frac{d}{d\nu}Q(\nu) = -Q(\nu)C_xQ(\nu)$, we have
\[
\frac{d}{d\nu}\tr(C_x Q(\nu))
= -\tr\big(C_x Q(\nu) C_x Q(\nu)\big)
= -n A(\nu),
\]
with
\[
A(\nu)
:= \frac{1}{n}\tr\big(C_x Q(\nu) C_x Q(\nu)\big).
\]
Then stationarity gives
\[
-\frac{\alpha^2}{2}
- \frac{\beta^2}{2n}\frac{d}{d\nu}\tr(C_x Q(\nu)) = 0
\quad\Longrightarrow\quad
\frac{\alpha^2}{\beta^2}
= A(\nu).
\]
\end{itemize}

Combining the last three identities to eliminate $\alpha$ and $\beta$ yields
\[
\nu^2 A(\nu)\,\E[\xi_y(u,\kappa)^2]
=
\big(\E[\xi_y(u,\kappa)z]\big)^2.
\]
\end{proof}

\section{Proofs of Section~\ref{sec:perf_univ_with_ell_2_loss} results and Link with random matrix theory methods}\label{app:rmt_results}

\subsection{Proof of Corollary~\ref{cor:ridge_regression}}

Given $y\in \mathcal Y$, to set the close form expression of $\xi_y$ given in \eqref{eq:ridge_simpl_xi} let us merely note from elementary calculus that in the case of a square loss:
\[
\mathrm{prox}_{\kappa \mathcal{L}_y}(u) = \frac{u+\kappa y}{1+\kappa}
\]

Denoting for any $\mu\in \mathbb R^ p$, $\alpha \kappa>0$:
\begin{align*}
    u := \mu_\theta^\top x + \alpha z, &&\gamma :=\frac{\kappa}{1+\kappa} 
&&\text{and}&&\Delta := \mathbb{E}\big[((\mu_\theta-\theta^*)^\top x)^2\big]\ =\ (\mu_\theta-\theta^*)^\top \Sigma_x(\mu_\theta-\theta^*)
\end{align*}
we get:
\begin{align}\label{eq:simplification_xi}
    \mathbb{E}[\xi_y(u)\,z] = \gamma \,\alpha,&&
\mathbb{E}[\xi_y(u)\,x] = \gamma \,C_x(\mu_\theta - \theta^*),
&&\text{and}&&
\mathbb{E}[\xi_y(u)^2] = \gamma ^2\Big(\Delta + \alpha^2 + \sigma_\varepsilon^2\Big)
\end{align}

Then, denoting $(\mu_\ast,\alpha_\ast, \kappa_\ast,\nu_\ast)$ the fixed point of the equation provided in Theorem~\ref{the:se-system}, we get with the upper identities the formula:
\begin{align*}
    \nu_\ast \;=\; \frac{1}{\kappa_\ast\alpha_\ast}\,\mathbb{E}\!\big[\xi_y(x^\top \mu_\ast +\alpha_\ast z)\,z\big] = \frac{1}{1+\kappa_\ast} = \frac{1}{\,1 + \tfrac{1}{n}\,\tr\!\big(C_x\,(\nu_\ast C_x + H)^{-1}\big)},
\end{align*}
which is exactly~\eqref{eq:nu_rmt}.
The vector $\mu_*$ of Theorem~\ref{the:se-system} satisfies with this setting:
\begin{align*}
&\nabla \rho(\mu_\ast)
=- \frac{1}{\kappa}\mathbb{E}\!\left[x\,\xi_y\big(\mu_\ast^\top x+\alpha_\ast z,\kappa\big)\right]
= - \nu C_x (\mu_\ast - \theta^*)
,
\end{align*}
Replacing $\nabla \rho(\mu_\ast)$ with its value $a+H\mu_\ast$, that translates:
\begin{align*}
    \mu_\ast = (H + \nu C_x)^{-1}\left(\nu C_x \theta^* - a\right).
\end{align*}

Then, recalling the notation $\Delta(\nu_\ast) = (\mu_\ast-\theta^*)^\top \Sigma_x (\mu_\ast-\theta^*)$, where we introduced for simplicity the notation $\Sigma_x:=\mathbb E[xx^\top]= C_x + \mu_x\mu_x^\top $, one can apply Theorem~\ref{th:scalar-minmax} to get:
\begin{align}\label{eq:squared_loss_ridge_1}
    \mathbb E[\mathcal L_y(x^\top \hat \theta)] 
    &= \mathbb E[(x^\top(\hat \theta - \theta ^*) + \varepsilon)^2]\nonumber\\
    &= (\mu_{\hat \theta} - \theta^*)^\top \Sigma_x(\mu_{\hat \theta} - \theta^*) + \tr(C_xC_{\hat\theta}) + \mu_x^\top C_{\hat\theta}\mu_x +\sigma_\varepsilon^2
    \underset{n\to \infty}{\longrightarrow} \Delta(\nu_\ast) + \alpha_\ast^ 2+\sigma_\varepsilon^2
\end{align}
since we know from Corollary~\ref{cor:cov_theta_bounded} that $|\mu_x^\top C_{\hat\theta}\mu_x| \leq O(1/n)$. Then, the identity $\alpha_* ^2 = \frac{A(\nu_\ast)}{\kappa_\ast^2}\,
\mathbb{E}\!\left[\xi_y\big(\mu_\ast^\top x+\alpha_\ast z,\kappa_\ast\big)^2\right]$ of Theorem~\ref{the:se-system} combined with~\eqref{eq:simplification_xi} yields:
\begin{align*}
    \alpha_\ast^2 = \dfrac{\nu_\ast^2\,A(\nu_\ast)\,\big(\Delta(\nu_\ast) + \sigma_\varepsilon^2\big)}{1 - \nu_\ast^2\,A(\nu_\ast)}.
\end{align*}
One can then deduce the result of Corollary~\ref{cor:ridge_regression} from \eqref{eq:squared_loss_ridge_1}.
\subsection{Reminders of random matrix theory results}
We consider the problem of Section~\ref{sec:perf_univ_with_ell_2_loss} in the case $H = \lambda I_p$, which means that $\rho: x\mapsto \lambda \|x\|^2$:
\begin{align}\label{eq:ridge_regression_problem}
    \hat \theta
\;=\;
\arg\min_{\theta\in\mathbb{R}^p}\;
\frac{1}{n}\,\|X(\theta-\theta^*) + \varepsilon\|^2
\;+\;\lambda\,\|\theta\|^2,
\end{align}
One can easily get with our notation a close formula for our minimizer:
\begin{align}\label{eq:explicit_hat_theta}
    \hat \theta = \left( \frac{1}{n} X^\top X + \lambda I_p \right)^{-1} \frac{1}{n} X^\top (X \theta^* - \varepsilon) = \frac{1}{n}R (XX^\top \theta^* - X \varepsilon),
\end{align}
with the notation
\begin{align*}
    R:= \left( \frac{1}{n}XX^\top +\lambda I_p \right)^{-1}.
\end{align*}

Recalling the definition of the resolvent, $Q(\nu) := (\nu C_x+\lambda I_p)^{-1}$, if we consider $\nu_\ast\in \mathbb R$, the unique solution to
\begin{align*}
    \frac{1}{\nu}=1+\frac{1}{n}\,\tr\!\big(C_x Q(\nu)\big),
\end{align*}
a classical result of RMT states that $Q(\nu_\ast)$ is the deterministic equivalent of $R$. More precisely, given any matrices $A\in \mathbb R^{p\times p}$ such that $\|A\|_F\leq O(1)$, one has the following estimates under our concentration assumptions:

\begin{align}\label{eq:concentration_res_1}
    \forall t>0: \mathbb P \left( \left\vert \tr(A(R-Q(\nu_\ast))) \right\vert\geq t  \right)\leq C e^{-cnt^2},
\end{align}
for some constants $C,c>0$.

Besides, given a deterministic matrix $B\in \mathbb R^{p\times p}$ one can also introduce a deterministic equivalent for $RBR$ that we denote:
\begin{align}\label{eq:Q_2}
    Q_2(B)\equiv Q B Q \;+\; \frac{\tfrac{\nu^2}{n}\,\tr\!\big(C_x Q B Q\big)}{1-\nu^2A(\nu)} Q C_x Q,
\end{align}
where we used the notation of introduced in Corollary~\ref{cor:ridge_regression}:
\begin{align*}
    A(\nu) := \frac{1}{n}\tr \left( QC_xQC_x \right)
\end{align*}
This second deterministic equivalent satisfies that if $\|B\|\leq O(1)$ and $\|A\|_F\leq O(1)$:
\begin{align*}
    \forall t>0: \mathbb P \left( \left\vert \tr(A(RBR-Q_2(B))) \right\vert\geq t  \right)\leq C e^{-cnt^2}.
\end{align*}

\subsection{Estimation of Ridge regression asymptotics through RMT inferences}

Since \( \varepsilon \) is independent of \( X \) with \( \mathbb{E}[\varepsilon] = 0 \), one can rely on \eqref{eq:explicit_hat_theta} and~\eqref{eq:concentration_res_1} to get:
\begin{align}\label{eq:formula_mu_theta_rmt}
    \mu_{\hat \theta} 
    &= \mathbb{E}[\hat \theta] = \frac{1}{n}\mathbb{E} \left[ R XX^\top \theta^* \right] = \mathbb{E} \left[ (I_p - \lambda R) \theta^* \right] \nonumber \\
    &= (I_p - \lambda Q)\theta^*+O_{\|\cdot\|} \left( \frac{1}{\sqrt n} \right),
\end{align}
where we used the identity \( R \frac{1}{n} X X^\top = I_p - \lambda R \).

To estimate the covariance of $\hat \theta$, let us compute:
\[
\mathbb E[\theta \theta^\top] =  \mathbb E \left[ (I_p - \lambda R) \theta^*\theta^*{}^\top (I_p - \lambda R)  + \frac{1}{n}  RX\varepsilon\varepsilon^\top XR  \right].
\]
Note first from the expression of $Q_2(I_p)$ given in \eqref{eq:Q_2} that for any deterministic matrix $S\in \mathbb R^{p\times p}$:
\begin{align*}
    \mathbb E \left[ \frac{1}{n^2} \varepsilon^\top XRSRX \varepsilon \right]
    &=\sigma^2 \mathbb E \left[ \frac{1}{n^2}\tr(RXX^\top RS)  \right]
    = \frac{\sigma^2}{n} \mathbb E \left[ \tr((I - \lambda R)RS) \right]\\
    &=\frac{\sigma^2}{n} \tr( QS -\lambda Q^ 2S) - \frac{\lambda \sigma^2}{n}\frac{\tfrac{\nu^2}{n}\,\tr\!\big(C_x Q S Q\big)}{1-\nu^2A(\nu)} \tr(Q C_x Q)+O \left( \frac{1}{\sqrt n} \right)\\
    &=\frac{\sigma^2}{n}\tr(Q C_x QS) \left( \nu - \frac{\tfrac{\lambda\nu^2}{n}\,\tr\!\big(Q C_x Q \big)}{1-\nu^2A(\nu)} \right) +O \left( \frac{1}{\sqrt n} \right)\\
    &=\frac{\sigma^2}{n}\tr(Q C_x QS) \left( \frac{\nu - \tfrac{\nu^2}{n}\,\tr\!\big(Q C_x \big)}{1-\nu^2A(\nu)} \right) +O \left( \frac{1}{\sqrt n} \right) \\ 
    &= \ \frac{\sigma^2\nu^2}{n}\frac{\tr(Q C_x QS)}{1-\nu^2A(\nu)}  +O \left( \frac{1}{\sqrt n} \right),
\end{align*}
thanks to the identities $\lambda Q = I_p - \nu C_x Q$, $\frac{\lambda}{n} \tr(Q C_x Q) = \frac{1}{n} \tr(C_xQ) + \nu A(\nu)$ and $1 - \tfrac{\nu}{n} \tr(QC_x) = \nu$.

Second, note from \eqref{eq:Q_2} that:
\begin{align*}
    &\mathbb E[(I_p - \lambda R)S(I_p - \lambda R)] \\
    &\hspace{0.4cm}=\mathbb E[I_p - \lambda R]S \mathbb E[I_p - \lambda R] + \tfrac{\nu^2\lambda^ 2}{n} \frac{\,\tr\!\big(C_x Q S Q\big)}
{1-\nu^2A(\nu)\big)}\,  QC_xQ+O_{\|\cdot\|_*} \left( \frac{1}{\sqrt n} \right).
\end{align*}
Then, recalling that $\mu_{\hat \theta} =\mathbb E[I_p - \lambda R]\theta^*+O_{\|\cdot\|} \left( \frac{1}{\sqrt n} \right)$ one can compute:
\begin{align*}
    C_\theta
    &=\tfrac{\nu^2}{n} \frac{\sigma^2 + \lambda^2 (\theta^*)^\top QC_xQ \theta^* } {1-\nu^2A(\nu)\big)}\, QC_x Q +O_{\|\cdot\|_*} \left( \frac{1}{\sqrt n} \right) 
\end{align*}
Finally, noting from \eqref{eq:formula_mu_theta_rmt} that $\lambda Q \theta^* = \mu_{\hat \theta} - \theta^*+O_{\|\cdot\|} \left( \frac{1}{\sqrt n} \right) $, one can check that:
\begin{align*}
    \tr(C_x C_\theta)
    = \nu^2 A(\nu) \, \frac{\sigma^2 +  (\mu_{\hat \theta}- \theta^*{})^\top C_x(\mu_{\hat \theta}- \theta^*{}) } {1-\nu^2A(\nu)} +O_{\|\cdot\|_*} \left( \frac{1}{\sqrt n} \right) 
\end{align*}
is exactly the formula of the parameter $\alpha^2$ solution to the minmax problem associated with the convex empirical risk minimization problem.

\section{Proofs of Subsection~\ref{sse:gaussian_theta} results}
\begin{minipage}{\columnwidth}
Let us first recall:
\begin{theorem}[{\citet[Theorem~2.3]{adamczak2015note}}]\label{the:hanson-wright}
Under Assumption~\ref{ass:design}, for any matrix $A\in\mathbb{R}^{p\times p}$ and all $t\ge 0$:
\[
\mathbb{P}\Big(\,\big|x^\top A x-\tr(\Sigma_x A)\big|\ge t\,\Big)
\le
C'e^{-\!\left(\frac{c't^2}{\|A\|_F^2} \wedge\,\frac{c't}{\|A\|}\right)},
\]
where $C',c'>0$ do not depend on $n$.
\end{theorem}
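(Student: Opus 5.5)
We may assume $A$ symmetric, since $x^\top Ax=x^\top\frac{A+A^\top}{2}x$ and replacing $A$ by its symmetric part does not increase $\|A\|_F$ or $\|A\|$. The only input is the single-sample consequence of Assumption~\ref{ass:design}. For every $1$-Lipschitz $f:\mathbb R^p\to\mathbb R$ we have $\mathbb P(|f(x)-\mathbb E f(x)|\ge t)\le Ce^{-ct^2}$, applying \eqref{eq:con_ass} to $(X,Y)\mapsto f(x_1)$. The same bound holds with the median $\mathrm{med} f(x)$ in place of the mean, up to constants. This in turn gives $\|C_x\|=O(1)$ (Corollary~\ref{cor:cov_theta_bounded}).

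Write $m:=\mathbb E[x]$, so $\|m\|\le C$, and $\mathring x:=x-m$. Since $\mathbb E[\mathring x]=0$ and $\tr(\Sigma_xA)=\tr(C_xA)+m^\top Am$, we have
\[
x^\top Ax-\tr(\Sigma_xA)=\big(\mathring x^\top A\mathring x-\tr(C_xA)\big)+2m^\top A\mathring x .
\]
The linear term is handled first. The map $x\mapsto 2m^\top A\mathring x$ is centred and $2C\|A\|$-Lipschitz, so its tail is at most $Ce^{-ct^2/\|A\|^2}\le Ce^{-ct^2/\|A\|_F^2}$. It therefore suffices to treat the centred quadratic form $q(x):=\mathring x^\top A\mathring x$, whose mean is $\tr(C_xA)$.

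For $q$, we first control the local Lipschitz constant. We have $\nabla q(x)=2A\mathring x$. The map $x\mapsto\|A\mathring x\|$ is $\|A\|$-Lipschitz, and
\[
\mathbb E\|A\mathring x\|\le\sqrt{\tr(AC_xA)}\le\|C_x\|^{1/2}\|A\|_F=:c_0\|A\|_F .
\]
Hence $\mathbb P(\|A\mathring x\|\ge c_0\|A\|_F+s)\le Ce^{-cs^2/\|A\|^2}$. Fix $s>0$ and set $M_s:=c_0\|A\|_F+s$. The set $S_s:=\{x:\|A\mathring x\|\le M_s\}$ is convex, so $q$ is $2M_s$-Lipschitz on $S_s$. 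Let $g_s$ be a McShane extension of $q|_{S_s}$ to $\mathbb R^p$ with the same Lipschitz constant. Lipschitz concentration for $g_s$, together with $q=g_s$ on $S_s$, gives
\[
\mathbb P\big(|q(x)-\mathrm{med}\,g_s|\ge t\big)\le Ce^{-ct^2/M_s^2}+Ce^{-cs^2/\|A\|^2}.
\]
We then choose $s=\sqrt{t\|A\|}$ when $t\ge\|A\|_F^2/\|A\|$, and $s=\|A\|_F$ otherwise. Then $M_s^2\lesssim\|A\|_F^2+t\|A\|$, which is comparable to $\|A\|_F^2$ in the second regime and to $t\|A\|$ in the first. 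The right-hand side becomes $Ce^{-c(t^2/\|A\|_F^2\wedge t/\|A\|)}$.

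The remaining step, which I expect to be the only delicate one, is to replace $\mathrm{med}\,g_s$ by $\mathbb E q=\tr(C_xA)$. In the first regime $s$ depends on $t$, so it is cleaner to take a single reference truncation at $s_0=\|A\|_F$. On $S_{s_0}$, which has probability at least $1/2$ once constants are adjusted, we have $q=g_{s}$ for every $s\ge s_0$. Moreover, any two such $g_s$ agree on a set of probability at least $1/2$, so all the medians lie within a window of size $O(\|A\|_F)$. Integrating the tail bound just obtained then gives $|\mathbb E q-\mathrm{med}\,q|\le C''\|A\|_F$. For $t\ge 2C''\|A\|_F$, this shift is absorbed by replacing $t$ by $t/2$. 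For $t<2C''\|A\|_F$, the exponent $t^2/\|A\|_F^2\wedge t/\|A\|$ is $O(1)$, so the inequality holds trivially after enlarging $C'$. Combining the linear and quadratic parts with a union bound yields the claimed inequality with constants independent of $n$.
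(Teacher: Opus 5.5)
Your proof is correct, and it takes a different route from the paper. The paper gives no argument of its own and simply imports \citet[Theorem~2.3]{adamczak2015note}.

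You instead derive the bound directly from Assumption~\ref{ass:design}. The gradient $2A\mathring x$ has norm concentrated at level $O(\|A\|_F)$ with fluctuations of order $\|A\|$. Hence $q$ is $O(\|A\|_F+s)$-Lipschitz on a convex set of probability at least $1-Ce^{-cs^2/\|A\|^2}$. The McShane extension, together with the choice of $s$, then produces the two regimes.

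What the citation buys is brevity and, in the source, a weaker hypothesis: only concentration of convex Lipschitz observables is needed. However, it is stated for mean-zero vectors. Applying it to the paper's non-centred $x$, with centring at $\tr(\Sigma_xA)$, therefore tacitly requires exactly your reduction: peel off $2m^\top A\mathring x$ and use $\|\mathbb E[x]\|\le C$ to bound its Lipschitz constant by $O(\|A\|)$. That hypothesis is genuinely needed, since a large mean would make the linear term fluctuate at scale $\|A\|\,\|\mathbb E[x]\|$. Your decomposition makes visible where it enters; the citation hides it.

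Your median step is right but compressed, and two points should be written out.
\begin{itemize}
\item The closeness of the medians of $g_s$ and $g_{s_0}$ comes from two facts together. First, $g_{s_0}$ concentrates at scale $M_{s_0}=O(\|A\|_F)$. Second, $g_s$ and $g_{s_0}$ agree on a set of probability strictly above $1/2$. To ensure this, take $s_0=K\|A\|_F$ with $K$ large enough that $\mathbb P(x\notin S_{s_0})\le 1/4$.
\item Take $s(t)=\max(s_0,\sqrt{t\|A\|})$, so that $s(t)\ge s_0$ for every $t$.
\end{itemize}

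Finally, your argument can be extended to the convex-concentration generality of the cited result. Split $A=A_+-A_-$ with $A_\pm\succeq 0$, which does not increase $\|\cdot\|_F$ or $\|\cdot\|$. The truncation sets $\{\|A_\pm\mathring x\|\le M\}$ are convex. The McShane extension of the convex function $\mathring x^\top A_\pm\mathring x$ is an infimal convolution of convex functions, hence convex.
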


\begin{proof}[Proof of Theorem~\ref{the:approximation_xT_theta} -- Gaussian-convoluted expression of $x^\top \hat \theta$]
    Denoting $z\sim \mathcal N(0,1)$, conditioning on $x$, Assumption~\ref{ass:theta_gaussian} provides the concentration in law of $x^\top \hat \theta$ towards $x^\top \mu_{\hat \theta} + \sqrt{x^\top C_{\hat \theta}x} z$, which yields:
    \begin{align*}
    	\mathbb E \left[ f(x^\top \hat \theta) \right]
    	= \mathbb E  \left[ \mathbb E \left[ f(x^\top \hat \theta) \ | \ x\right] \right]
    	= \mathbb E  \left[ \mathbb E \left[ f \left( x^\top \mu_{\hat \theta} + \sqrt{x^\top C_{\hat \theta}x}z  \right) \ | \ x\right] \right]+ O \left( \frac{1}{\sqrt n} \right)
    \end{align*}
    Now, by Corollary~\ref{cor:cov_theta_bounded}, we have $\|C_{\hat\theta}\| = O(1/n)$. Consequently, $\|C_{\hat\theta}\|_F \le \sqrt{p}\|C_{\hat\theta}\| = O(1/\sqrt{n})$.
Applying Theorem~\ref{the:hanson-wright} with $A = C_{\hat\theta}$ shows that $x^\top C_{\hat\theta}x$ concentrates sharply around its mean:
\[
\tr(\Sigma_x C_{\hat\theta})
= \tr(C_x C_{\hat\theta}) + \mu_x^\top C_{\hat\theta} \mu_x.
\]
Since $\|\mu_x\|=O(1)$ and $\|C_{\hat\theta}\|=O(1/n)$, the term $\mu_x^\top C_{\hat\theta} \mu_x$ vanishes as $O(1/n)$. Thus, $\tr(\Sigma_x C_{\hat\theta}) = \tr(C_x C_{\hat\theta}) + O(1/n)$, which converges to $\alpha_\ast^2$ by Theorem~\ref{th:scalar-minmax}.
    One finally obtains:
    \begin{align*}
    	\mathbb E \left[ f(x^\top \hat \theta) \right]
    	= \mathbb E  \left[ f \left( x^\top \mu_{\hat \theta} + \alpha_\ast^2z   \right) \right]+ O \left( \frac{1}{\sqrt n} \right).
    \end{align*}
\end{proof}
\end{minipage}

\section{Proofs of Subsection~\ref{sec:warmup_non_universality_in_classification_of_linear_factor_models} results}\label{app:orthogonal_behavior_mu}
    We denote $F_a \coloneqq F + \mathrm{span}(a)$ and $F_a^\perp \coloneqq (F + \mathrm{span}(a) )^\perp$, and write $P_E$ for the orthogonal projection onto a subspace $E$.
    
We define
$
\mathcal{J}_\mu(u) \coloneqq \mathbb{E}\!\left[
e_{\mathcal{L}_y}\!\left(u^\top x + \alpha z;\kappa\right)
\right] + \rho(u),
$
the part minimized by $\mu_*$ in $\mathcal{J}$.
\begin{figure}[H]
    \centering
    \includegraphics[width=\linewidth]{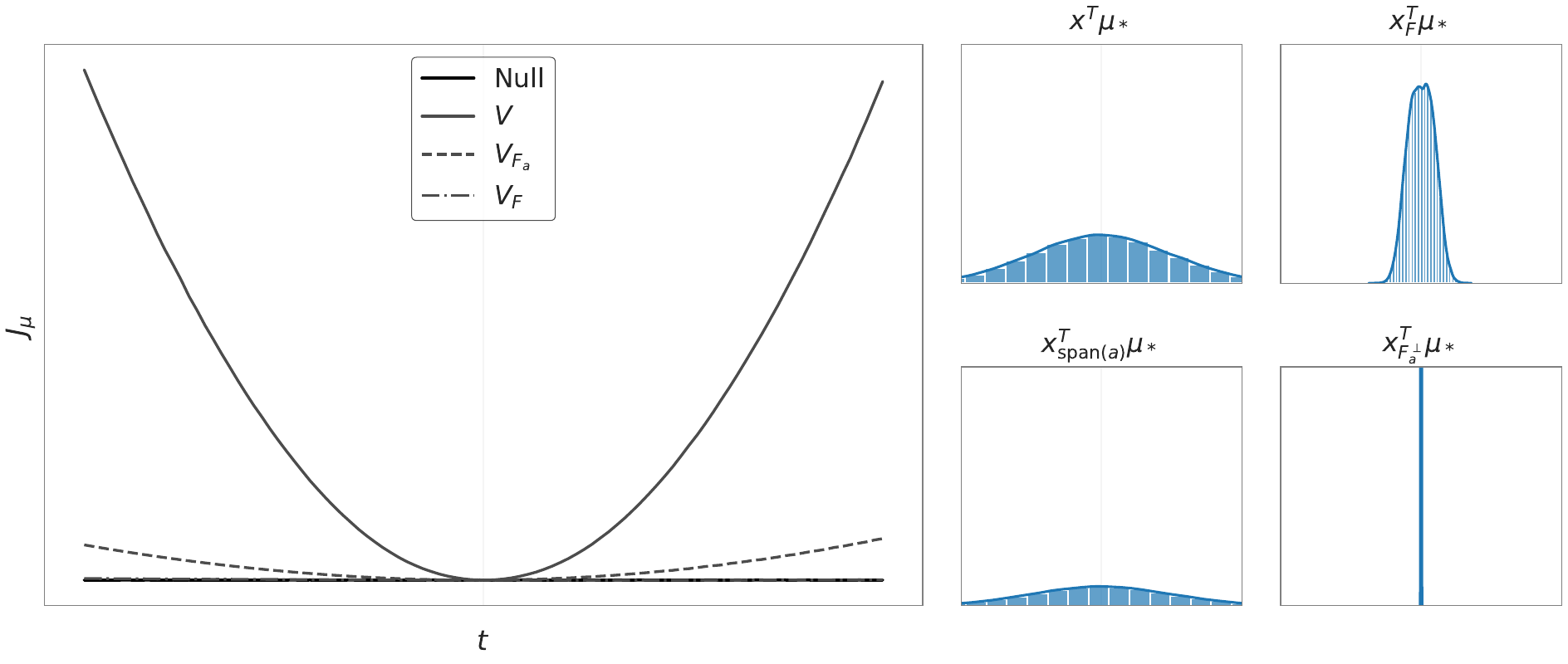}
    \caption{
    \textit{(left)} Comparison of the values of $t \mapsto \mathcal{J}_\mu(\mu_* + t u)$ for
    a random direction $u = V$ (solid line),
    $u = V_{F_a} = P_{F_a} V$ (dashed line),
    $u = V_F = P_F V$ (dash--dot line),
    and the null direction $u = 0$ (bold solid line).
 \textit{(right)} From left to right and from top to bottom we display the following distribution: $x^\top \mu_*,\,(\text{P}_Fx)^\top \mu_*,\,(\text{P}_{F_a}x)^\top \mu_*$ and $(\text{P}_{F_a^\perp}x)^\top \mu_*$.}
    \label{fig:side_by_side} 
\end{figure}

\begin{minipage}{\columnwidth}
\begin{proof}[Proof of Proposition~\ref{the:mu_in_F}]
    Recall the decomposition $x = x_I + x_N$ with $x_I\in F$. Let us introduce the set $F_a:= H^ {-1} F + \text{Span}(a)$ and denote $F_a^ {\perp_H}$ the orthogonal of $F_a$ for the scalar product $\langle w, v\rangle_H := w^ \top Hv$. Let us decompose $\mu_*$ onto $F_a \oplus F_a^{\perp^H}$ followingly:
        $$
         \mu_* = \mu_{F_a}+ \mu_{F_a^\perp}
        $$
    where $\mu_{F_a}\in F_a$ and $\mu_{F_a^\perp} \in F_a^{\perp_H}$ (in particular $\mu_{F_a}^\top H \mu_{F_a^\perp} = 0$). In other words, since $H^{-1}F\subset F_a$, one has $\mu_{F_a^\perp} \perp_H H^{-1} F$ which exactly means $\mu_{F_a^\perp} \perp F$. Since $x_I\in F$, that implies $\mu_{F_a^\perp}^\top x_I$. Recalling that, in addition, $\mathbb E[x_N] = 0$, one can bound with Jensen inequality: 
    \begin{align*}
        \mathbb{E}[e_{\mathcal L_y}(x^T\mu_*+\alpha z,\tau,y)]
        &= \mathbb{E}[\;\mathbb{E}[e_{\mathcal L_y}(x^T \left( \mu_{F_a}+ \mu_{F_a^\perp} \right)+\alpha z,\tau,y)|x_I,y,z]\;] \\
        &\geq\mathbb{E}[\;e_{\mathcal L_y}(x^T \mu_{F_a}+ \mathbb{E}[x_N^\top \mu_{F_a^\perp}]+\alpha z,\tau,y)\;]
        = \mathbb{E}[e_{\mathcal L_y}(x^T\mu_{F_a}+\alpha z,\tau,y)] ,
    \end{align*}
    
    Besides, since $\mu_{F_a} \perp_H\mu_{F_a^\perp}$, one can bound from below:
    \begin{align*}
            \rho(\mu_*)
            &= a^T\mu_{*} + \frac{1}{2}\mu_{*}^\top H\mu_{*} 
            = a^T\mu_{F_a} + \frac{1}{2} \mu_{F_a}^\top H\mu_{F_a} + \frac{1}{2}\mu_{F_a^\perp}^\top H\mu_{F_a^\perp}\\
            &\geq a^T\mu_{F_a} + \frac{1}{2}\mu_{F_a}^\top H\mu_{F_a}
            =\rho(\mu_{F_a}).
    \end{align*}

    Therefore by definition of $\mu_*$ as the unique minimizer in \ref{th:scalar-minmax}, we have: 
    $$
    \mu_* = \mu_{F_a}\in F_a.
    $$
\end{proof}

\end{minipage}
\begin{proof}[Proof of Similarity Results with \cite{mai2025breakdown}]
We introduce here \cite{mai2025breakdown}'s setting and results and how we can derive similar results with our equations. First let's define what's a LFMM. 

A data point $x$ follows an LFMM if and only if
\begin{equation}
x = \sum_{i=1}^p (y s_i + e_i)\,\bm v_i,
\end{equation}
where $(\bm v_i)_{i=1}^p$ are linearly independent vectors, $(s_i)_{i=1}^p$ are deterministic coefficients satisfying $s_i = 0$ for $i > q$, with $q$ denoting the number of informative directions, and $(e_i)_{i=1}^p$ are centered unit-variance random variables.

\cite{mai2025breakdown} defines : 
\begin{align*}
    h_\kappa(t,y)&= \frac{1}{\kappa}\big(\text{prox}_{\kappa,l(\cdot,y)}(t)-t\big).
\end{align*}

They also map the data distribution to the following scalar random variable:
$$
u= ym + \tilde e \sigma + \sum_{k=1}^q \psi_i e_i,
$$
where:
\begin{itemize}
    \item $\tilde e \sim \mathcal{N}(0,1)$ is independent of $y$ and $(e_i)_i$,
    \item $m, \sigma$, and $(\psi_i)_i$ are deterministic parameters.
\end{itemize}
We can see further that : 
$$
\bm{\mu} = \mathbb{E}[x|y=1] = \sum_{i=1}^q s_i \bm v_i, 
$$
$$
\bm \Sigma = \mathbb{V}[x] = \sum_{i=p}^q \bm v_i \bm v_i^\top. 
$$
With these definitions, \cite{mai2025breakdown} obtain the following equations:
\begin{equation}
\begin{cases}
    \kappa = \frac{1}{n}\text{tr}(\Sigma Q), \\
    \theta = -\mathbb{E}\!\left[\frac{\partial h_\kappa(u,y)}{\partial u}\right],\\
    \gamma = \sqrt{\mathbb{E}[h_\kappa(u,y)^2]},\\
    \eta = \mathbb{E}[y h_\kappa(u,y)],\\
    w_i = \mathbb{E}[e_i h(u,\kappa,y)] + \nu v_i^\top Q\bm \xi, \\
    \bm \xi = \eta \bm \mu + \sum_{k=1}^q\omega_i\bm v_i,\\
    u = y \bm \mu^\top Q \bm\xi + \frac{\gamma}{\sqrt n }\sqrt{\text{tr}[(Q \bm \Sigma)^2]} \tilde e + \sum_{k=1}^q v_i^\top Q \bm \xi  e_i .
\end{cases}
\end{equation}

These quantities are then used to define the approximation of $\bm \theta$:
$$
\bm{\tilde{\theta}} = Q \big(\eta \bm \mu + \sum_{k=1}^q w_i \bm v_i + \gamma \bm \Sigma^{\frac{1}{2}} \mathcal{N}(0_p,I_p/n)\big).
$$

This implies that
\begin{align*}
    \mu_{\bm\theta}
    &= Q \big(\eta \bm \mu + \sum_{k=1}^q w_i \bm v_i \big)
    = Q\bm \xi \\
    &= Q\big(\eta \bm \mu + \sum_{k=1}^q  (\mathbb{E}[e_i h_\kappa(u,y)] + \theta v_i^\top Q\bm \xi) \bm v_i \big) \\
    &= Q\big(\eta \bm \mu + \sum_{k=1}^q  (\mathbb{E}[e_i h_\kappa(u,y)] + \theta v_i^\top \mu_{\bm \theta}) \bm v_i \big).
\end{align*}

To find their results, we first notice that:
\begin{align*}
    \xi_y(t,\kappa)=-\kappa\, h_\kappa(t,y)
\end{align*}
We start by one of our equations in \ref{the:se-system} using $h_\kappa(.,.)$ and by abuse of notation we define $u=x^\top \mu_* + \alpha_* \mathcal{N}(0,1)$: 
\begin{align*}
        \lambda \mu_* &= \mathbb{E}[h_\kappa(u,y) x] \\
    (I-\nu \bm \Sigma Q)Q^{-1}\mu_* &= \mathbb{E}[h_\kappa(u,y) x]\\
\end{align*}
Given proposition \ref{the:mu_in_F} and since LFMMs are signal-noise models with $F=\text{Span}\{v_1,\cdots,v_q\}$ then we know that\\ $\mu_* \in \,$Span$\{v_1,\cdots,v_q\}$ which gives: 
\begin{align*}
    \mu_*&= Q \sum_{k=1}^q \mathbb{E}[h_\kappa(u,y)  (y s_k+e_k)]\bm v_k + \nu Q\bm \Sigma \mu_*  \\
    &= Q\eta \bm \mu+ Q \sum_{k=1}^q \big(\mathbb{E}[h_\kappa(u,y) e_k] + \nu v_k^\top \mu_* \big)\bm v_k .
\end{align*}

Additionally, we know that :
$$\mathbb{E}\left[h_\kappa(u,y) z\right] = \alpha \mathbb{E}\left[h_\kappa'(\alpha_* z + \mu_* x,y)\right]$$
assuring that $\theta=\nu$.\\
Finally since:
\begin{align*}
    u
    &= x^\top \mu_* + \alpha_* \mathcal{N}(0,1)  \\
    &= y \bm \mu^\top \mu_* + \alpha_* \mathcal{N}(0,1)
    + \sum_{k=1}^q v_i^\top \mu_* e_i
\end{align*}
the expression of $u$ coincides with that of \cite{mai2025breakdown}, which concludes the proof of equivalence.
\end{proof}

\section{Failure of Universality on MNIST Data}
\label{app:mnist_nongaussian}

This section extends Figure~1 to a structured, real-world dataset and highlights the distinction between two different notions of universality that are often conflated in the literature.

\paragraph{Two notions of universality.}
The first notion, which we refer to as \emph{score universality}, assumes that for a learned parameter vector $\hat\theta$, the decision score $\hat\theta^\top x$ is asymptotically Gaussian. This assumption is commonly used to derive performance predictions.
The second notion which we refer to as \emph{performance universality} asserts that replacing the data distribution by a Gaussian distribution with matching first- and second-order moments should not affect asymptotic performance.
Our results show that both notions can fail.

\paragraph{MNIST-based data model.}
We consider a binary classification task constructed from MNIST by merging digits $\{3,6\}$ into class~$0$ and digits $\{4,7\}$ into class~$1$.
Samples are represented either by raw pixels, and we compare the original data to a Gaussian surrogate matched in mean and covariance.

\paragraph{Left panel: failure of score universality.}
The left panel of Figure~\ref{fig:mnist_nongaussian} shows the empirical distribution of the decision score $\hat\theta^\top x$.
We compare it to a Gaussian distribution with the same empirical mean and variance, as commonly assumed, as well as to our theoretical prediction derived in the non-Gaussian setting.
While the Gaussian approximation fails to capture the empirical histogram, our theoretical density closely matches the observed distribution, demonstrating that the score is distinctly non-Gaussian despite moment matching.

\paragraph{Right panel: consequences for performance.}
The right panel illustrates the impact of this non-Gaussianity on performance predictions.
Performance estimates obtained under the Gaussian score assumption (green) exhibit a significant mismatch with empirical results.
In contrast, our corrected theoretical predictions accurately track the observed performance.
Additionally, we report the performance obtained when the original data are replaced by a Gaussian distribution with matching moments (purple), which also differs substantially from the empirical MNIST performance.
This shows that matching first and second moments is insufficient to guarantee universality at the level of generalization performance.

Together, these results demonstrate that both score universality and Gaussian replacement universality can fail in realistic, structured datasets such as MNIST, reinforcing the need for non-Gaussian theoretical corrections.



\end{document}